\documentclass{article} % For LaTeX2e
\usepackage{times}
\PassOptionsToPackage{numbers,sort&compress}{natbib}
\usepackage[preprint]{neurips_2026}
\usepackage{amsmath,amsfonts,bm,amsthm}

\def\1{\bm{1}}

\def\vzero{{\bm{0}}}
\def\vone{{\bm{1}}}

\def\va{{\bm{a}}}
\def\vb{{\bm{b}}}
\def\vc{{\bm{c}}}

\def\vo{{\bm{o}}}

\def\vu{{\bm{u}}}
\def\vv{{\bm{v}}}

\def\vx{{\bm{x}}}

\def\vepsilon{{\bm{\epsilon}}}

\DeclareMathAlphabet{\mathsfit}{\encodingdefault}{\sfdefault}{m}{sl}
\SetMathAlphabet{\mathsfit}{bold}{\encodingdefault}{\sfdefault}{bx}{n}

\newcommand{\E}{\mathbb{E}}

\DeclareMathOperator*{\argmin}{arg\,min}

\usepackage{hyperref}
\usepackage{url}
\usepackage{algorithm}
\usepackage{algpseudocode}
\algrenewcommand\algorithmiccomment[1]{\hfill\small\textcolor{gray}{\textit{// #1}}}
\usepackage{booktabs}
\usepackage{multirow}
\usepackage{graphicx}
\usepackage{colortbl,xcolor}
\usepackage{mathtools}
\usepackage{caption}
\usepackage{wrapfig}
\usepackage{amssymb}

\newcommand{\ourmethod}{ReCAST}
\newcommand{\oursshort}{Ours}

\title{ReCAST: Reward Credit Assignment across Timesteps for Online Diffusion Reinforcement}

\author{
Yihang Chen$^{1,}$\thanks{Equal contribution.} \quad
Yuanhao Ban$^{1,2,}$\footnotemark[1] \quad
Kuei-Chun Kao$^{1,2}$ \quad
Cho-Jui Hsieh$^{1,2}$ \\
$^{1}$University of California, Los Angeles\quad 
$^{2}$Arena AI \\
\texttt{\{yhangchen,banyh2000\}@cs.ucla.edu}
}
\theoremstyle{plain}
\newtheorem{theorem}{Theorem}[section]

\theoremstyle{definition}

\theoremstyle{remark}

\begin{document}

% modest float/display spacing (keeps the main body within the 9-page limit)
\setlength{\textfloatsep}{8pt plus 2pt minus 2pt}
\setlength{\floatsep}{8pt plus 2pt minus 2pt}
\setlength{\intextsep}{8pt plus 2pt minus 2pt}
\captionsetup{skip=4pt,font=small}
\setlength{\abovedisplayskip}{4pt plus 1pt minus 1pt}
\setlength{\belowdisplayskip}{4pt plus 1pt minus 1pt}
\setlength{\abovedisplayshortskip}{4pt plus 2pt minus 2pt}
\setlength{\belowdisplayshortskip}{4pt plus 2pt minus 2pt}

\maketitle
\begin{abstract}

Training diffusion models with multiple rewards requires distinguishing \emph{user preference} from \emph{reward informativeness}. User preference determines how much each reward should contribute to the overall objective; reward informativeness determines when its feedback is useful during denoising. Some rewards can meaningfully evaluate a sample as soon as global structure emerges, but others become informative only when the sample is nearly clean. To address both questions jointly, we propose ReCAST (\textbf{Re}ward \textbf{C}redit \textbf{AS}signment across \textbf{T}imesteps), the first method, to our knowledge, for per-reward, timestep-dependent credit assignment in diffusion reward fine-tuning. ReCAST separates user preferences from temporal allocation through a reward-by-timestep weight matrix $\bm{W}$, whose row sums match the user-specified reward budgets $\bm{\lambda}$, while its column sums are equal, assigning the same total weight to each denoising step. Under these marginal constraints, ReCAST allocates weight according to each reward's informativeness, quantified by its R\'enyi discriminability gain at each step. These gains telescope to the total discriminability between the reward-induced positive policy and the current policy, providing a basis for temporal credit assignment. 
We evaluate ReCAST by training SD3.5-Medium under two distinct four-reward settings, each across five reward budgets $\bm{\lambda}$. ReCAST improves the training rewards in one setting and matches them in the other, improves every held-out judge in both, and is preferred by an independent LLM-as-a-Judge. Overall, these results show that ReCAST achieves improvements that generalize beyond the training rewards and support its core principle: assigning each reward greater weight at the denoising timesteps where its feedback is most informative.
\end{abstract}

\section{Introduction}
\label{sec:intro}

Reward-based post-training has become a standard way to align diffusion and flow models with human intent, via policy gradient over the denoising chain~\citep{black2023training,fan2023dpok}, backpropagation through a differentiable reward~\citep{clark2023directly,prabhudesai2023aligning}, preference optimization~\citep{wallace2024diffusion,zhu2025dspo}, group-relative objectives~\citep{liu2025flow,xue2025dancegrpo}, or negative-aware fine-tuning on the forward process~\citep{diffusionnft}. In practice, a single reward is rarely sufficient: standard training recipes mix prompt alignment~\citep{hessel2021clipscore}, learned human preference~\citep{wu2023hpsv2,kirstain2023pick,xu2023imagereward}, and rule-based correctness~\citep{ghosh2023geneval} by linear scalarization $r(\vx_0,\vc) = \sum_i \lambda_i\, r_i(\vx_0,\vc)$ with hand-set $\lambda_i\ge 0$, $\sum_i \lambda_i=1$. Over-optimizing any single proxy can easily lead to reward hacking~\citep{gao2023scaling,skalse2022defining}. 

However, static scalarization makes an implicit but imperfect assumption: once a reward is assigned an overall budget $\lambda_i$, its relative influence remains the same across all diffusion timesteps. Instead, we want a reward to have greater influence when it is more informative. Multi-reward alignment therefore involves two allocation problems: how much budget each reward receives, and when that budget should be spent. More specifically, given a user-specified reward budget $\bm{\lambda}$, can we allocate each reward's weights across timesteps more efficiently?

Our key intuition is that each reward measures a different visual property, and those properties do not become visible at the same point in the denoising trajectory. A reward for prompt alignment or rule-based correctness~\citep{hessel2021clipscore,ghosh2023geneval} can already grade a sample once coarse layout and composition exist, while a reward for aesthetic or fine local detail cannot say much until the sample is nearly clean. Diffusion timesteps govern different aspects of generation by construction, with high-noise steps fixing global composition and low-noise steps refining local detail~\citep{choi2022perception,hang2023efficient}. A static, timestep-independent $\lambda_i$ ignores this and forces every reward to contribute uniformly across $t$ regardless of whether it can discriminate there. Although timestep-dependent treatment is well established for likelihood training~\citep{kingma2021variational,karras2022elucidating} and reward fine-tuning~\citep{liang2024step,li2025mixgrpo,tempflow2025}, these methods apply a single reward-agnostic schedule, not the per-reward one we study.

Credit assignment is widely analyzed for language-model reasoning~\citep{uesato2022solving,lightman2024lets,kazemnejad2024vineppo} but largely unexplored in the diffusion denoising chain. We propose ReCAST (\textbf{Re}ward \textbf{C}redit \textbf{AS}signment across \textbf{T}imesteps), the first per-reward timestep-dependent credit assignment method for diffusion reward fine-tuning, to our knowledge. More specifically, for rewards $r_1,\dots,r_m$ and $T$ steps, we replace the static convex weighting by an \emph{automatically} designed {weight matrix} $\bm{W}\in\mathbb R_{\geq 0}^{m\times T}$, whose entry $W_{i,t}$ is the weight reward $i$ carries at step $t$, and define the reward at timestep $t$ as
\begin{equation}
\label{eq:two-level-master}
\boxed{\;r_t(\vx_0,\vc) \;=\; \sum_{i=1}^m T\cdot W_{i,t}\;r_i(\vx_0,\vc),\qquad
\underbrace{\textstyle\sum_{t} W_{i,t} = \lambda_i}_{\substack{\text{inter-reward row budget}}},\qquad
\underbrace{\textstyle\sum_{i} W_{i,t} = 1/T}_{\substack{\text{per-step column budget}}}.\;}
\end{equation}
  The row marginal $\lambda_i\geq 0$ ($\sum_i\lambda_i=1$) is the inter-reward budget set by the user, and states \emph{how much} reward $i$ counts. The column marginal enforces that every
  step receives the same total weight $1/T$, so no step is starved or over-optimized. Together the two marginals fix the totals but not the shape: how row $i$ spreads its budget $\lambda_i$
  across $t$ is precisely \emph{when} reward $i$ counts, and that is what we estimate rather than hand-set. We first measure each reward's own per-step gain in R\'enyi discriminability along
  the trajectory (Sec.~\ref{sec:Dt}--\ref{sec:estimation}), then build a kernel from these gains and project it onto the feasible set of Eq.~\eqref{eq:two-level-master} to give the desired $\bm{W}^\star$ (Sec.~\ref{sec:sinkhorn}).
\section{Related work}
\label{sec:related}

\paragraph{RL in diffusion and flow models.}
Reward fine-tuning is central to current text-to-image models. DanceGRPO~\citep{xue2025dancegrpo} extends group-relative optimization to both diffusion and rectified-flow image and video generators within a single framework; Flow-GRPO~\citep{liu2025flow} converts the flow-matching ODE into an SDE so group sampling and its advantage estimator stay well defined; DiffusionNFT~\citep{diffusionnft} regresses implicit positive/negative velocity fields with a forward-only loss. Our work is built upon DiffusionNFT due to its performance and efficiency.

\paragraph{Combining multiple rewards.}
How multiple reward signals are combined has received wide attention in areas not limited to text-to-image tasks. SafeRLHF~\citep{dai2023saferlhf} decouples helpfulness and harmlessness into separate reward/cost models balanced via a Lagrange multiplier rather than a fixed scalar; Personalized Soups~\citep{jang2023personalizedsoups} instead trains one policy per preference dimension and merges them post-hoc, avoiding scalarization altogether, and Rewarded Soups~\citep{rame2023rewarded} takes the same merging route for reward proxies to reach Pareto trade-offs by interpolating weights. ALaRM~\citep{lai2024alarm} organizes rewards hierarchically rather than flattening them into one sum, and large reasoning models combine a rule-based outcome reward, a length penalty, and a language consistency reward within a single objective. GDPO~\citep{gdpo2026} shows that summing rewards before GRPO's group normalization collapses into near-identical advantages, and fixes this by normalizing per reward before recombining. None of this work varies a reward's weight across denoising timesteps: our contribution is orthogonal to how the mixture itself is formed.

\paragraph{Timestep weighting in diffusion training.}
Existing works have adopted unequal treatment of denoising steps in diffusion training, such as ELBO-consistent weightings~\citep{song2021maximum,kingma2021variational}, perception-prioritized and Min-SNR schedules~\citep{choi2022perception,hang2023efficient}, EDM weighting~\citep{karras2022elucidating}, and importance sampling of $t$~\citep{nichol2021improved}. On the reward side, step-by-step preference optimization~\citep{liang2024step} supervises every denoising step separately through a step-aware preference model, while MixGRPO's sliding ODE/SDE window~\citep{li2025mixgrpo} and TempFlow-GRPO's noise-aware weighting~\citep{tempflow2025} concentrate the reward signal on high-noise steps. All show that \emph{when} a signal is applied matters, but each shares one heuristic curve across rewards; ours is per-reward, from its own discriminability curve.

\section{Method}
\label{sec:renyi-reweight}

This section builds the per-step gain in reward discriminability, then turns those gains into the weight matrix $\bm{W}$. In Sec.~\ref{sec:Dt}--\ref{sec:estimation} we look at one reward, asking only how the per-step gain is shaped across $t$; Sec.~\ref{sec:sinkhorn} then puts the per-reward shapes into a matrix that satisfies both marginals of Eq.~\eqref{eq:two-level-master}. 

We work in the rectified-flow setup and with the DiffusionNFT objective, and put the background in App.~\ref{sec:background}. Throughout this section we assume $r_i\geq0$.

\subsection{Setup and the density ratio}
For each reward $r_i$, $i\in\{1,\dots,m\}$, let $\pi_{i}^+(\vx_0\mid\vc)$ be the positive policy induced by reward $i$ on the old policy $\pi^{\rm old}$,
\begin{equation}
\label{eq:pi+def}
    \pi_{i}^+(\vx_0\mid\vc) := \frac{r_i(\vx_0,\vc)}{\E_{\pi^{\mathrm{old}}(\vx_0\mid\vc)}[r_i(\vx_0,\vc)]}\,\pi^{\mathrm{old}}(\vx_0\mid\vc)
\end{equation}
and let $\pi_{i,t}^+(\vx_t\mid\vc)$ and $\pi_t^{\mathrm{old}}(\vx_t\mid\vc)$ be the marginals at diffusion timestep $t$. Define the density ratio of $r_i$, $\rho_{i,t}(\vx_t) := {\pi_{i,t}^+(\vx_t\mid\vc)}/{\pi_t^{\mathrm{old}}(\vx_t\mid\vc)} 
$. 
Marginalizing the forward kernel and applying Bayes' rule, $\pi^{\mathrm{old}}(\vx_0\mid\vx_t,\vc) = p(\vx_t\mid\vx_0)\pi^{\mathrm{old}}(\vx_0\mid\vc)\,/\,\pi_t^{\mathrm{old}}(\vx_t\mid\vc)$, gives the ratio at timestep $t$ as a posterior expected reward normalized by the marginal one:
\begin{equation}
\label{eq:alpha-reward}
{\;\rho_{i,t}(\vx_t) \;:=\; \frac{\pi_{i,t}^+(\vx_t\mid\vc)}{\pi_t^{\mathrm{old}}(\vx_t\mid\vc)}  \;=\;  \frac{\int p(\vx_t\mid\vx_0)\,\pi_{i}^+(\vx_0\mid\vc)\,{\rm d}\vx_0}{\pi_t^{\mathrm{old}}(\vx_t\mid\vc)} \;=\; \frac{\E_{\pi^{\mathrm{old}}(\vx_0\mid\vx_t,\vc)}\!\big[r_i(\vx_0,\vc)\big]}{\E_{\pi^{\mathrm{old}}(\vx_0\mid\vc)}\!\big[r_i(\vx_0,\vc)\big]}.\;}
\end{equation}
Everything on the right-hand side is a reward evaluation under the current policy, so the ratio is theoretically computable without ever directly sampling from $\pi_{i}^+$. For simplicity, we write $\mu_{i,t}(\vx_t) := \E_{\pi^{\mathrm{old}}(\vx_0\mid\vx_t,\vc)}[r_i]$ for the conditional reward and $Z_i := \E_{\pi^{\mathrm{old}}(\vx_0\mid\vc)}[r_i]$ for its marginal, so $\rho_{i,t}(\vx_t) = \mu_{i,t}(\vx_t)/Z_i$. 

\subsection{Reward discriminability}
\label{sec:Dt}
We first ask one question: how distinguishable is the positive policy from the current one at a given timestep? We measure this by the R\'enyi divergence~\citep{renyi1961measures} of order $\alpha>1$. The \emph{cumulative reward discriminability} at timestep $t$ for reward $i$ is therefore
\begin{equation}
\label{eq:Dt}
D_{i,t} \;:=\; D_\alpha\!\big(\pi_{i,t}^+ \,\big\|\, \pi_t^{\mathrm{old}}\big) \;=\; \frac{1}{\alpha-1}\log\E_{\pi_{i,t}^+}\!\Big[\rho_{i,t}(\vx_t)^{\alpha-1}\Big].
\end{equation}
It vanishes exactly when the two marginals agree, grows as reward $i$ separates them, and is nondecreasing in $\alpha$. Its $\alpha\to1^+$ limit is the KL divergence between the same two marginals,
\begin{equation}
\label{eq:kl-limit}
\lim_{\alpha\to1^+} D_{i,t} \;=\; \mathrm{KL}\big(\pi_{i,t}^+\,\big\|\,\pi_t^{\mathrm{old}}\big) \;=\; \E_{\pi_{i,t}^+}\!\big[\log\rho_{i,t}(\vx_t)\big].
\end{equation}
Sec.~\ref{sec:estimation} shows why we choose R\'enyi instead of the KL divergence.

At pure noise ($t=T$) the forward map has $\sigma_T=1$ and discards $\vx_0$ entirely, so $\pi_{i,T}^+ = \pi_T^{\mathrm{old}} = \mathcal N(0,I)$ and $D_{i,T} = 0$ for every reward. At clean data the divergence reaches its maximum $D_{i,0} = D_\alpha(\pi_{i,0}^+\,\|\,\pi_0^{\mathrm{old}})$, the full data-level discriminability reward $i$ induces. 

\paragraph{Discriminability gain.}
Cumulative discriminability $D_{i,t}$ contains all signal accumulated from $T$ to $t$, so weighting by it would repeatedly count earlier gains. Therefore, we use the per-step \emph{gain}
\begin{equation}
\label{eq:gain}
{\Delta D_{i,t} := D_{i,t-1}-D_{i,t},}
\end{equation}
which isolates the new discriminability contributed by reward $i$ at transition $t\to t-1$. The gains telescope along the denoising trajectory:
\begin{equation}
\label{eq:telescope}
{\sum_{t=1}^T \Delta D_{i,t} = D_{i,0}-D_{i,T} = D_{i,0}
= D_\alpha\!\big(\pi_{i,0}^+\,\big\|\,\pi_0^{\mathrm{old}}\big).}
\end{equation}
Thus, $\Delta D_{i,t}$ decomposes the reward's total data-level discriminability across timesteps without double-counting: rewards sensitive to global structure concentrate at large $t$, while those sensitive to fine details concentrate at small $t$.

One basic property is that $\Delta D_{i,t}\geq 0$, so at each denoising step ($t\to t-1$) the discriminability does not decrease. This ensures the soundness of our definition of $\Delta D_{i,t}$.  
We prove a general theorem.

\begin{theorem}[R\'enyi dissipation under shared diffusion]
\label{thm:debruijn}
Let $\pi_t^{a}$ and $\pi_t^{b}$ be positive, sufficiently smooth probability densities on $\mathbb R^d$ that both evolve under the same forward Fokker--Planck equation
\begin{equation}
\label{eq:fp}
\partial_t \pi_t \;=\; -\nabla\!\cdot\!\big(\vu(\vx,t)\,\pi_t\big) \;+\; \tfrac{g(t)^2}{2}\,\Delta\pi_t,
\end{equation}
for a shared drift $\vu$ and spatially constant diffusion coefficient $g(t)$. Assume the displayed moments and derivatives are integrable and that the boundary terms in the integrations by parts vanish. Let $\rho := \pi^a/\pi^b$ denote the density ratio, and define the \emph{R\'enyi-tilted distribution} of order $\alpha$:
\begin{equation}
\label{eq:renyi-tilted}
\pi_t^{(\alpha)}(\vx) \;:=\; \frac{\rho(\vx)^\alpha}{\E_{\pi_t^b}[\rho^\alpha]}\,\pi_t^b(\vx) \;\propto\; \big(\pi_t^a\big)^\alpha\,\big(\pi_t^b\big)^{1-\alpha}.
\end{equation}
This geometric family recovers $\pi_t^b$ at $\alpha=0$ and $\pi_t^a$ at $\alpha=1$; in the regime used here, $\alpha>1$, it extrapolates beyond $\pi_t^a$. Then for every $\alpha>0$ with $\alpha\neq1$,
\begin{equation}
\label{eq:debruijn}
{\;\frac{{\rm d}}{{\rm d}t}\,D_\alpha\!\big(\pi_t^a \,\|\, \pi_t^b\big) \;=\; -\,\frac{g(t)^2}{2}\cdot\alpha\cdot\E_{\pi_t^{(\alpha)}}\!\big[\|\nabla\log\frac{\pi^a}{\pi^b}\|^2\big],\;}
\end{equation}
where $\E_{\pi_t^{(\alpha)}}[\|\nabla\log\frac{\pi^a}{\pi^b}\|^2]$ is the Fisher information of the log-density ratio measured under the tilted distribution $\pi_t^{(\alpha)}$. 
\end{theorem}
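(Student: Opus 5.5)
We work with the power integral
\[
F(t):=\int (\pi_t^a)^\alpha(\pi_t^b)^{1-\alpha}\,{\rm d}\vx=\E_{\pi_t^b}[\rho^\alpha],
\qquad
D_\alpha(\pi_t^a\|\pi_t^b)=\frac{1}{\alpha-1}\log F(t).
\]
The claim then reads $\frac{{\rm d}}{{\rm d}t}D_\alpha=\frac{F'(t)}{(\alpha-1)F(t)}$. Since $\pi_t^{(\alpha)}=(\pi_t^a)^\alpha(\pi_t^b)^{1-\alpha}/F$, it suffices to prove
\[
F'(t)=-\frac{g^2}{2}\,\alpha(\alpha-1)\int \rho^\alpha\,\|\nabla\log\rho\|^2\,\pi_t^b\,{\rm d}\vx .
\]
Dividing by $(\alpha-1)F$ gives Eq.~\eqref{eq:debruijn}. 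It works equally for $\alpha\in(0,1)$ and for $\alpha>1$, because the factor $(\alpha-1)$ cancels in both cases.

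To compute $F'$, differentiate under the integral:
\[
F'=\int\big(\alpha\rho^{\alpha-1}\,\partial_t\pi^a+(1-\alpha)\rho^\alpha\,\partial_t\pi^b\big)\,{\rm d}\vx .
\]
Here $\partial_\pi$ of $\pi_a^\alpha\pi_b^{1-\alpha}$ gives $\alpha\rho^{\alpha-1}$ and $(1-\alpha)\rho^\alpha$. Set $\phi:=\alpha\rho^{\alpha-1}$ and $\psi:=(1-\alpha)\rho^\alpha$, substitute the Fokker--Planck equation \eqref{eq:fp} for both densities, and integrate by parts once, which is allowed because the boundary terms are assumed to vanish. Each term splits into a drift part and a diffusion part.

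The drift part is
\[
\int \vu\cdot\big(\pi^a\nabla\phi+\pi^b\nabla\psi\big).
\]
We compute $\pi^a\nabla\phi=\alpha(\alpha-1)\rho^{\alpha-1}\pi^b\,\rho\,\nabla\log\rho\cdot$ with the appropriate powers, and $\pi^b\nabla\psi=(1-\alpha)\alpha\rho^{\alpha}\pi^b\nabla\log\rho$. Both equal $\pm\alpha(\alpha-1)\rho^\alpha\pi^b\nabla\log\rho$, so they cancel exactly. This is the key reason a shared drift drops out: $F$ is invariant under the transport generated by a common drift, even a nonlinear one.

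The diffusion part is
\[
-\frac{g^2}{2}\int\big(\nabla\pi^a\cdot\nabla\phi+\nabla\pi^b\cdot\nabla\psi\big).
\]
Write $\nabla\pi^a=\pi^a(\nabla\log\pi^b+\nabla\log\rho)$ and $\nabla\pi^b=\pi^b\nabla\log\pi^b$. The $\nabla\log\pi^b$ components reproduce the drift cancellation pattern with $\vu$ replaced by $\nabla\log\pi^b$, so they cancel. The remaining term is
\[
-\frac{g^2}{2}\int \pi^a\,\nabla\log\rho\cdot\alpha(\alpha-1)\rho^{\alpha-1}\nabla\log\rho .
\]
Using $\pi^a=\rho\,\pi^b$, this equals
\[
-\frac{g^2}{2}\,\alpha(\alpha-1)\int\rho^\alpha\pi^b\|\nabla\log\rho\|^2,
\]
which is the required identity for $F'$.

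The main obstacle is the bookkeeping in this diffusion term: tracking the cross terms correctly and checking that the $\nabla\log\pi^b$ contributions cancel rather than leaving a residual. The cleanest route is to observe that the full integrand $\pi^a\nabla\phi+\pi^b\nabla\psi$ vanishes identically, so that any vector field dotted into it, whether $\vu$ or $\nabla\log\pi^b$, contributes zero. Everything else is routine under the stated integrability and decay assumptions, including the differentiation under the integral sign.
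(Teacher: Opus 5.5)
Your proof is correct and follows the paper's argument. You differentiate $M_\alpha=\int(\pi^a)^\alpha(\pi^b)^{1-\alpha}$, substitute the shared Fokker--Planck equation, integrate by parts once, cancel the drift terms and the $\nabla\pi^b$ cross terms, and divide by $(\alpha-1)M_\alpha$. The pointwise identity $\pi^a\nabla\phi+\pi^b\nabla\psi\equiv0$ packages both of the paper's cancellations (its Steps 3 and 4) into one observation, and your garbled display for $\pi^a\nabla\phi$ should simply read $\alpha(\alpha-1)\rho^{\alpha}\pi^b\nabla\log\rho$.
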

The proof of the theorem is deferred to App.~\ref{app:proof-debruijn}. Both $\pi_{i,t}^+$ and $\pi_t^{\mathrm{old}}$ evolve according to Eq.~\eqref{eq:fp} with the same coefficients because the diffusion kernel $p(\vx_t\mid\vx_0)$ is independent of the policy. Away from the singular endpoint $\sigma_t=1$, the forward process $\vx_t=(1-\sigma_t)\vx_0+\sigma_t\vepsilon$
is the transition law of a linear SDE, and therefore satisfies Eq.~\eqref{eq:fp} with $\vu(\vx,t)=-\frac{\dot\sigma_t}{1-\sigma_t}\vx,
\qquad
g(t)^2=\frac{2\sigma_t\dot\sigma_t}{1-\sigma_t}$.
Since Eq.~\eqref{eq:fp} is linear in $\pi_t$, and the reward tilt $r_i$ acts only on the initial state $\vx_0$, the tilt changes only the initial distribution rather than the evolution coefficients. Hence, $\pi_{i,t}^+$ and $\pi_t^{\mathrm{old}}$ follow the same Fokker--Planck dynamics from different initial conditions. Theorem~\ref{thm:debruijn} therefore applies with $\pi^a=\pi_{i,t}^+$, $\pi^b=\pi_t^{\mathrm{old}}$, and $\rho=\rho_{i,t}$, yielding $\frac{{\rm d}}{{\rm d}t}D_\alpha\!\big(\pi_{i,t}^+\,\|\,\pi_t^{\mathrm{old}}\big)\leq 0$.

\subsection{Estimation details}
\label{sec:estimation}
\paragraph{Why we use $\alpha>1$ rather than the KL limit.}
The exact KL in Eq.~\eqref{eq:kl-limit} is well defined under the usual absolute-continuity and integrability conditions: states with $\rho_{i,t}=\pi_{i,t}^+/\pi^{\rm old}_t=0$ have zero probability under $\pi_{i,t}^+$ and therefore do not cause the expectation to diverge. However, empirically in finite-sample rollouts, the estimated conditional reward $\hat{\E}_{\pi^{\mathrm{old}}(\vx_0\mid\vx_t,\vc)}\big[r_i(\vx_0,\vc)\big]$, and hence $\hat\rho_{i,t}$, can be exactly zero, in which case averaging $\log\hat\rho_{i,t}$ introduces $-\infty$ values. For example, consider a noisy state $\vx_t$ that has already committed to a two-object layout when the prompt requires only one. Such an $\vx_t$ will likely be denoised under $\pi^{\rm old}$ into samples $\vx_0$ that keep that structure, so $r_i(\vx_0,\vc)=0$ for multiple $\vx_0$. Simply swapping the two marginals does not resolve this issue: $\mathrm{KL}\big(\pi_t^{\rm old}\,\big\|\,\pi_{i,t}^+\big) = -\E_{\pi_t^{\rm old}}\big[\log\rho_{i,t}(\vx_t)\big]$ may also diverge for the same reason.

This motivates using the R\'enyi divergence with $\alpha>1$. In this case, the logarithm in Eq.~\eqref{eq:Dt} is applied only after the empirical moment $\hat{\E}_{\pi_{i,t}^+}\big[\hat{\rho}_{i,t}(\vx_t)^{\alpha-1}\big]$. Consequently, a zero conditional reward estimate, corresponding to a state $\vx_t$ for which $\hat{\rho}_{i,t}(\vx_t)=0$, contributes zero to the empirical moment rather than producing an infinite logarithmic term. The estimate therefore remains finite whenever the empirical moment itself is positive, which is usually satisfied in practical estimation. At the same time, $D_\alpha$ retains a useful relation to the original objective: because R\'enyi divergence is nondecreasing in its order, $D_\alpha$ upper-bounds the KL for $\alpha>1$. App.~\ref{app:alpha-sweep} examines how the resulting discriminability curve varies with $\alpha$. We use $\alpha=2$ in all training experiments for its simplicity: the exponent in Eq.~\eqref{eq:Dt} is then $\alpha-1=1$.

\paragraph{Importance sampling from $\pi^+$ to cover high-reward regimes.}
Revisiting Eq.~\eqref{eq:Dt}, we find one problem: sampling $\vx_t$ from $\pi_{i,t}^+$ is impractical since we cannot directly sample from the $\pi_{i}^+$ of Eq.~\eqref{eq:pi+def}. One seemingly plausible solution is to sample from $\pi^{\rm old}$ instead: since $\rho_{i,t}=\pi_{i,t}^+/\pi_t^{\rm old}$, we have $\E_{\pi_t^{\mathrm{old}}}\!\left[\rho_{i,t}(\vx_t)^\alpha\right] \;=\; \E_{\pi_{i,t}^+}\!\left[\rho_{i,t}(\vx_t)^{\alpha-1}\right]$. This means we draw $\widetilde{\vx}_0\sim\pi^{\rm old}(\cdot\mid\vc)$, forward-noise it to $\vx_t\sim\pi^{\rm old}_t$ by $\vx_t = (1-\sigma_t)\widetilde{\vx}_0 + \sigma_t\vepsilon$, and roll out multiple $\vx_0$ from $\vx_t$ under $\pi^{\mathrm{old}}$. Here, $\vx_0$ is used to estimate $\mu_{i,t}(\vx_t)$ and $\widetilde{\vx}_0$ is used to estimate $Z_i$. 
However, if the current policy ($\pi_{\rm old}$) often produces low-quality outputs, then $r(\vx_0,\vc)\approx r(\widetilde{\vx}_0,\vc)$, so $\rho_{i,t}(\vx_t)=\mu_{i,t}(\vx_t)/Z_i\approx1$ and $D_{i,t}\approx 0$: the high-reward regions are unestimated. The leftmost column of Fig.~\ref{fig:renyi-omega-combined} shows exactly this: under $\pi^{\rm old}$ the estimated $D_{i,t}$ curves for \textsc{ClipScore}, \textsc{HPSv2}, and \textsc{PickScore} are close to $0$ and significantly lower than their $\pi^+$ counterparts (App.~\ref{app:psampling-ablation}).

Therefore, we propose to use a single stronger external generator $\pi^+$ as a common proposal for all of the $\pi_{i}^+$: we draw $\widetilde{\vx}_0\sim\pi^+(\cdot\mid\vc)$, forward-noise it to $\vx_t\sim\pi_t^+$ by $\vx_t = (1-\sigma_t)\widetilde{\vx}_0 + \sigma_t\vepsilon$, and roll out $\vx_0$ from $\vx_t$ under $\pi^{\mathrm{old}}$. Since we assume $\pi^+$ generates higher-quality outputs than $\pi^{\rm old}$, $\rho_{i,t}(\vx_t)=\mu_{i,t}(\vx_t)/Z_i$ will decrease to 1 as $t$ increases. 
Because the ratio $\pi_{i,t}^+/\pi_t^+$ is unknown, this is not an exact importance-sampling estimator of $D_{i,t}$ but a surrogate log-moment. App.~\ref{app:positive-policy-error} analyzes the error the surrogate induces: when $\mathrm{KL}(\pi^+\|\pi_i^+)$ is small, the error in the discriminability gain is $O(\sqrt{\mathrm{KL}(\pi^+\|\pi_i^+)})$ and propagates to the Sinkhorn weights. Furthermore, replacing $\pi^+$ with $\pi^{\mathrm{old}}$ self-rollouts yields far noisier curves in App.~\ref{app:psampling-ablation}, while choosing a different $\pi^+$, either \textsc{GPT Image 1.5} or \textsc{Nano Banana Pro}, leaves the estimated curves essentially unchanged in App.~\ref{app:surrogate-sensitivity}. 

\paragraph{Denominator cancellation.}
Putting $\rho_{i,t}(\vx_t)=\mu_{i,t}(\vx_t)/Z_i$ into Eq.~\eqref{eq:Dt}, we have $D_{i,t} = \frac{1}{\alpha-1}\log\E_{\pi_{i,t}^+}[\mu_{i,t}(\vx_t)^{\alpha-1}] - \log Z_i$. Because only the \emph{gain} $\Delta D_{i,t} = D_{i,t-1} - D_{i,t}$ is needed, $\log Z_i$ cancels entirely:
\begin{equation}
\label{eq:gain-no-denom}
\boxed{\,\Delta D_{i,t} = \frac{1}{\alpha-1}\Big(\log\E_{\pi_{i, t-1}^+}\!\big[\mu_{i,t-1}(\vx_{t-1})^{\alpha-1}\big] - \log\E_{\pi_{i,t}^+}\!\big[\mu_{i,t}(\vx_t)^{\alpha-1}\big]\Big).\,}
\end{equation}
Replacing the unknown positive marginals by the common proposal $\pi_t^+$ leaves the quantity we actually compute,
\begin{equation}
\label{eq:surrogate-gain}
\widetilde{\Delta D}_{i,t} = \frac{1}{\alpha-1}\Big(\log\E_{\pi_{t-1}^{+}}\!\big[\mu_{i,t-1}(\vx_{t-1})^{\alpha-1}\big] - \log\E_{\pi_t^{+}}\!\big[\mu_{i,t}(\vx_t)^{\alpha-1}\big]\Big),
\end{equation}
which coincides with Eq.~\eqref{eq:gain-no-denom} only when $\pi_t^+ = \pi_{i,t}^+$. Only $\mu_{i,t}$ has to be estimated, by rollouts under $\pi^{\mathrm{old}}$, and $Z_i$ is never formed. Alg.~\ref{alg:estimation} is the resulting estimator, in compact form; App.~\ref{app:estimation-alg} writes out every index. For readability in the empirical sections we write the output $\widehat{\widetilde{\Delta D}_{i,t}}$ simply as $\Delta D_{i,t}$, and identities involving $D_\alpha$ refer to the population quantity unless stated otherwise.

\begin{algorithm}[t]
\caption{Per-step R\'enyi discriminability gain proxies $\widehat{\widetilde{\Delta D}_{i,t}}$ from $\pi^+$ samples}
\label{alg:estimation}
\small
\begin{algorithmic}[1]
\Require Prompt $\vc$, policy $\pi^{\mathrm{old}}$, stronger external generator $\pi^+(\cdot\mid\vc)$ used as a common proposal, rewards $r_{1:m}$, order $\alpha>0$ with $\alpha\neq1$, samples $N$, rollouts $K$, schedule $\{\sigma_t\}_{t=0}^T$
\Ensure Per-step gain proxies $\widehat{\widetilde{\Delta D}_{i,t}}$ for every reward $i$ and step $t$
\For{$n = 1,\dots,N$}
    \State Draw $\widetilde{\vx}_0^{(n)}\!\sim\pi^+(\cdot\mid\vc)$; set $\vx_t^{(n)}\!\gets(1-\sigma_t)\widetilde{\vx}_0^{(n)}\!+\sigma_t\vepsilon_t$, $\vepsilon_t\!\sim\!\mathcal N(0,I)$, $t=0,\dots,T$ \Comment{$\vx_t^{(n)}\!\sim\pi_t^+$}
    \State $\hat\mu_{i,t}^{(n)} \gets \frac{1}{K_t}\sum_{k\leq K_t} r_i(\vx_0^{(n,t,k)},\vc)$ from $K_t$ rollouts $\vx_t^{(n)}\!\to\!\vx_0^{(n,t,k)}$ under $\pi^{\mathrm{old}}$ ($K_0{=}1$, else $K$)
\EndFor
\State $\widehat{\widetilde D_{i,t}} \gets \frac{1}{\alpha-1}\log\!\big(\max\{10^{-10},\frac1N\sum_{n} [\hat\mu_{i,t}^{(n)}]^{\alpha-1}\}\big)$ for all $i,t$ \Comment{numerical floor}
\State \Return $\widehat{\widetilde{\Delta D}_{i,t}} \gets \big[\widehat{\widetilde D_{i,t-1}} - \widehat{\widetilde D_{i,t}}\big]_+$, $t = 1,\dots,T$ \Comment{Eq.~\eqref{eq:surrogate-gain}, clipped}
\end{algorithmic}
\end{algorithm}

\subsection{Sinkhorn projection}
\label{sec:sinkhorn}

The gains $\Delta D_{i,t}$ measure the utility of each reward at every step, but they do not yet constitute valid weights: they neither satisfy the budget $\bm{\lambda}$ nor maintain equal per-step total weights across $t$. We formulate the two requirements as marginal constraints and single out a unique matrix satisfying them by entropic projection. We ablate the choice of equal per-step total weights in Sec.~\ref{sec:sinkhorn-ablation}.

We first define the \emph{mean-normalized gain}
\begin{equation}
\label{eq:gain-bar}
\overline{\Delta D}_{i,t} \;:=\; \frac{\Delta D_{i,t}}{\frac{1}{T}\sum_{s=1}^{T}\Delta D_{i,s}},
\qquad\text{so that}\qquad \frac{1}{T}\sum_{t}\overline{\Delta D}_{i,t} \;=\; 1 \;\;\text{for every } i.
\end{equation}
Because the gains telescope (Eq.~\eqref{eq:telescope}), the denominator is $D_{i,0}/T$, so the division removes the reward's total discriminability and keeps only its shape in $t$ to better capture the intra-timestep relation. 

From the normalized gains we build the \emph{affinity kernel}
\begin{equation}
\label{eq:kernel}
{\;K_{i,t} \;:=\; \exp\!\big(\overline{\Delta D}_{i,t}\big) \;}
\end{equation}
which is largest at the steps where reward $i$ contributes most of its discriminability and carries no scale parameter of its own, since $\overline{\Delta D}_{i,t}$ is already mean-normalized. Theorem~\ref{thm:debruijn} gives $\Delta D_{i,t}\ge0$ in population; in finite samples we clip, using $[\Delta D_{i,t}]_+$, so a step estimated as negative enters at $K_{i,t}=1$, the smallest value the kernel takes.

Recall the two marginal constraints from Eq.~\eqref{eq:two-level-master}. Writing $\vone_m,\vone_T$ for all-ones vectors, the feasible set is the \emph{transportation polytope}
\begin{equation}
\label{eq:polytope}
\mathcal U(\bm{\lambda}) \;:=\; \Big\{\,\bm{W}\in\mathbb R_{\geq0}^{m\times T} \;:\; \bm{W}\vone_T = \bm{\lambda},\;\; \bm{W}^\top\vone_m = \tfrac{1}{T}\vone_T \,\Big\}.
\end{equation}
The set is never empty, since the rank-one matrix $\lambda_i/T$ always belongs to it. Among its elements we want the $\mathrm{KL}$ projection of $\bm{K}$ onto the polytope,
\begin{equation}
\label{eq:sinkhorn-problem}
{\;\bm{W}^\star \;=\; \argmin_{\bm{W}\in\mathcal U(\bm{\lambda})}\; \mathrm{KL}\big(\bm{W} \,\|\, \bm{K}\big) \;=\; \argmin_{\bm{W}\in\mathcal U(\bm{\lambda})}\;\sum_{i,t} W_{i,t}\log\frac{W_{i,t}}{K_{i,t}} - W_{i,t} + K_{i,t},\;}
\end{equation}
which, substituting Eq.~\eqref{eq:kernel}, is an entropy-regularized transport problem with cost $C_{i,t} := -\overline{\Delta D}_{i,t}$ and unit regularization strength:
\begin{equation}
\label{eq:ot-form}
\bm{W}^\star \;=\; \argmin_{\bm{W}\in\mathcal U(\bm{\lambda})}\;\sum_{i,t}\Big[\,C_{i,t}\,W_{i,t} \;+\; W_{i,t}\log W_{i,t}\,\Big].
\end{equation}
Since every $K_{i,t}>0$, the objective is strictly convex and the minimizer is unique. Setting the Lagrangian gradient to zero gives the kernel rescaled by one factor per row and one per column,
\begin{equation}
\label{eq:sinkhorn-form}
{\;W_{i,t}^\star \;=\; a_i\;K_{i,t}\;b_t \;=\; a_i\cdot\exp\!\big(\overline{\Delta D}_{i,t}\big)} \cdot b_t,
\end{equation}
where $a_i$ carries inter-reward scaling, and $b_t$ the per-step normalization.
Sinkhorn's theorem~\citep{sinkhorn1967concerning,cuturi2013sinkhorn} guarantees that $\va\in\mathbb R^m_{>0}$ and $\vb\in\mathbb R^T_{>0}$ exist and are unique up to the trivial rescaling $(\va,\vb)\mapsto(c\,\va,\vb/c)$, and it finds them iteratively:
\begin{equation}
\label{eq:sinkhorn-iter}
a_i \;\leftarrow\; \frac{\lambda_i}{\sum_t K_{i,t}\,b_t},
\qquad\qquad
b_t \;\leftarrow\; \frac{1/T}{\sum_i K_{i,t}\,a_i},
\end{equation}
which converges linearly; in the log domain it needs $<100$ steps at $m\leq 4$, $T\leq 25$ (Alg.~\ref{alg:sinkhorn}, App.~\ref{app:sinkhorn-solver}).

A reward whose gain is flat in $t$ has $\overline{\Delta D}_{i,t}\equiv1$, since Eq.~\eqref{eq:gain-bar} fixes the mean at $1$; if this holds for every reward, the kernel is rank one and the entropic optimum is $W_{i,t}^\star = \lambda_i/T$, so the static baseline is exactly the case of flat estimated curves (App.~\ref{app:sinkhorn-cases}).

\subsection{Empirical gain curves on SD3.5-Medium}
\label{sec:empirical-omega}

\paragraph{Setup.}
We instantiate Alg.~\ref{alg:estimation} on SD3.5-Medium with $\alpha{=}2$, $80$ prompts from the training datasets, $N{=}8$ samples per prompt, and $K{=}16$ rollouts per $\vx_t$. We approximate $\pi^+$ by \textsc{GPT Image 1.5}, VAE-encode its images into latent space, forward-noise to each step, and run batched rollouts under $\pi^{\mathrm{old}}$. We plot seven rewards: \textsc{ClipScore}, \textsc{HPSv2}, \textsc{PickScore}, \textsc{Aesthetic}, and \textsc{ImageReward} jointly on this shared $T{=}10$ grid, plus \textsc{OCR} and \textsc{GenEval} on their own prompt sets at the native $T{=}25$ schedule they are later trained under. Each reward's architecture, checkpoint, and range are in Sec.~\ref{sec:exp-setup} and App.~\ref{app:rewards}; App.~\ref{app:alpha-sweep} conducts an ablation study on $\alpha$.

\paragraph{Results.}
In Fig.~\ref{fig:renyi-omega}, the estimated curves are strongly reward-specific; the horizontal axis is the diffusion timestep, from the cleanest step $t{=}1$ (left) to the noisiest $t{=}T$ (right). \textsc{GenEval}, \textsc{OCR}, and \textsc{ImageReward} peak the most sharply, concentrating their gain toward the noisy end, while the per-step gain of the others stays close to uniform. The first two are rule-based and, like \textsc{ImageReward}, can score a sample once its global structure has emerged. The differences are large enough to matter: under uniform $\bm{\lambda}$, the aggregate demand $\sum_i\lambda_i\overline{\Delta D}_{i,t}$ varies by $5.4\times$ across the steps $2\le t\le T$, and still by $4.5\times$ once \textsc{GenEval} is excluded. The right panel shows the Sinkhorn projection (Eq.~\eqref{eq:sinkhorn-iter}) over the four rewards in the \textsc{OCR} setting, correcting this imbalance by shifting each reward's weight toward its own high-gain steps while both marginals stay exact.

\begin{figure}[t]
\centering
\includegraphics[width=\linewidth]{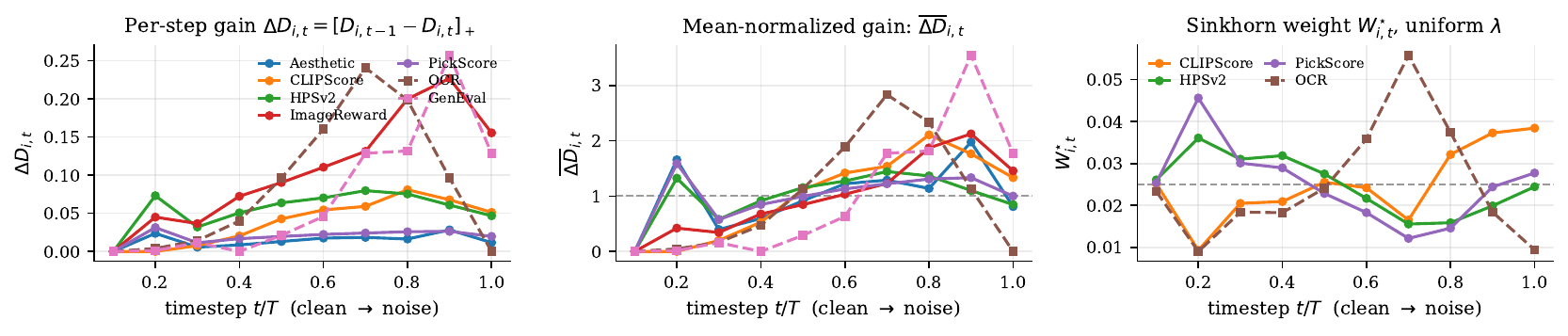}
\caption{Gain curves and the weight matrix they project to. \textbf{Left:} per-step gain $\Delta D_{i,t}$. \textbf{Center:} mean-normalized gain $\overline{\Delta D}_{i,t}$ (Eq.~\eqref{eq:gain-bar}), the exponent of the affinity kernel (Eq.~\eqref{eq:kernel}). \textbf{Right:} the Sinkhorn matrix $W_{i,t}^\star$ (Eq.~\eqref{eq:sinkhorn-form}) at uniform $\lambda_i{=}1/m$, solved over the four rewards that are trained together in the \textsc{OCR} setting; both marginals hold exactly, every column summing to $1/T$ and every row to $\lambda_i$. In all panels the horizontal axis is $t/T$, running from the cleanest step $t{=}1$ (left) to the noisiest $t{=}T$ (right). \textsc{OCR} and \textsc{GenEval} (dashed) are resampled onto the shared $T{=}10$ grid from their native $T{=}25$ curves, so the right panel is illustrative: the training-stage kernel is solved at $T{=}25$ on \textsc{OCR}'s own grid (Sec.~\ref{sec:exp-setup}). }
\label{fig:renyi-omega}
\end{figure}

\section{Experiments}
\label{sec:pareto-plan}
Sec.~\ref{sec:exp-setup} introduces the setup. Sec.~\ref{sec:exp-train} evaluates the training rewards on a held-out split, and Sec.~\ref{sec:exp-heldout} the same checkpoints under held-out judges, to show generalizable improvement. Sec.~\ref{sec:MMRBv2} rescores those images with an independent LLM-as-a-Judge, and Sec.~\ref{sec:sinkhorn-ablation} ablates the Sinkhorn projection.

\subsection{Training setup}
\label{sec:exp-setup}
\paragraph{Reward models.} We evaluate \ourmethod{} with nine reward models in two distinct roles. Five are \emph{training rewards}: \textsc{ClipScore}~\citep{hessel2021clipscore}, \textsc{HPSv2}~\citep{wu2023hpsv2}, \textsc{PickScore}~\citep{kirstain2023pick}, \textsc{OCR}, and \textsc{GenEval}~\citep{ghosh2023geneval}. To evaluate generalization, four \emph{held-out judges} score the checkpoints: \textsc{Aesthetic}~\citep{schuhmann2022aesthetics}, \textsc{ImageReward}~\citep{xu2023imagereward}, \textsc{HPSv3}~\citep{ma2025hpsv3}, and \textsc{UnifiedReward-2}~\citep{wang2025unified}. App.~\ref{app:rewards} details architectures, checkpoints, and output ranges for all rewards.

\paragraph{Training data and reward chain.} Following DiffusionNFT~\citep{diffusionnft}, training runs in two stages. The \textbf{warmup stage} jointly optimizes \textsc{ClipScore}, \textsc{HPSv2}, and \textsc{PickScore} for $120$ steps from the base model on Pick-a-Pic prompts ($25{,}432$ train/$2{,}048$ test); every later run resumes from its checkpoint. The \textbf{training stage-OCR} then adds an \textsc{OCR} reward on text-rendering prompts ($19{,}652$ train/$1{,}017$ test) for $60$ further steps ($m{=}4$). The \textbf{training stage-GenEval} branches from the same warmup parent on compositional \textsc{GenEval} prompts ($50{,}000$ train/$2{,}211$ test), swapping \textsc{OCR} for a rule-based \textsc{GenEval} reward on an identical schedule.

\paragraph{Training configuration.} Across all stages, we fine-tune LoRA adapters of Stable Diffusion 3.5-Medium. We use the DiffusionNFT objective with $\beta{=}0.1$, rolling out $T{=}25$ sampling steps per iteration at an effective batch size of $1152$. We run the warmup stage at a balanced $\bm{\lambda}\!=\!(1,1,1)$ on three rewards: \textsc{ClipScore}, \textsc{HPSv2}, and \textsc{PickScore}. Then, we sweep each training-stage setting's row marginals over $\bm{\lambda} \in \{(1,1,1,1), (1,1,1,2), (1,1,2,1), (1,2,1,1), (2,1,1,1)\}$, the uniform budget together with each coordinate doubled in turn, where $\bm{\lambda}$ is ordered (\textsc{ClipScore}, \textsc{HPSv2}, \textsc{PickScore}, \textsc{OCR}/\textsc{GenEval}). Every $\bm{\lambda}$ is written unnormalized as an integer ratio between rewards, and is divided by its own sum before use to satisfy $\sum_i\lambda_i=1$. Each budget gives one matched pair of \ourmethod{} against a static baseline, and both settings are repeated end-to-end under $3$ independent seeds, giving $15$ matched (budget, seed) pairs per setting. For our weights configuration, we use Alg.~\ref{alg:estimation} with $\alpha{=}2$ for the kernel and Alg.~\ref{alg:sinkhorn} for the Sinkhorn projection. The training objective is detailed in App.~\ref{sec:background} and the remaining hyperparameters in App.~\ref{app:hparams}.

\paragraph{Evaluation protocol.} We evaluate at three levels: the \textbf{training rewards} on a held-out dataset (Sec.~\ref{sec:exp-train}); \textbf{held-out judge models} (Sec.~\ref{sec:exp-heldout}); and \textbf{LLM-as-a-Judge} (Sec.~\ref{sec:MMRBv2}). All three score each run's final checkpoint at matched seeds, $512^2$ resolution, CFG $4.5$, and $40$ steps; we report means over each training-stage pair-set and win rates over its matched pairs.

\subsection{Training rewards}

\label{sec:exp-train}
\begin{table}[!htbp]
\centering
\small
\setlength{\tabcolsep}{1.5pt}
\caption{Training-reward scores at the final checkpoint, reported as base~/~static~/~\oursshort{}, where \emph{base} is the untuned SD3.5-Medium, \emph{static} is static reweighting by $\bm{\lambda}$, and Aggregate $r$ is the sum of the individual rewards weighted by the integer $\bm{\lambda}$. Both settings were trained end-to-end (warmup stage + training-stage sweep, both methods) under $3$ seeds; entries are the mean of the three seed-level means, standard deviation as a subscript, and \emph{win} pools all $15$ matched (budget, seed) pairs. Bold marks the better of static and \oursshort{}, and both when the two agree at the reported precision. Per-$\bm{\lambda}$ results are in App.~\ref{app:preagg-geneval}.}
\label{tab:trainreward}
\begin{tabular}{@{}lcccc@{}}
\toprule
& \multicolumn{2}{c}{\textbf{Training stage-OCR} ($5$ budgets $\times$ $3$ seeds)} & \multicolumn{2}{c}{\textbf{Training stage-GenEval} ($5$ budgets $\times$ $3$ seeds)} \\
\cmidrule(lr){2-3}\cmidrule(lr){4-5}
Training reward & base~/~static~/~\oursshort{} & win & base~/~static~/~\oursshort{} & win \\
\midrule
Aggregate $r$       & 1.642~/~2.867$_{\pm0.026}$~/~\textbf{2.920}$_{\pm0.028}$ & 11/15 & 1.784~/~\textbf{2.881}$_{\pm0.021}$~/~2.878$_{\pm0.015}$ & 7/15 \\
\textsc{ClipScore}  & 0.271~/~0.313$_{\pm0.002}$~/~\textbf{0.320}$_{\pm0.002}$ & 13/15 & 0.240~/~\textbf{0.299}$_{\pm0.001}$~/~\textbf{0.299}$_{\pm0.001}$ & 6/15 \\
\textsc{HPSv2}      & 0.195~/~0.288$_{\pm0.008}$~/~\textbf{0.303}$_{\pm0.004}$ & 12/15 & 0.213~/~\textbf{0.313}$_{\pm0.005}$~/~0.312$_{\pm0.002}$ & 5/15 \\
\textsc{PickScore}  & 0.773~/~0.888$_{\pm0.002}$~/~\textbf{0.899}$_{\pm0.003}$ & 15/15 & 0.790~/~\textbf{0.911}$_{\pm0.004}$~/~0.908$_{\pm0.003}$ & 6/15 \\
Target reward       & 0.129~/~0.893$_{\pm0.014}$~/~\textbf{0.900}$_{\pm0.032}$ & 9/15 & 0.244~/~0.873$_{\pm0.011}$~/~\textbf{0.874}$_{\pm0.007}$ & 9/15 \\
\bottomrule
\end{tabular}
\end{table}

Tab.~\ref{tab:trainreward} reports each training reward on a held-out dataset, averaged over the five budgets and three seed replicates, with \emph{win} counting the matched pairs in which \ourmethod{} beats static. \ourmethod{} raises the aggregate score from $2.867$ to $2.920$ on average ($+0.053_{\pm0.049}$ across seeds) and wins $11$ of the $15$ matched (budget, seed) pairs. The gain is not obtained by sacrificing one objective for another: all four component rewards improve on average, with \textsc{PickScore} winning all $15$ pairs and \textsc{ClipScore} and \textsc{HPSv2} winning $13$ and $12$. The target \textsc{OCR} reward carries the largest seed variance ($+0.007_{\pm0.046}$, $9/15$). Adaptive timing therefore uses a fixed multi-reward budget more efficiently, rather than merely changing the trade-off encoded by $\bm{\lambda}$. On the GenEval setting the two methods instead tie: every training reward shifts by less than one seed standard deviation (aggregate $-0.003_{\pm0.031}$, target \textsc{GenEval} $+0.001_{\pm0.017}$) and win rates are near chance ($5$ to $9$ of $15$). That setting therefore acts as a matched-training-reward control, where Sec.~\ref{sec:exp-heldout} still finds \ourmethod{} preferred by every held-out judge.

\subsection{Held-out judges}
\label{sec:exp-heldout}

\begin{table}[!htbp]
\centering
\small
\setlength{\tabcolsep}{2.5pt}
\caption{Held-out judge scores at the final checkpoint, reported as base~/~static~/~\oursshort{}. Both settings cover $5$ budgets $\times$ $3$ independent training seeds, and entries, subscripts, \emph{win}, and the bolding rule are as in Tab.~\ref{tab:trainreward}; the \textsc{HPSv3} row is at two decimals because its scale is an order of magnitude larger. Per-$\bm{\lambda}$ results are in App.~\ref{app:preagg-geneval}.}
\label{tab:panels}
\begin{tabular}{@{}lcccc@{}}
\toprule
& \multicolumn{2}{c}{\textbf{Training stage-OCR} ($15$ pairs)} & \multicolumn{2}{c}{\textbf{Training stage-GenEval} ($15$ pairs)} \\
\cmidrule(lr){2-3}\cmidrule(lr){4-5}
Held-out model & base~/~static~/~\oursshort{} & win & base~/~static~/~\oursshort{} & win \\
\midrule
\textsc{ImageReward}     & 1.140~/~1.422$_{\pm0.037}$~/~\textbf{1.482}$_{\pm0.030}$ & 13/15 & 1.024~/~1.198$_{\pm0.165}$~/~\textbf{1.289}$_{\pm0.122}$ & 9/15 \\
\textsc{Aesthetic}       & 5.388~/~5.486$_{\pm0.063}$~/~\textbf{5.578}$_{\pm0.009}$ & 12/15 & 5.949~/~5.864$_{\pm0.120}$~/~\textbf{5.968}$_{\pm0.102}$ & 10/15 \\
\textsc{HPSv3}           & 12.69~/~12.74$_{\pm0.18}$~/~\textbf{13.14}$_{\pm0.49}$ & 11/15 & 9.19~/~8.02$_{\pm0.95}$~/~\textbf{8.58}$_{\pm0.56}$ & 10/15 \\
\textsc{UnifiedReward-2} & 3.015~/~3.154$_{\pm0.025}$~/~\textbf{3.176}$_{\pm0.040}$ & 11/15 & 2.682~/~2.671$_{\pm0.085}$~/~\textbf{2.705}$_{\pm0.046}$ & 9/15 \\
\bottomrule
\end{tabular}
\end{table}

As shown in Tab.~\ref{tab:panels}, on the OCR setting all four held-out judges score \ourmethod{} higher on average, winning $13$, $12$, $11$, and $11$ of the $15$ matched (budget, seed) pairs. For the GenEval setting, although the two methods are tied there on every reward being optimized (Sec.~\ref{sec:exp-train}), all four held-out judges still favor \ourmethod{} (largest, \textsc{HPSv3} $+0.56_{\pm0.41}$), winning $9$ or $10$ of the $15$ pairs. A held-out gain under an equal training reward is consistent with generalization rather than overfitting, and on the OCR setting the improved text rendering enhances rather than degrades broader image quality. The two settings differ against the untuned base model, however: the OCR setting clears it on all four judges under both methods, whereas in the GenEval setting static falls below it on \textsc{Aesthetic}, \textsc{HPSv3}, and \textsc{UnifiedReward-2}. Optimizing compositional correctness therefore costs generic visual quality, and \ourmethod{} recovers that cost on every judge, fully on \textsc{Aesthetic} and \textsc{UnifiedReward-2} and partly on \textsc{HPSv3}.

\subsection{LLM-as-a-Judge}
\label{sec:MMRBv2}

To assess general preference closer to real use, we use the MMRBv2 eval script~\citep{hu2026multimodal} to compare the \ourmethod{} checkpoints against static on a fixed $1{,}000$-prompt subset of the held-out OCR set. A multimodal LLM judge (\texttt{gemini-3.5-flash}, temperature $0$) picks the better of the two images per prompt according to a fixed rubric, and every pair is judged twice with positions swapped to cancel position bias; we also report the rubric's faithfulness and aesthetics criteria. App.~\ref{app:MMRBv2-details} gives the full prompt and the debiasing procedure.

\begin{table}[!htbp]
\centering
\small
\setlength{\tabcolsep}{3.5pt}
\caption{MMRBv2 pairwise evaluation of \ourmethod{} against static on the same subset as Tab.~\ref{tab:panels}, position-debiased over both presentation orders. Every entry is \ourmethod{}'s win rate, overall and under the two rubric criteria, with $0.5$ indicating parity; $\bm{\lambda}$ is ordered (\textsc{ClipScore}, \textsc{HPSv2}, \textsc{PickScore}, \textsc{OCR}).}
\label{tab:MMRBv2-ocr}
\begin{tabular}{@{}lcccccc@{}}
\toprule
$\bm{\lambda}$ & $(1,1,1,1)$ & $(1,1,1,2)$ & $(1,1,2,1)$ & $(1,2,1,1)$ & $(2,1,1,1)$ & \textbf{mean} \\
\midrule
Overall      & 0.6538 & 0.5580 & 0.4780 & 0.5365 & 0.6325 & \textbf{0.5718} \\
Faithfulness & 0.6550 & 0.5570 & 0.5130 & 0.5655 & 0.6094 & \textbf{0.5800} \\
Aesthetics   & 0.6438 & 0.5397 & 0.4736 & 0.4674 & 0.6242 & \textbf{0.5497} \\
\bottomrule
\end{tabular}
\end{table}

As shown in Tab.~\ref{tab:MMRBv2-ocr}, \ourmethod{} obtains a mean overall win rate of $0.572$ and is preferred at four of the five budgets, most strongly at $(1,1,1,1)$ and $(2,1,1,1)$ ($0.654$ and $0.633$), where both faithfulness and aesthetics improve. App.~\ref{app:qualitative-gallery} shows paired generations behind these judgments and their selection protocol.
\subsection{Ablation study: the Sinkhorn projection}
\label{sec:sinkhorn-ablation}
We ablate the Sinkhorn projection of Sec.~\ref{sec:sinkhorn}, which enforces a uniform total budget at each step, by training \emph{no-Sink R\'enyi} variants that keep the adaptive kernel but omit the column constraint: $W^\star$ becomes the row-normalized kernel $W_{i,t} = \lambda_i K_{i,t}/\sum_{s} K_{i,s}$. Every reward still spends exactly its budget $\lambda_i$ and the loss still consumes $T\!\cdot\!W_{i,t}$, but $\sum_i W_{i,t}$ is free, so each reward rides its own gain shape. All other training choices match the \textsc{OCR} experiment.

\begin{table}[!htbp]
\centering
\small
\setlength{\tabcolsep}{3.5pt}
\caption{Aggregate training-reward score for each $\bm{\lambda}=(\text{clip},\text{hps},\text{pick},\text{ocr})$ budget in the training stage-OCR experiment, comparing static weighting, the adaptive kernel alone (no-Sink R\'enyi), and the full method. Scores are on the held-out OCR dataset at the last checkpoint (step $180$), single matched training seed.}
\label{tab:sinkhorn-ablation-perlam}
\begin{tabular}{@{}lccccc@{}}
\toprule
$\bm{\lambda}$ & $(1,1,1,1)$ & $(1,1,1,2)$ & $(1,1,2,1)$ & $(1,2,1,1)$ & $(2,1,1,1)$ \\
\midrule
static                  & 2.3848 & 3.2476 & 3.2948 & 2.5597 & 2.6980 \\
no-Sink R\'enyi         & 2.4242 & 3.3514 & \textbf{3.3781} & 2.6721 & 2.6969 \\
\ourmethod{} (Sinkhorn) & \textbf{2.4650} & \textbf{3.3564} & 3.3564 & \textbf{2.7960} & \textbf{2.7038} \\
\bottomrule
\end{tabular}
\end{table}

As shown in Tab.~\ref{tab:sinkhorn-ablation-perlam}, no-Sink R\'enyi beats static on four of five budgets, and the full method beats no-Sink R\'enyi on four of five. Averaged across budgets, the adaptive kernel accounts for $+0.068$ of \ourmethod{}'s total $+0.099$ gain over static, and the Sinkhorn projection for the remaining $+0.031$ (this seed's gap, not the three-seed mean of Tab.~\ref{tab:trainreward}; both parts of the decomposition share the seed). Reward-specific timing therefore identifies useful optimization steps, while the shared column budget keeps the resulting weights from concentrating on a few of them.

\section{Conclusion}
\label{sec:conclusion}
Multi-reward diffusion fine-tuning must decide both \emph{how much} each reward matters and \emph{when} it should act. \ourmethod{} separates these decisions: the budget $\bm{\lambda}$ fixes each reward's total contribution, and its R\'enyi discriminability curve distributes it across timesteps.

On SD3.5-Medium, this temporal reallocation improves the OCR setting's aggregate training objective across five budgets and three seeds, with all four training rewards increasing on average, and it transfers: every held-out judge improves over static, including on the GenEval setting where the two methods tie on the training rewards, and the pairwise LLM judge prefers \ourmethod{} at four of five budgets. \emph{When} a reward is applied can therefore matter as much as \emph{how much} weight it receives. We discuss the limitations of our work in App.~\ref{sec:limitations}.

\bibliographystyle{iclr2027_conference}
\bibliography{iclr2027_conference}

\newpage
\appendix
\section{Supplementary theory}
In App.~\ref{sec:background}, we provide notations and basic properties on rectified flow matching and DiffusionNFT. In App.~\ref{app:proof-debruijn}, we prove Theorem~\ref{thm:debruijn}. In App.~\ref{app:positive-policy-error}, we bound the error induced by replacing each reward-specific positive policy $\pi_i^+$ with the common surrogate $\pi^+$, and show that it is $O(\sqrt{\mathrm{KL}(\pi^+\|\pi_i^+)})$ on the discriminability gain and propagates to the Sinkhorn weights.
\subsection{Background: rectified flow matching and DiffusionNFT}
\label{sec:background}

\paragraph{Rectified flow matching.}
All models in this paper are rectified-flow generators~\citep{liu2022flow,lipman2022flow,esser2024scaling}. A clean sample $\vx_0\sim\pi(\cdot\mid\vc)$ and noise $\vepsilon\sim\mathcal N(0,I)$ are connected by the straight interpolation
\begin{equation}
\label{eq:rf-path}
\vx_t \;=\; \alpha_t\,\vx_0 + \sigma_t\,\vepsilon, \qquad \alpha_t = 1-\sigma_t, \qquad \sigma_0 = 0,\quad \sigma_T = 1,
\end{equation}
so $t{=}0$ is clean data and $t{=}T$ is pure noise. Here $\alpha_t$ is the interpolation coefficient, distinct from the R\'enyi order $\alpha$ of Sec.~\ref{sec:Dt} and from the mixture coefficient $\alpha(\vx_t)$ defined below. Differentiating Eq.~\eqref{eq:rf-path} gives the target velocity $\vv = \dot\alpha_t\,\vx_0 + \dot\sigma_t\,\vepsilon$, which for the linear path is simply $\vv = \vepsilon - \vx_0$. A velocity model $\vv_\theta$ is fit by regression on this target,
\begin{equation}
\label{eq:fm-loss}
\mathcal L_{\mathrm{FM}}(\theta) \;=\; \E_{\vc,\;\vx_0,\;\vepsilon,\;t}\Big[\,\big\|\vv_\theta(\vx_t,\vc,t) - \vv\big\|_2^2\,\Big],
\end{equation}
and sampling integrates ${\rm d}\vx = \vv_\theta(\vx_t,\vc,t)\,{\rm d}\sigma_t$ backward from $t{=}T$ to $t{=}0$ over a discrete schedule $\{\sigma_t\}_{t=0}^T$ of $T$ steps. Two features of Eq.~\eqref{eq:fm-loss} matter for what follows: the expectation over $t$ makes every timestep a separate regression problem, so a per-timestep coefficient can be attached to each without changing the estimator; and $\vv$ depends on the data only through $\vx_0$, so tilting the data distribution by a reward moves the target while leaving the forward path Eq.~\eqref{eq:rf-path} untouched. The latter is what lets Theorem~\ref{thm:debruijn} apply to the reward-tilted marginals in Sec.~\ref{sec:Dt}.

\paragraph{DiffusionNFT.}
We briefly review the DiffusionNFT framework~\citep{diffusionnft}, which forms the basis of our multi-reward training objective. Theorems~\ref{thm:nft-direction} and~\ref{thm:nft-optimization} below are from DiffusionNFT~\citep{diffusionnft}; we restate them here, without proof, for the notation our objective builds on.

\paragraph{Positive Policy.}
Let $\pi^{\mathrm{old}}(\vx_0\mid\vc)$ denote the current, or old, policy and $r(\vx_0,\vc) := p(\vo=1\mid\vx_0,\vc)$ the optimality probability. The \emph{positive policy} is defined as the old policy conditioned on optimality:
\begin{equation}
\label{eq:pi-plus}
\pi^+(\vx_0\mid\vc) \;:=\; \pi^{\mathrm{old}}(\vx_0\mid\vo=1,\vc) \;=\; \frac{r(\vx_0,\vc)}{p_{\pi^{\mathrm{old}}}(\vo=1\mid\vc)}\,\pi^{\mathrm{old}}(\vx_0\mid\vc).
\end{equation}
With infinitely many samples from $\pi^{\mathrm{old}}$, $\pi^+$ is simply the distribution of the positive subset: the old policy restricted to its ``optimal'' outputs.

\paragraph{The Coefficient $\alpha(\vx_t)$.}

A key quantity linking the diffused positive marginal to the diffused old marginal is the scalar coefficient $\alpha(\vx_t)\in[0,1]$:
\begin{equation}
\label{eq:alpha}
\alpha(\vx_t) \;:=\; \frac{\pi_t^+(\vx_t\mid\vc)}{\pi_t^{\mathrm{old}}(\vx_t\mid\vc)}\cdot\E_{\pi^{\mathrm{old}}(\vx_0\mid\vc)}[r(\vx_0,\vc)].
\end{equation}
It comes out of a posterior decomposition: the old posterior splits into a mixture of the positive and negative posteriors, weighted by $\alpha(\vx_t)$ and $1-\alpha(\vx_t)$.

\begin{theorem}[Improvement direction,~\citep{diffusionnft}]
\label{thm:nft-direction}
Let $\vv^+$, $\vv^-$, and $\vv^{\mathrm{old}}$ be the velocity models for the policies $\pi^+$, $\pi^-$, and $\pi^{\mathrm{old}}$, respectively. The directional differences between these models are proportional:
\begin{equation}
\label{eq:Delta}
\Delta \;:=\; \big[1-\alpha(\vx_t)\big]\,\big[\vv^{\mathrm{old}}(\vx_t,\vc,t) - \vv^-(\vx_t,\vc,t)\big] \;=\; \alpha(\vx_t)\,\big[\vv^+(\vx_t,\vc,t) - \vv^{\mathrm{old}}(\vx_t,\vc,t)\big],
\end{equation}
where $0\leq\alpha(\vx_t)\leq 1$ is defined in Eq.~\eqref{eq:alpha}.
\end{theorem}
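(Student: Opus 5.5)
The plan is to show that the old posterior over clean data, given a noisy state, is a two-component mixture of the positive and negative posteriors with mixing weight exactly $\alpha(\vx_t)$. Since each velocity is an affine function of the posterior mean of $\vx_0$, the identity Eq.~\eqref{eq:Delta} then reduces to rearranging $\vv^{\mathrm{old}} = \alpha\vv^+ + (1-\alpha)\vv^-$.

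\textbf{Step 1: the negative policy and the data-level mixture.} The paper has not yet written down $\pi^-$. I will use the standard DiffusionNFT definition as the old policy conditioned on $\vo=0$:
\[
\pi^-(\vx_0\mid\vc) := \frac{1-r(\vx_0,\vc)}{1-p}\,\pi^{\mathrm{old}}(\vx_0\mid\vc), \qquad p := p_{\pi^{\mathrm{old}}}(\vo=1\mid\vc) = \E_{\pi^{\mathrm{old}}}[r].
\]
Combining this with Eq.~\eqref{eq:pi-plus} gives $\pi^{\mathrm{old}} = p\,\pi^+ + (1-p)\,\pi^-$ at the data level. All three policies share the policy-independent forward kernel $p(\vx_t\mid\vx_0)$ of Eq.~\eqref{eq:rf-path}, and pushing a distribution through that kernel is linear. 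Hence the same mixture holds for the marginals at every $t$:
\[
\pi^{\mathrm{old}}_t = p\,\pi^+_t + (1-p)\,\pi^-_t.
\]

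\textbf{Step 2: the posterior mixture.} Apply Bayes' rule, $\pi(\vx_0\mid\vx_t) = p(\vx_t\mid\vx_0)\,\pi(\vx_0)/\pi_t(\vx_t)$, to each policy and divide the data-level mixture by $\pi^{\mathrm{old}}_t(\vx_t)$. This gives
\[
\pi^{\mathrm{old}}(\vx_0\mid\vx_t) = \alpha(\vx_t)\,\pi^+(\vx_0\mid\vx_t) + \bigl(1-\alpha(\vx_t)\bigr)\,\pi^-(\vx_0\mid\vx_t),
\]
where
\[
\alpha(\vx_t) = \frac{p\,\pi^+_t(\vx_t)}{\pi^{\mathrm{old}}_t(\vx_t)}, \qquad 1-\alpha(\vx_t) = \frac{(1-p)\,\pi^-_t(\vx_t)}{\pi^{\mathrm{old}}_t(\vx_t)}.
\]
This $\alpha$ is exactly Eq.~\eqref{eq:alpha}. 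The bounds $0\le\alpha\le1$ follow because both terms of the marginal mixture are nonnegative; equivalently, $r\in[0,1]$ gives $p\,\pi^+\le\pi^{\mathrm{old}}$ pointwise, and this persists after convolution with the kernel.

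\textbf{Step 3: velocities are affine in the posterior mean.} Under the linear path, the regression minimizer of Eq.~\eqref{eq:fm-loss} is $\vv(\vx_t,\vc,t) = \E[\vepsilon-\vx_0\mid\vx_t]$. Using $\vepsilon = (\vx_t-\alpha_t\vx_0)/\sigma_t$ for $\sigma_t>0$, we get
\[
\vv = \frac{\vx_t - \E[\vx_0\mid\vx_t]}{\sigma_t},
\]
which is affine in the posterior mean with coefficients that do not depend on the policy. Taking expectations of this affine map under the posterior mixture of Step 2 yields $\vv^{\mathrm{old}} = \alpha\,\vv^+ + (1-\alpha)\,\vv^-$. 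Rewriting $\vv^{\mathrm{old}} = \alpha\vv^{\mathrm{old}} + (1-\alpha)\vv^{\mathrm{old}}$ and rearranging gives $(1-\alpha)(\vv^{\mathrm{old}}-\vv^-) = \alpha(\vv^+-\vv^{\mathrm{old}})$, which is Eq.~\eqref{eq:Delta}.

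\textbf{Main obstacle.} The algebra in Steps 2 and 3 is routine. The step that needs care is Step 3: it identifies the velocity models $\vv^\pm$, $\vv^{\mathrm{old}}$ with the exact conditional-expectation (Bayes-optimal) velocities of their respective policies, rather than with trained approximations. It also requires the posteriors to be well defined, which needs $\pi^{\mathrm{old}}_t(\vx_t)>0$ (true for $\sigma_t>0$ because the Gaussian kernel is positive) and $0<p<1$ so that both $\pi^+$ and $\pi^-$ are normalizable. I will state these as standing assumptions and treat the degenerate cases $p\in\{0,1\}$ separately: there one of $\alpha$, $1-\alpha$ vanishes identically, and the identity holds with both sides equal to zero.
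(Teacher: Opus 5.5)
Your proposal is correct and takes essentially the route the paper points to (it restates the result from DiffusionNFT without proof and notes only that the old posterior splits into an $\alpha(\vx_t)$-weighted mixture of the positive and negative posteriors). That decomposition, together with the velocity being a policy-independent affine function of the posterior mean, gives $\vv^{\mathrm{old}}=\alpha\vv^+ + (1-\alpha)\vv^-$, from which Eq.~\eqref{eq:Delta} follows by rearranging; your definition of $\pi^-$ via $\vo=0$ and your standing assumptions ($\sigma_t>0$, $0<p<1$, exact Bayes-optimal velocities) correctly supply the details the paper leaves to the reference.
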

So $\Delta$ is an improvement direction in velocity space: moving from $\vv^{\mathrm{old}}$ toward $\vv^+$ is the same direction as moving away from $\vv^-$.

\begin{theorem}[Policy optimization,~\citep{diffusionnft}]
\label{thm:nft-optimization}
Consider the training objective
\begin{equation}
\label{eq:nft-loss}
\mathcal L(\theta) \;=\; \E_{\vc,\,\pi^{\mathrm{old}}(\vx_0\mid\vc),\,\vepsilon,\,t}\!\Big[r\,\|\vv_\theta^+(\vx_t,\vc,t) - \vv\|_2^2 \;+\; (1-r)\,\|\vv_\theta^-(\vx_t,\vc,t) - \vv\|_2^2\Big],
\end{equation}
where $\vv = \dot\alpha_t\,\vx_0 + \dot\sigma_t\,\vepsilon$ is the target velocity from the forward process, and the implicit positive and negative policies are parameterized as
\begin{align}
\label{eq:implicit-pos}
\vv_\theta^+(\vx_t,\vc,t) &\;:=\; (1-\beta)\,\vv^{\mathrm{old}}(\vx_t,\vc,t) + \beta\,\vv_\theta(\vx_t,\vc,t), \\
\label{eq:implicit-neg}
\vv_\theta^-(\vx_t,\vc,t) &\;:=\; (1+\beta)\,\vv^{\mathrm{old}}(\vx_t,\vc,t) - \beta\,\vv_\theta(\vx_t,\vc,t).
\end{align}
Given unlimited data and model capacity, the optimal solution satisfies
\begin{equation}
\label{eq:nft-optimal}
\vv_{\theta^*}(\vx_t,\vc,t) \;=\; \vv^{\mathrm{old}}(\vx_t,\vc,t) \;+\; \frac{2}{\beta}\,\Delta(\vx_t,\vc,t),
\end{equation}
where $\Delta$ is defined in Eq.~\eqref{eq:Delta}.
\end{theorem}
The optimum therefore sits along the reinforcement-guidance direction $\Delta$, at strength $2/\beta$ relative to $\vv^{\mathrm{old}}$, so the policy improves without any likelihood being estimated.

\subsection{Proof of Theorem~\ref{thm:debruijn}}
\label{app:proof-debruijn}

\begin{proof}
Let $\rho := \pi^a/\pi^b$ and $M_\alpha := \E_{\pi^b}[\rho^\alpha] = \int \rho^\alpha\,\pi^b\,{\rm d}\vx = \int (\pi^a)^\alpha\,(\pi^b)^{1-\alpha}\,{\rm d}\vx$. By definition, $D_\alpha = \frac{1}{\alpha-1}\log M_\alpha$, so
\begin{equation*}
\frac{{\rm d}}{{\rm d}t}D_\alpha \;=\; \frac{1}{\alpha-1}\cdot\frac{\dot M_\alpha}{M_\alpha}.
\end{equation*}
It suffices to show $\dot M_\alpha = -\tfrac{g^2}{2}\,\alpha(\alpha-1)\int \rho^\alpha\|\nabla\log\rho\|^2\,\pi^b\,{\rm d}\vx$.

\medskip\noindent\textbf{Step 1: Differentiating $M_\alpha$.} \;
Since $\rho = \pi^a/\pi^b$, we have $\partial_t\rho = (\partial_t\pi^a)/\pi^b - \rho\,(\partial_t\pi^b)/\pi^b$, so
\begin{equation*}
\dot M_\alpha \;=\; \int \big[\alpha\,\rho^{\alpha-1}\,(\partial_t\rho)\,\pi^b + \rho^\alpha\,\partial_t\pi^b\big]\,{\rm d}\vx \;=\; \alpha\!\int \rho^{\alpha-1}\,\partial_t\pi^a\,{\rm d}\vx \;-\; (\alpha{-}1)\!\int \rho^\alpha\,\partial_t\pi^b\,{\rm d}\vx.
\end{equation*}

\medskip\noindent\textbf{Step 2: Substituting the Fokker--Planck equation.} \;
Both $\pi^a$ and $\pi^b$ satisfy $\partial_t\pi = -\nabla\!\cdot(\vu\pi) + \tfrac{g^2}{2}\Delta\pi$, so
\begin{equation*}
\dot M_\alpha \;=\; \underbrace{\alpha\!\int \rho^{\alpha-1}\big[-\nabla\!\cdot(\vu\pi^a) + \tfrac{g^2}{2}\Delta\pi^a\big]\,{\rm d}\vx}_{=:\,I_a} \;-\; \underbrace{(\alpha{-}1)\!\int \rho^\alpha\big[-\nabla\!\cdot(\vu\pi^b) + \tfrac{g^2}{2}\Delta\pi^b\big]\,{\rm d}\vx}_{=:\,I_b}.
\end{equation*}

\medskip\noindent\textbf{Step 3: Drift terms cancel.} \;
For the drift part of $I_a$, integrate by parts (boundary terms vanish by decay at infinity):
\begin{equation*}
-\alpha\!\int \rho^{\alpha-1}\,\nabla\!\cdot(\vu\pi^a)\,{\rm d}\vx \;=\; \alpha\!\int \vu\pi^a\!\cdot\!\nabla(\rho^{\alpha-1})\,{\rm d}\vx \;=\; \alpha(\alpha{-}1)\!\int \vu\!\cdot\!\nabla\rho\;\rho^{\alpha-2}\pi^a\,{\rm d}\vx.
\end{equation*}
Using $\pi^a = \rho\pi^b$, this equals $\alpha(\alpha{-}1)\int \vu\!\cdot\!\nabla\rho\;\rho^{\alpha-1}\pi^b\,{\rm d}\vx$. For the drift part of $I_b$:
\begin{equation*}
(\alpha{-}1)\!\int \rho^\alpha\,\nabla\!\cdot(\vu\pi^b)\,{\rm d}\vx \;=\; -(\alpha{-}1)\!\int \vu\pi^b\!\cdot\!\nabla(\rho^\alpha)\,{\rm d}\vx \;=\; -\alpha(\alpha{-}1)\!\int \vu\!\cdot\!\nabla\rho\;\rho^{\alpha-1}\pi^b\,{\rm d}\vx.
\end{equation*}
Hence the drift contributions from $I_a$ and $I_b$ sum to zero.

\medskip\noindent\textbf{Step 4: Diffusion terms via integration by parts.} \;
It remains to evaluate the diffusive parts. For the $I_a$ diffusion term, apply Green's first identity (integration by parts twice):
\begin{align*}
\tfrac{g^2}{2}\,\alpha\!\int \rho^{\alpha-1}\,\Delta\pi^a\,{\rm d}\vx
&\;=\; -\tfrac{g^2}{2}\,\alpha\!\int \nabla(\rho^{\alpha-1})\!\cdot\!\nabla\pi^a\,{\rm d}\vx \\
&\;=\; -\tfrac{g^2}{2}\,\alpha(\alpha{-}1)\!\int \rho^{\alpha-2}\,\nabla\rho\!\cdot\!\nabla\pi^a\,{\rm d}\vx.
\end{align*}
Now use $\nabla\pi^a = \nabla(\rho\pi^b) = \pi^b\nabla\rho + \rho\nabla\pi^b$:
\begin{equation*}
= -\tfrac{g^2}{2}\,\alpha(\alpha{-}1)\!\int \rho^{\alpha-2}\big[\pi^b\|\nabla\rho\|^2 + \rho\,\nabla\rho\!\cdot\!\nabla\pi^b\big]\,{\rm d}\vx.
\end{equation*}
For the $I_b$ diffusion term:
\begin{align*}
-\tfrac{g^2}{2}\,(\alpha{-}1)\!\int \rho^\alpha\,\Delta\pi^b\,{\rm d}\vx
&\;=\; \tfrac{g^2}{2}\,(\alpha{-}1)\!\int \nabla(\rho^\alpha)\!\cdot\!\nabla\pi^b\,{\rm d}\vx \\
&\;=\; \tfrac{g^2}{2}\,\alpha(\alpha{-}1)\!\int \rho^{\alpha-1}\,\nabla\rho\!\cdot\!\nabla\pi^b\,{\rm d}\vx.
\end{align*}
Adding the two diffusive contributions, the $\nabla\rho\!\cdot\!\nabla\pi^b$ terms cancel:
\begin{equation*}
-\tfrac{g^2}{2}\,\alpha(\alpha{-}1)\!\int \rho^{\alpha-2}\,\rho\,\nabla\rho\!\cdot\!\nabla\pi^b\,{\rm d}\vx \;+\; \tfrac{g^2}{2}\,\alpha(\alpha{-}1)\!\int \rho^{\alpha-1}\,\nabla\rho\!\cdot\!\nabla\pi^b\,{\rm d}\vx \;=\; 0,
\end{equation*}
and we are left with
\begin{equation*}
\dot M_\alpha \;=\; -\tfrac{g(t)^2}{2}\,\alpha(\alpha{-}1)\!\int \rho^{\alpha-2}\,\|\nabla\rho\|^2\,\pi^b\,{\rm d}\vx.
\end{equation*}

\medskip\noindent\textbf{Step 5: Rewrite in terms of $\nabla\log\rho$.} \;
Since $\nabla\rho = \rho\,\nabla\log\rho$, we have $\rho^{\alpha-2}\|\nabla\rho\|^2 = \rho^\alpha\|\nabla\log\rho\|^2$, giving
\begin{equation*}
\dot M_\alpha \;=\; -\tfrac{g(t)^2}{2}\,\alpha(\alpha{-}1)\!\int \rho^\alpha\,\|\nabla\log\rho\|^2\,\pi^b\,{\rm d}\vx.
\end{equation*}
Since $\frac{{\rm d}}{{\rm d}t}D_\alpha = \frac{1}{\alpha-1}\cdot\frac{\dot M_\alpha}{M_\alpha}$, the factor $(\alpha-1)$ cancels, giving $\frac{{\rm d}}{{\rm d}t}D_\alpha = -\frac{g(t)^2}{2}\cdot\alpha\cdot\frac{\int\rho^\alpha\|\nabla\log\rho\|^2\pi^b\,{\rm d}\vx}{\int\rho^\alpha\pi^b\,{\rm d}\vx} = -\frac{g(t)^2}{2}\cdot\alpha\cdot\E_{\pi^{(\alpha)}}[\|\nabla\log\rho\|^2]$, which is Eq.~\eqref{eq:debruijn}.
\end{proof}

\subsection{Error induced by the surrogate}
\label{app:positive-policy-error}

We quantify the error introduced by replacing the reward-specific positive policy
$\pi_{i}^+$ with a common external positive policy $\pi^+$ when estimating the
reward-specific temporal weights. The main result shows that this approximation
is stable when $\pi^+$ is close to $\pi_{i}^+$ in KL divergence. Throughout this
subsection we fix a prompt $\vc$, suppress it from the notation, and take
$\alpha>1$ as in the main text.

\paragraph{Bounding the discriminability error.}
Assume the reward is bounded, $0\leq r_i(\vx_0)\leq R_i$ with $Z_i>0$, so that
$0\leq\rho_{i,t}(\vx_t)\leq M_i:=R_i/Z_i$ for every $t$, and define the policy
mismatch $\epsilon_i:=\mathrm{KL}\big(\pi^+\,\|\,\pi_{i}^+\big)$. Because
$\pi_t^+$ and $\pi_{i,t}^+$ are obtained by applying the same diffusion kernel,
the data-processing inequality gives
$\mathrm{KL}\big(\pi_t^+\,\|\,\pi_{i,t}^+\big)\leq\epsilon_i$, and Pinsker's
inequality then gives
$\mathrm{TV}\big(\pi_t^+,\pi_{i,t}^+\big)\leq\sqrt{\epsilon_i/2}$.
Since $f_{i,t}:=\rho_{i,t}^{\alpha-1}$ takes values in $[0,M_i^{\alpha-1}]$,
\begin{equation}
\Big|\E_{\pi_t^+}[f_{i,t}]-\E_{\pi_{i,t}^+}[f_{i,t}]\Big|
\;\leq\;
M_i^{\alpha-1}\,\mathrm{TV}\big(\pi_t^+,\pi_{i,t}^+\big)
\;\leq\;
M_i^{\alpha-1}\sqrt{\frac{\epsilon_i}{2}}
\;=:\;\eta_i.
\label{eq:expectation-error}
\end{equation}
Moreover $\E_{\pi_{i,t}^+}[f_{i,t}]=\exp\big((\alpha-1)D_{i,t}\big)\geq1$.
Write $\widetilde D_{i,t}:=\frac{1}{\alpha-1}\log\E_{\pi_t^+}[f_{i,t}]$ for the
surrogate log-moment, whose differences give the surrogate gain of
Eq.~\eqref{eq:surrogate-gain}. Whenever $\eta_i<1$,
Eq.~\eqref{eq:expectation-error} yields
\begin{equation}
\big|\widetilde D_{i,t}-D_{i,t}\big|
\;\leq\;
b_i:=\frac{-\log(1-\eta_i)}{\alpha-1}
\;=\;
\frac{M_i^{\alpha-1}}{\alpha-1}\sqrt{\frac{\epsilon_i}{2}}+O(\epsilon_i),
\label{eq:divergence-error}
\end{equation}
so the surrogate divergence is accurate to $O\big(\sqrt{\epsilon_i}\big)$.

\paragraph{Error in the per-timestep gain.}
Applying Eq.~\eqref{eq:divergence-error} at the two adjacent timesteps of the
gain $\Delta D_{i,t}=D_{i,t-1}-D_{i,t}$ and of its surrogate
$\widetilde{\Delta D}_{i,t}=\widetilde D_{i,t-1}-\widetilde D_{i,t}$ gives
\begin{equation}
\big|\widetilde{\Delta D}_{i,t}-\Delta D_{i,t}\big|
\;\leq\;
2b_i
\;\lesssim\;
\frac{\sqrt{2}\,M_i^{\alpha-1}}{\alpha-1}\sqrt{\epsilon_i},
\label{eq:gain-error-small}
\end{equation}
where the final comparison holds in the small-mismatch regime.

\paragraph{Error after normalizing the temporal profile.}
The allocation uses the mean-normalized gain of Eq.~\eqref{eq:gain-bar},
$\overline{\Delta D}_{i,t}=T\,\Delta D_{i,t}/G_i$ with
$G_i:=\sum_t\Delta D_{i,t}$, and its surrogate
$\widetilde{\overline{\Delta D}}_{i,t}:=T\,\widetilde{\Delta D}_{i,t}/\widetilde G_i$.
Because the gains telescope, $G_i=D_{i,0}-D_{i,T}$ and
$\widetilde G_i=\widetilde D_{i,0}-\widetilde D_{i,T}$, so
$|\widetilde G_i-G_i|\leq2b_i$. If $G_i>2b_i$, then for every $t$
\begin{equation}
\Big|\widetilde{\overline{\Delta D}}_{i,t}-\overline{\Delta D}_{i,t}\Big|
\leq
T\,\frac{|\widetilde{\Delta D}_{i,t}-\Delta D_{i,t}|}{\widetilde G_i}
+\overline{\Delta D}_{i,t}\,\frac{|G_i-\widetilde G_i|}{\widetilde G_i}
\leq
\frac{4T\,b_i}{G_i-2b_i}
=
O\!\left(\frac{T\,M_i^{\alpha-1}}{(\alpha-1)\,G_i}\sqrt{\epsilon_i}\right).
\label{eq:normalized-profile-error}
\end{equation}

\paragraph{Propagation to Sinkhorn weights.}
Finally, consider the entropy-regularized allocation, written with the cost
notation of Eq.~\eqref{eq:ot-form},
\begin{equation}
\bm{W}(\bm{C})
=
\argmin_{\bm{W}\in\mathcal U(\bm{\lambda},\bm{\nu})}
\;\sum_{i,t}\Big[\,C_{i,t}\,W_{i,t}+\tau\,W_{i,t}\log W_{i,t}\,\Big],
\qquad
C_{i,t}:=-\overline{\Delta D}_{i,t},
\label{eq:sinkhorn-opt}
\end{equation}
where $\mathcal U(\bm{\lambda},\bm{\nu})$ fixes the reward marginals to
$\bm{\lambda}$ and the timestep marginals to $\bm{\nu}$, $\tau>0$ is the
Sinkhorn temperature, and $B:=\sum_i\lambda_i=\sum_t\nu_t$ is the total
transported mass; at $\nu_t\equiv1/T$ and $\tau=1$, Eq.~\eqref{eq:sinkhorn-opt}
is exactly Eq.~\eqref{eq:ot-form}, with feasible set $\mathcal U(\bm{\lambda})$
of Eq.~\eqref{eq:polytope} and solution $\bm{W}^\star$. Let
$\bm{W}^\star:=\bm{W}(\bm{C})$ and
$\widetilde{\bm{W}}:=\bm{W}(\widetilde{\bm{C}})$, where
$\widetilde{\bm{C}}:=\big(-\widetilde{\overline{\Delta D}}_{i,t}\big)_{i,t}$ is
the cost built from the surrogate gains. The entropic term is
$\tau/B$-strongly convex with respect to the $\ell_1$ norm on measures of mass
$B$, so the optimality conditions for Eq.~\eqref{eq:sinkhorn-opt} at
$\bm{W}^\star$ and $\widetilde{\bm{W}}$ give
$\frac{\tau}{B}\|\widetilde{\bm{W}}-\bm{W}^\star\|_1^2
\leq\langle\bm{C}-\widetilde{\bm{C}},\,\widetilde{\bm{W}}-\bm{W}^\star\rangle$,
and H\"older's inequality then yields the stability estimate
$\|\widetilde{\bm{W}}-\bm{W}^\star\|_1\leq\frac{B}{\tau}\|\widetilde{\bm{C}}-\bm{C}\|_\infty$.
Since $\|\widetilde{\bm{C}}-\bm{C}\|_\infty$ is bounded by
Eq.~\eqref{eq:normalized-profile-error}, the end-to-end bound for sufficiently
small policy mismatch is
\begin{equation}
{
\|\widetilde{\bm{W}}-\bm{W}^\star\|_1
\;\leq\;
\frac{4BT}{\tau}\max_i\frac{b_i}{G_i-2b_i}
\;=\;
O\!\left(\frac{BT}{\tau}\max_i\frac{M_i^{\alpha-1}}{(\alpha-1)\,G_i}
\sqrt{\mathrm{KL}\big(\pi^+\|\pi_{i}^+\big)}\right).
}
\label{eq:weight-final-asymptotic}
\end{equation}
Eq.~\eqref{eq:weight-final-asymptotic} gives a direct interpretation of the
approximation: a shared positive policy produces similar temporal weights
whenever it is close to the reward-specific positive policy, with the square
root coming from Pinsker's inequality. Rewards with larger total
discriminability $G_i$ are less sensitive to the approximation, while a smaller
Sinkhorn temperature $\tau$ makes the final allocation more sensitive to errors
in the estimated temporal profile.

\section{Gain estimation: algorithm, \texorpdfstring{$\alpha$}{alpha}-sensitivity, and curve details}
\label{app:estimation}

\subsection{Algorithm}
\label{app:estimation-alg}

Alg.~\ref{alg:estimation-full} writes out Alg.~\ref{alg:estimation} in full, separated into its three phases, and implements the estimator of Sec.~\ref{sec:estimation}. Phase 1 forward-noises clean $\pi^+$ samples to obtain $\vx_t\sim\pi_t^+$; Phase 2 rolls out under $\pi^{\mathrm{old}}$ to estimate the unnormalized conditional reward at each $\vx_t$; Phase 3 differences the resulting log-moments, at which point the marginal $\E^{\mathrm{old}}[r_i\mid\vc]$ drops out and never has to be formed, leaving the surrogate gain of Eq.~\eqref{eq:surrogate-gain}.

\begin{algorithm}[!htbp]
\caption{Estimation of per-step R\'enyi discriminability gain proxies $\widehat{\widetilde{\Delta D}_{i,t}}$ from $\pi^+$ (the expanded form of Alg.~\ref{alg:estimation})}
\label{alg:estimation-full}
\begin{algorithmic}[1]
\Require Prompt $\vc$, current policy $\pi^{\mathrm{old}}$, external high-reward generator approximating $\pi^+(\vx_0\mid\vc)$, rewards $r_1,\dots,r_m$, R\'enyi order $\alpha>0$ with $\alpha\neq1$, number of $\pi^+$ samples $N$, number of rollouts $K$, noise schedule $\{\sigma_t\}_{t=0}^T$
\Ensure Per-step gain proxies $\widehat{\widetilde{\Delta D}_{i,t}}$ for each reward $i$ and timestep $t$
\State \textbf{Phase 1: Sample $\vx_t \sim \pi_t^+$ by forward-noising clean $\pi^+$ samples}
\For{$n = 1, \dots, N$}
    \State Draw clean sample $\widetilde{\vx}_0^{(n)} \sim \pi^+(\cdot\mid\vc)$ from the stronger external generator
    \For{each timestep $t \in \{0, 1, \dots, T\}$}
        \State Sample $\vepsilon_t^{(n)} \sim \mathcal N(0, I)$ and set $\vx_t^{(n)} \gets (1-\sigma_t)\,\widetilde{\vx}_0^{(n)} + \sigma_t\,\vepsilon_t^{(n)}$ \Comment{$\vx_t^{(n)}\sim\pi_t^+$}
    \EndFor
\EndFor
\State \textbf{Phase 2: Rollouts under $\pi^{\mathrm{old}}$ from each $\vx_t^{(n)}$ to estimate the conditional reward}
\For{$n = 1, \dots, N$}
    \For{each timestep $t \in \{0, 1, \dots, T\}$}
        \If{$t = 0$} \Comment{$\widetilde{\vx}_0^{(n)}$ is already clean: no rollout, single deterministic evaluation}
            \State $\widetilde\vx_0^{(n,0,1)} \gets \widetilde{\vx}_0^{(n)}$; evaluate $r_i(\widetilde\vx_0^{(n,0,1)}, \vc)$ for all $i$
        \Else
            \For{$k = 1, \dots, K$}
                \State Run independent rollout from $\vx_t^{(n)}$ to completion under $\pi^{\mathrm{old}}$: $\vx_t^{(n)} \to \vx_0^{(n,t,k)}$
                \State Evaluate $r_i(\vx_0^{(n,t,k)}, \vc)$ for all rewards $i$
            \EndFor
        \EndIf
    \EndFor
\EndFor
\State \textbf{Phase 3: Per-step gains, with denominator cancellation}
\For{each reward $i$}
    \For{each timestep $t \in \{0,1,\dots,T\}$}
        \For{$n = 1, \dots, N$}
            \State $\hat\mu_{i,t}^{(n)} \gets \frac{1}{K_t}\sum_{k=1}^{K_t} r_i(\vx_0^{(n,t,k)}, \vc)$, with $K_t = 1$ if $t = 0$ else $K$ \Comment{Conditional reward at $\vx_t^{(n)}$}
        \EndFor
        \State $\widehat{\widetilde D_{i,t}} \gets \frac{1}{\alpha-1}\log\!\Big(\max\{10^{-10},\frac{1}{N}\sum_{n=1}^N \big[\hat\mu_{i,t}^{(n)}\big]^{\alpha-1}\}\Big)$ \Comment{Surrogate log-moment, with a numerical floor}
    \EndFor
    \For{$t = 1, \dots, T$}
        \State $\widehat{\widetilde{\Delta D}_{i,t}} \gets \big[\widehat{\widetilde D_{i,t-1}} - \widehat{\widetilde D_{i,t}}\big]_+$ \Comment{Eq.~\eqref{eq:surrogate-gain}, clipped as in Sec.~\ref{sec:sinkhorn}}
    \EndFor
\EndFor
\end{algorithmic}
\end{algorithm}

\subsection{\texorpdfstring{$\alpha$}{alpha}-sensitivity of the R\'enyi estimator}
\label{app:alpha-sweep}

To analyze the influence of $\alpha$, we sweep $\alpha\in\{0.25,\,0.5,\,0.75,\,1.5,\,2,\,4,\,8\}$ and plot per-step gain, mean-normalized gain, and the Sinkhorn matrix they project to in Fig.~\ref{fig:renyi-multialpha}. The sweep covers the five rewards estimated jointly on the shared $T{=}10$ grid. For $0<\alpha<1$ the exponent $(\alpha-1)$ is negative, so the log-moment is dominated by the lower tail of $\hat\mu_{i,t}$ rather than the upper one. Of the two guarantees we rely on, only one needs $\alpha>1$: Theorem~\ref{thm:debruijn} gives $\Delta D_{i,t}\ge0$ for every $\alpha>0$ with $\alpha\neq1$, whereas $D_\alpha$ upper-bounds the KL only above $1$. Below $1$ the gain therefore keeps its monotonicity but loses its link to the original objective, and we report these orders only to show what the estimator does outside its intended range. Peak locations are stable through the middle of the range and move only at its ends: \textsc{Aesthetic} peaks at the clean end for $\alpha\le 0.75$ but near the noisy end for $\alpha\ge 1.5$, and at $\alpha{=}8$ \textsc{HPSv2} and \textsc{PickScore} lose their own peaks closer to clean and collapse onto \textsc{ImageReward}'s at $t{=}9$. Raising $\alpha$ also localizes the gains: \textsc{ImageReward}'s normalized peak grows from $1.61\times$ its own mean at $\alpha{=}0.25$ to $2.12\times$ at $\alpha{=}2$ and $3.25\times$ at $\alpha{=}8$. The projected weights follow, and their excursions from uniform are mildest in the intermediate range, spanning $0.29$ to $1.60$ times $1/(mT)$ at $\alpha{=}2$ against $0.24$ to $1.89$ at $\alpha{=}0.25$ and $0.41$ to $2.24$ at $\alpha{=}8$, which is why we default to $\alpha=2$ in the main text.

\begin{figure}[!htbp]
\centering
\includegraphics[width=\linewidth]{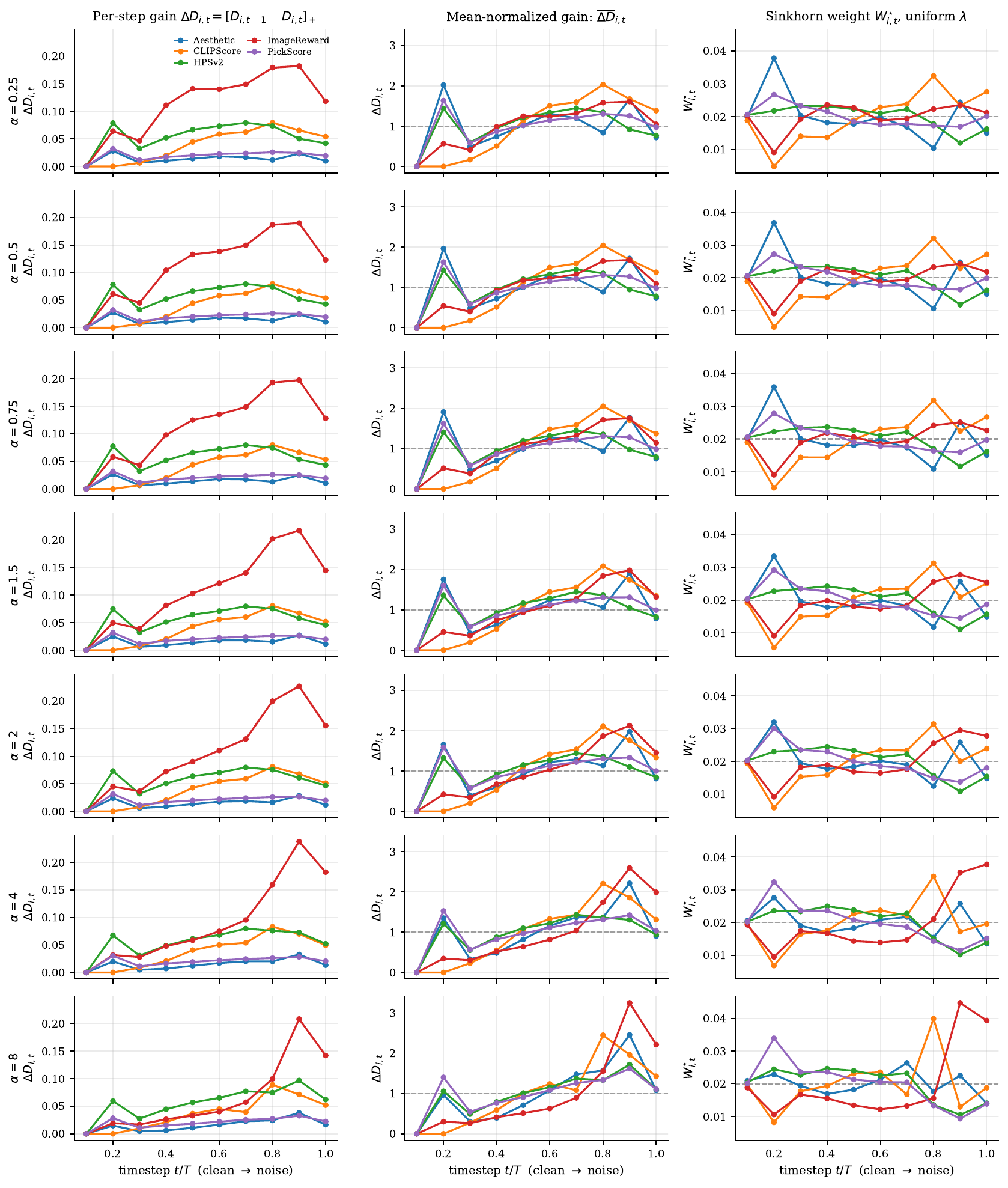}
\caption{$\alpha$-sensitivity of the $\pi^+$-sampling R\'enyi estimator on the SD3.5-Medium run of Sec.~\ref{sec:empirical-omega}. \textbf{Rows:} $\alpha\in\{0.25,\,0.5,\,0.75,\,1.5,\,2,\,4,\,8\}$. \textbf{Columns:} the same three quantities as Fig.~\ref{fig:renyi-omega}, per-step gain $\Delta D_{i,t}$ (left), mean-normalized gain $\overline{\Delta D}_{i,t}$ (Eq.~\eqref{eq:gain-bar}, center), and the Sinkhorn matrix $W_{i,t}^\star$ (Eq.~\eqref{eq:sinkhorn-form}) at uniform $\lambda_i{=}1/m$ (right).}
\label{fig:renyi-multialpha}
\end{figure}

\subsection{\texorpdfstring{$\pi^+$}{pi+} vs.\ \texorpdfstring{$\pi^{\mathrm{old}}$}{pi\^{}old} self-rollout gain curves}
\label{app:psampling-ablation}

In this section, we validate the design of $\pi^+$ sampling. To this end, we run the identical estimator and Sinkhorn recipe with $\pi^{\mathrm{old}}$ self-rollouts in place of $\pi^+$ samples, replacing the exponent $\alpha-1$ by $\alpha$ so that both runs target the same $D_{i,t}$, since $\E_{\pi_t^{\mathrm{old}}}[\rho_{i,t}(\vx_t)^\alpha]=\E_{\pi_{i,t}^+}[\rho_{i,t}(\vx_t)^{\alpha-1}]$ (Sec.~\ref{sec:estimation}), holding the training stage-OCR prompt set, the four training rewards (\textsc{ClipScore}, \textsc{HPSv2}, \textsc{OCR}, \textsc{PickScore}), $\alpha{=}2$ and $T{=}25$ fixed, so the two runs differ only in the sampling distribution. As shown in Fig.~\ref{fig:renyi-omega-combined}, the $\pi^+$-sampled $\Delta D$ curves (top) are smooth and single-peaked, with the peak location differing by reward, while the $\pi^{\mathrm{old}}$-sampled curves are jagged and non-monotonic step-to-step, with per-step gain repeatedly spiking and collapsing back to the zero clip. 
The leftmost column makes the same point quantitatively. At the clean end the $\pi^{\mathrm{old}}$ estimate gives $D_{i,t}$ of $0.016$, $0.020$, and $0.001$ for \textsc{ClipScore}, \textsc{HPSv2}, and \textsc{PickScore}, against $0.245$, $0.501$, and $0.209$ under $\pi^+$: a $15$--$200\times$ collapse, and precisely the $\rho_{i,t}\approx1$, $D_{i,t}\approx0$ degeneracy anticipated in Sec.~\ref{sec:estimation}. \textsc{OCR} is the one reward that retains appreciable discriminability, but its $\Delta D$ curve is unstable in $t$.
The Sinkhorn matrices built from these noisy curves (right column) are correspondingly noisy themselves, oscillating step-to-step rather than smoothly redistributing weight toward each reward's high-gain region. In short: $\pi^+$ sampling is what makes the estimated curve usable as a Sinkhorn kernel input.

\paragraph{The jaggedness is not an artifact of the budget.}
A natural objection is that the $\pi^{\mathrm{old}}$ curves are jagged only because the estimation budget is small. The bottom row of Fig.~\ref{fig:renyi-omega-combined} rules this out: it recomputes the same $\pi^{\mathrm{old}}$ run at $N{=}32$ and $K{=}64$, a $4\times$ increase in both the samples per prompt and the rollouts per $\vx_t$ and hence $16\times$ the rollout cost, so the two $\pi^{\mathrm{old}}$ rows differ in $(N,K)$ alone. 
The $\Delta D$ curves stay just as jagged, so the shape the Sinkhorn kernel would consume is not yet stable. The extra budget therefore almost does not improve smoothness. $\pi^+$ sampling instead reaches a usable curve at $1/16$ of the rollout cost of this row.

\begin{figure}[!htbp]
\centering
\includegraphics[width=\linewidth]{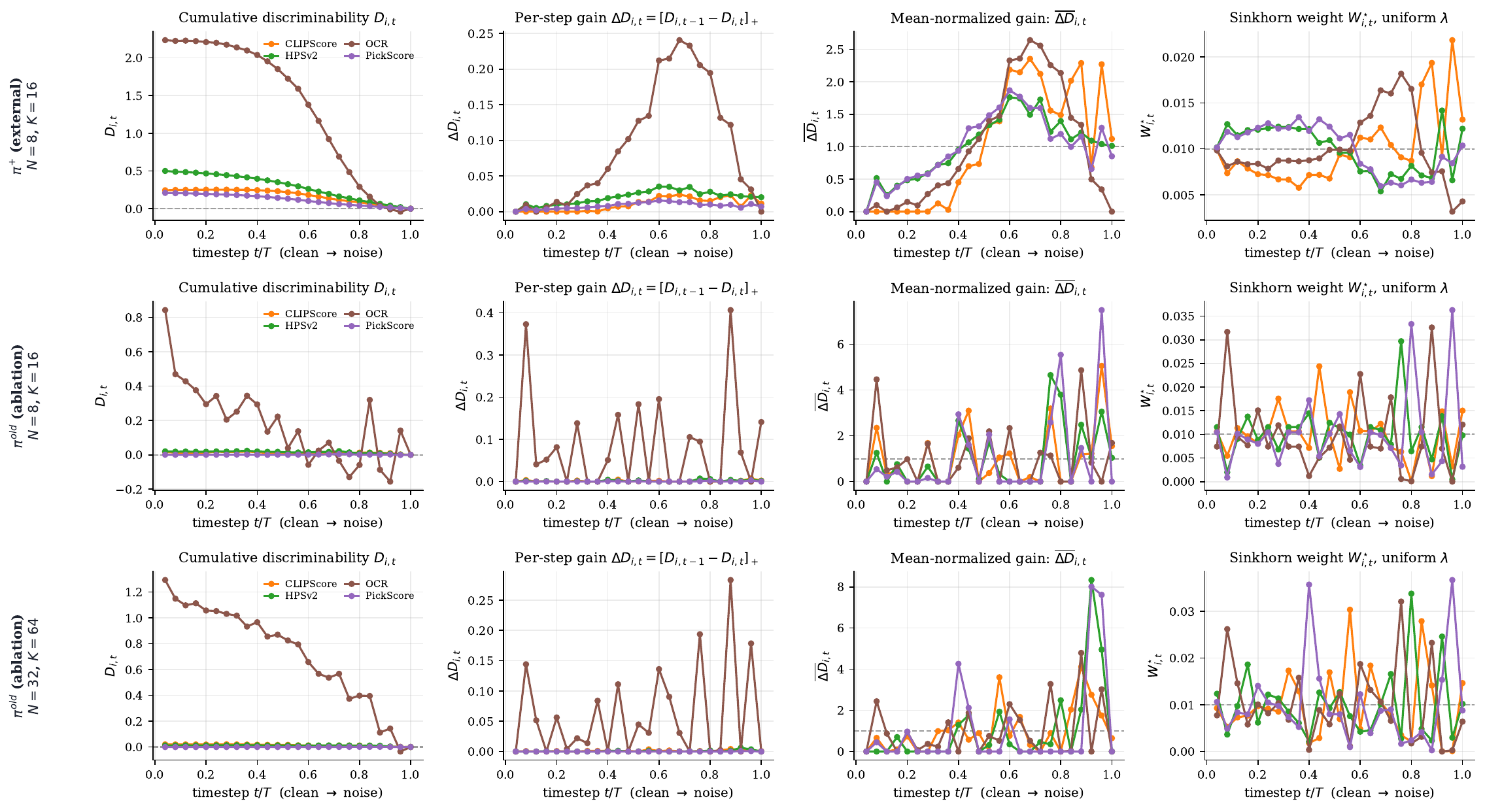}
\caption{Cumulative discriminability $D_{i,t}$, per-step gain $\Delta D_{i,t}$, mean-normalized gain $\overline{\Delta D}_{i,t}$, and Sinkhorn matrix $W_{i,t}^\star$ at uniform $\lambda_i{=}1/m$, left to right in the order they are derived, under the two sampling distributions. The leftmost column is the estimator's own $D_{i,t}$, shifted by the single $t$-independent constant that sets $D_{i,T}{=}0$ at pure noise, where $\pi_{i,T}^+{=}\pi_T^{\mathrm{old}}$; it is not a cumulative sum of the clipped gains beside it. Under $\pi^{\mathrm{old}}$ almost all of the estimated discriminability sits on \textsc{OCR}, the other three rewards coming out nearly indiscriminable at every step, and the \textsc{OCR} curve is not even monotone in $t$: it dips below zero, which Theorem~\ref{thm:debruijn} rules out in population. \textbf{Top:} $\pi^+$ samples from \textsc{GPT Image 1.5}, at the $N{=}8$, $K{=}16$ budget used throughout the paper. \textbf{Middle:} $\pi^{\mathrm{old}}$ self-rollouts at that same budget. \textbf{Bottom:} the same $\pi^{\mathrm{old}}$ run at the full $N{=}32$, $K{=}64$ estimation budget, $16\times$ the rollout cost. All three rows are matched estimator runs on the training stage-OCR curve: same four training rewards (\textsc{ClipScore}, \textsc{HPSv2}, \textsc{OCR}, \textsc{PickScore}), same prompt set, same $\alpha{=}2$ and $T{=}25$; the top two differ only in the sampling distribution, the bottom two only in $(N,K)$. Both $\pi^{\mathrm{old}}$ rows are visibly noisier at every step than the $\pi^+$ row, consistent with the higher-variance $\pi^{\mathrm{old}}$-side estimator, and the $16\times$ budget does not smooth them.}
\label{fig:renyi-omega-combined}
\end{figure}

\subsection{Sensitivity to the \texorpdfstring{$\pi^+$}{pi+} surrogate generator}
\label{app:surrogate-sensitivity}

The curves above approximate $\pi^+$ by a single external generator, \textsc{GPT Image 1.5}. To test how much the resulting weights depend on the specific $\pi^+$, we regenerate the training stage-OCR $\pi^+$ image set with an independent generator, \textsc{Nano Banana Pro} (\textsc{Gemini 3 Pro Image}), same prompts and $N{=}8$ images per prompt, and rerun the identical estimator. Tab.~\ref{tab:surrogate-sensitivity} compares the two per-step curves reward-by-reward. The curves are highly correlated (Pearson $r\in[0.84,0.92]$, Spearman $\rho\in[0.82,0.92]$) and peak within $0$--$2$ steps of each other. Fig.~\ref{fig:renyi-omega-surrogate} shows the two curves side by side: the per-step gains, mean-normalized gains, and the Sinkhorn matrices they project to are visually near-interchangeable. The gain curve, and hence the Sinkhorn weights built from it, is therefore largely a property of the reward and the trajectory, not of which strong external generator stands in for $\pi^+$.

\begin{table}[!htbp]
\centering
\small
\caption{Surrogate sensitivity of the training stage-OCR gain curve: \textsc{GPT Image 1.5} vs.\ \textsc{Nano Banana Pro} as the $\pi^+$ generator, all other estimator settings identical. $r$ and $\rho$ are Pearson and Spearman correlations between the two per-step curves over the $T{=}25$ steps. ``Peak $t$'' gives the argmax timestep of $\Delta D_{i,t}$ in the paper's convention ($t{=}1$ clean, $t{=}T$ pure noise); ``total gain'' is the ratio of $\sum_t \Delta D_{i,t}$ under \textsc{Nano Banana Pro} to that under \textsc{GPT Image 1.5}.}
\label{tab:surrogate-sensitivity}
\begin{tabular}{@{}lcccc@{}}
\toprule
Reward & Pearson $r$ & Spearman $\rho$ & Peak $t$ & Total gain \\
\midrule
\textsc{ClipScore} & $0.879$ & $0.912$ & $17 \to 19$ & $0.94\times$ \\
\textsc{HPSv2} & $0.847$ & $0.829$ & $15 \to 16$ & $0.81\times$ \\
\textsc{OCR} & $0.862$ & $0.844$ & $17 \to 17$ & $0.92\times$ \\
\textsc{PickScore} & $0.919$ & $0.915$ & $15 \to 16$ & $0.84\times$ \\
\bottomrule
\end{tabular}
\end{table}

\begin{figure}[!htbp]
\centering
\includegraphics[width=\linewidth]{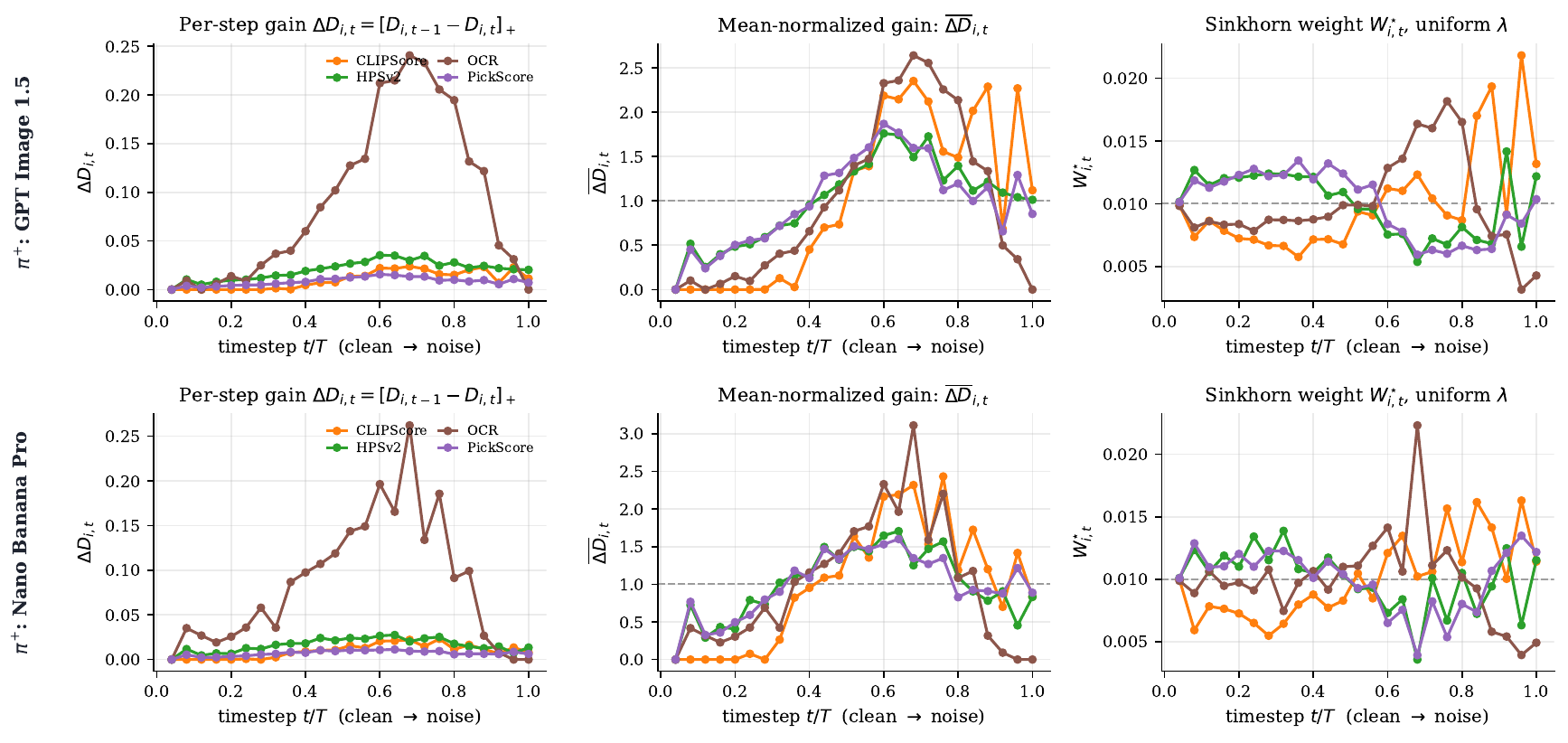}
\caption{Per-step gain $\Delta D_{i,t}$, mean-normalized gain $\overline{\Delta D}_{i,t}$, and Sinkhorn matrix $W_{i,t}^\star$ at uniform $\lambda_i{=}1/m$, for the training stage-OCR curve under the two $\pi^+$ surrogate generators. \textbf{Top:} \textsc{GPT Image 1.5}. \textbf{Bottom:} \textsc{Nano Banana Pro}. The two rows produce near-identical reward-specific shapes and peak orderings (quantified in Tab.~\ref{tab:surrogate-sensitivity}).}
\label{fig:renyi-omega-surrogate}
\end{figure}

\subsection{Closing the divergence to \texorpdfstring{$\pi^+$}{pi+} over a training stage}
\label{app:curve-drift}

In this section, we measure how much of the divergence the curve is built from survives training against it. We rerun the identical training stage-OCR estimator with $\pi^{\mathrm{old}}$ set to checkpoint-$180$ of the $\bm{\lambda}{=}(1,1,1,1)$ \ourmethod{} stage-OCR run, keeping the \textsc{GPT Image 1.5} $\pi^+$ samples and every other estimator setting fixed, so the policy the gains are measured at is the only thing that changes.

By Eq.~\eqref{eq:telescope}, the total gain $\sum_t \Delta D_{i,t}$ is the data-level divergence $D_\alpha\!\big(\pi_{i,0}^+\,\|\,\pi_0^{\mathrm{old}}\big)$ still separating the policy from the reward-induced positive distribution, so it reads directly as how much of reward $i$'s gap is left to close. Tab.~\ref{tab:curve-drift} shows that by checkpoint-$180$ almost none of it is: the total gain falls to $0.14\times$ its base-policy value for \textsc{ClipScore} and to $0.01$--$0.11\times$ for the other three rewards, and to $0.03\times$ summed over all four. Fig.~\ref{fig:renyi-omega-drift} shows the same collapse in $D_{i,t}$ itself: on a shared vertical axis, the trained policy's curves sit near zero across the whole trajectory. Because R\'enyi divergence is nondecreasing in $\alpha$, the $\alpha{=}2$ quantity we measure upper-bounds $\mathrm{KL}\!\big(\pi_{i,0}^+\,\|\,\pi_0^{\mathrm{old}}\big)$, so the KL gap to the positive distribution is closed at least as far. Training under the reweighted objective therefore closes almost all of the gap the curve is built from, and it does so for all four rewards simultaneously rather than closing one reward's gap at another's expense.

\begin{table}[!htbp]
\centering
\small
\caption{Total discriminability gain $\sum_t \Delta D_{i,t}$ of the training stage-OCR curve with $\pi^{\mathrm{old}}$ at the untrained base policy vs.\ at the trained checkpoint-$180$, with the \textsc{GPT Image 1.5} $\pi^+$ samples and all other estimator settings identical. By Eq.~\eqref{eq:telescope} each entry is the data-level divergence remaining between that policy and the reward's positive distribution.}
\label{tab:curve-drift}
\begin{tabular}{@{}lccc@{}}
\toprule
Reward & Base policy & ckpt-$180$ & Remaining \\
\midrule
\textsc{ClipScore} & $0.252$ & $0.036$ & $0.14\times$ \\
\textsc{HPSv2} & $0.501$ & $0.024$ & $0.05\times$ \\
\textsc{OCR} & $2.278$ & $0.028$ & $0.01\times$ \\
\textsc{PickScore} & $0.209$ & $0.024$ & $0.11\times$ \\
\midrule
All four & $3.241$ & $0.113$ & $0.03\times$ \\
\bottomrule
\end{tabular}
\end{table}

\begin{figure}[!htbp]
\centering
\includegraphics[width=\linewidth]{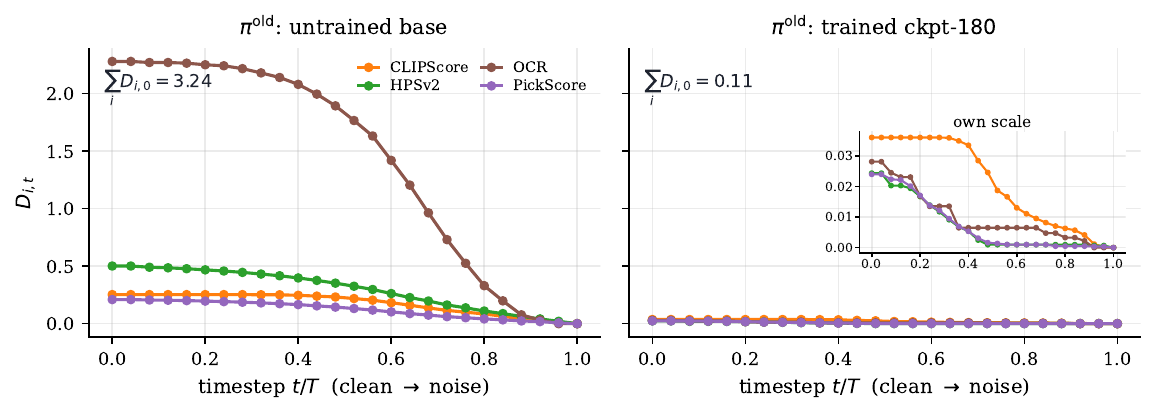}
\caption{Cumulative discriminability $D_{i,t}$ of the training stage-OCR curve at the two policies, with $\pi^+$ set to \textsc{GPT Image 1.5} and all other estimator settings identical. \textbf{Left:} $\pi^{\mathrm{old}}$ at the untrained base. \textbf{Right:} $\pi^{\mathrm{old}}$ at checkpoint-$180$ of the $\bm{\lambda}{=}(1,1,1,1)$ \ourmethod{} stage-OCR run, drawn on the same vertical axis; the inset repeats it at its own scale. Each curve telescopes the per-step gains up from $D_{i,T}=0$ (Eq.~\eqref{eq:telescope}), so its clean endpoint $D_{i,0}$ is the full data-level divergence between that policy and reward $i$'s positive distribution. Training leaves $D_{i,t}$ smaller at every timestep, for every reward.}
\label{fig:renyi-omega-drift}
\end{figure}

\section{Sinkhorn projection: solver and alternatives}
\label{app:sinkhorn-solver}

\subsection{The log-domain iteration}
\label{app:sinkhorn-alg}

Alg.~\ref{alg:sinkhorn} is the solver of Eq.~\eqref{eq:sinkhorn-problem}. It normalizes the gains per reward, exponentiates them into the kernel, and alternates the two marginal updates of Eq.~\eqref{eq:sinkhorn-iter} in the log domain, which keeps the scaling vectors stable when a kernel entry is tiny. Convergence is linear and in practice takes fewer than $100$ iterations at $m\leq4$, $T\leq25$; the cost is negligible next to a single training step, and the matrix is computed once per $\bm{\lambda}$ before training.

\begin{algorithm}[!htbp]
\caption{Sinkhorn construction of the reward-by-timestep weight matrix $\bm{W}$}
\label{alg:sinkhorn}
\begin{algorithmic}[1]
\Require Per-step gains $\{\Delta D_{i,t}\}$ (Alg.~\ref{alg:estimation}, Sec.~\ref{sec:estimation}), inter-reward budget $\bm{\lambda}\in\Delta^{m-1}$, tolerance $\varepsilon_{\mathrm{tol}}$, max iterations $L$
\Ensure weight matrix $\bm{W}\in\mathcal U(\bm{\lambda})$, used in the loss as $T\!\cdot\!W_{i,t}$
\For{each reward $i$}
    \State $\overline{\Delta D}_{i,t} \gets \Delta D_{i,t} \,/\, \big(\tfrac1T\sum_s \Delta D_{i,s}\big)$ \Comment{Eq.~\eqref{eq:gain-bar}: per-reward mean-normalize}
    \State $\log K_{i,t} \gets \overline{\Delta D}_{i,t}$ \Comment{Eq.~\eqref{eq:kernel}; $\log K_{i,t}\gets0$ if reward $i$ has no gain estimate}
\EndFor
\State $\log \va \gets \vzero_m$, \; $\log \vb \gets \vzero_T$
\For{$\ell = 1,\dots,L$}
    \State $\log a_i \gets \log\lambda_i - \operatorname{logsumexp}_t\!\big(\log K_{i,t} + \log b_t\big)$ \Comment{row marginals $\to\lambda_i$}
    \State $\log b_t \gets -\log T - \operatorname{logsumexp}_i\!\big(\log K_{i,t} + \log a_i\big)$ \Comment{column marginals $\to 1/T$}
    \State $W_{i,t} \gets \exp\!\big(\log a_i + \log K_{i,t} + \log b_t\big)$
    \State \textbf{break if} $\max_i\big|\sum_t W_{i,t} - \lambda_i\big| < \varepsilon_{\mathrm{tol}}$ \textbf{and} $\max_t\big|\sum_i W_{i,t} - \tfrac1T\big| < \varepsilon_{\mathrm{tol}}$
\EndFor
\State \Return $\bm{W}$
\end{algorithmic}
\end{algorithm}

\subsection{Boundary cases}
\label{app:sinkhorn-cases}

Our construction sits between the alternatives one might have used instead, and each of them is instructive.

\paragraph{A flat kernel: the static baseline.} If no gain curve has any shape in $t$, then $\overline{\Delta D}_{i,t}\equiv1$ for every reward, so $K_{i,t}\equiv e$ is rank one and the entropic optimum over $\mathcal U(\bm{\lambda})$ is $W_{i,t}^\star = \lambda_i/T$, which is the standard static convex combination.

\paragraph{No entropic term: hard assignment.} Dropping the entropy from Eq.~\eqref{eq:ot-form} leaves an unregularized transport LP, whose solutions are vertices of $\mathcal U(\bm{\lambda})$: each timestep's budget goes essentially to the single reward with the largest normalized gain there. Committing that hard to argmax gains estimated from finite samples is brittle, so we keep the entropic term, which is what makes the solution the smooth, strictly positive matrix of Eq.~\eqref{eq:sinkhorn-form}.

\paragraph{An uncurved reward.} Because Eq.~\eqref{eq:gain-bar} normalizes each reward to mean $1$, the kernel needs no scale or sharpness parameter, and Sec.~\ref{sec:empirical-omega}'s shapes give smooth, non-degenerate matrices. If a reward has no estimated curve at all, we set $K_{i,t}=1$ for every $t$: it still receives its full row budget $\lambda_i$, spread uniformly up to the shared $b_t$. 

\section{Per-\texorpdfstring{$\bm{\lambda}$}{lambda} results}
\label{app:preagg-geneval}

Tabs.~\ref{tab:trainreward} and~\ref{tab:panels} pool the five budgets; this appendix resolves both of them per $\bm{\lambda}$, static vs.\ \ourmethod{} at matched $\bm{\lambda}$, every entry the mean over the $3$ seed replicates with its standard deviation as a subscript. Each budget also carries a Pareto verdict: \oursshort{}~$\succ$~static if the \ourmethod{} mean is at least as high as the static mean on \emph{every} reward in that table and strictly higher on at least one, static~$\succ$~\oursshort{} if static (weakly) dominates \ourmethod{} in the same sense, \emph{mixed} otherwise. Every cell scores the run's final checkpoint.

\subsection{Training rewards}

Tabs.~\ref{tab:perlam-ocr} and~\ref{tab:preagg-geneval} give the per-$\bm{\lambda}$ training rewards behind Tab.~\ref{tab:trainreward}. In training stage-OCR (Tab.~\ref{tab:perlam-ocr}) \ourmethod{} Pareto-dominates static at $3$ of the $5$ budgets, namely $(1,1,1,1)$, $(1,1,1,2)$, and $(1,1,2,1)$, and is dominated at none; the aggregate is higher at $4$ of $5$. The two mixed budgets fail the test on the same coordinate: $(1,2,1,1)$ raises \textsc{ClipScore}, \textsc{HPSv2}, and \textsc{PickScore} while conceding \textsc{OCR} by $0.039$, and $(2,1,1,1)$ raises \textsc{ClipScore} and \textsc{PickScore}, ties \textsc{HPSv2}, and concedes \textsc{OCR} by $0.026$. This is the per-budget form of the pattern in Tab.~\ref{tab:trainreward}, where the target reward carries by far the largest seed variance of the four.

Training stage-GenEval (Tab.~\ref{tab:preagg-geneval}) is a tie budget by budget as well as on average. \ourmethod{} dominates at no $\bm{\lambda}$, static dominates at none, and all $5$ budgets are mixed; at every one of them the aggregate shifts by less than the seed standard deviation of either method there. The target \textsc{GenEval} reward moves in both directions across budgets, up at $3$ of $5$ and down at $(1,1,1,2)$ and $(1,2,1,1)$, so no budget in this setting resolves the two methods.

\subsection{Held-out judges}

Tabs.~\ref{tab:perlam-panel-ocr} and~\ref{tab:perlam-panel-geneval} resolve Tab.~\ref{tab:panels} the same way. In training stage-OCR (Tab.~\ref{tab:perlam-panel-ocr}) \ourmethod{} Pareto-dominates static across all four judges at $3$ of the $5$ budgets, $(1,1,2,1)$, $(1,2,1,1)$, and $(2,1,1,1)$, and is dominated at none. \textsc{ImageReward}, \textsc{Aesthetic}, and \textsc{HPSv3} improve at every budget without exception; the two mixed budgets, $(1,1,1,1)$ and $(1,1,1,2)$, concede only \textsc{UnifiedReward-2}, by $0.025$ and $0.017$. The generalization gain of Sec.~\ref{sec:exp-heldout} is therefore not carried by a subset of the budgets.

Training stage-GenEval (Tab.~\ref{tab:perlam-panel-geneval}) is the setting whose training rewards tie, and its judge scores favor \ourmethod{} at a majority of budgets: \ourmethod{} dominates at $(1,1,1,2)$, $(1,1,2,1)$, and $(1,2,1,1)$, static dominates at none, and $(1,1,1,1)$ and $(2,1,1,1)$ are mixed. The seed standard deviations here are up to an order of magnitude wider than in training stage-OCR, wide enough that the individual budgets are not resolved by three replicates even where the means separate; the pooled comparison in Tab.~\ref{tab:panels}, which averages the same runs, still favors \ourmethod{} on all four judges.

\begin{table}[!htbp]
\centering
\small
\setlength{\tabcolsep}{4pt}
\caption{Training stage-OCR, per-$\bm{\lambda}$ training rewards on the held-out dataset at each run's final checkpoint, static vs.\ \ourmethod{}. Every entry is the mean over the $3$ seed replicates with its standard deviation as a subscript, Aggregate $r$ is the $\bm{\lambda}$-weighted sum as in Tab.~\ref{tab:trainreward}, and bold marks the better method per cell, neither when the two agree at the reported precision. Pareto is the verdict over the four training rewards.}
\label{tab:perlam-ocr}
\begin{tabular}{@{}llcccccl@{}}
\toprule
$\bm{\lambda}$ & Method & Aggregate $r$ & \textsc{ClipScore} & \textsc{HPSv2} & \textsc{PickScore} & \textsc{OCR} & Pareto \\
\midrule
\multirow{2}{*}{$(1,1,1,1)$} & static & 2.387$_{\pm0.058}$ & 0.315$_{\pm0.003}$ & 0.288$_{\pm0.007}$ & 0.890$_{\pm0.002}$ & 0.894$_{\pm0.051}$ & \multirow{2}{*}{\oursshort{} $\succ$ static} \\
 & \oursshort{} & \textbf{2.435}$_{\pm0.058}$ & \textbf{0.321}$_{\pm0.001}$ & \textbf{0.306}$_{\pm0.006}$ & \textbf{0.900}$_{\pm0.004}$ & \textbf{0.908}$_{\pm0.055}$ &  \\
\addlinespace[2pt]
\multirow{2}{*}{$(1,1,1,2)$} & static & 3.251$_{\pm0.030}$ & 0.308$_{\pm0.004}$ & 0.255$_{\pm0.022}$ & 0.869$_{\pm0.008}$ & 0.910$_{\pm0.001}$ & \multirow{2}{*}{\oursshort{} $\succ$ static} \\
 & \oursshort{} & \textbf{3.383}$_{\pm0.031}$ & \textbf{0.319}$_{\pm0.004}$ & \textbf{0.290}$_{\pm0.022}$ & \textbf{0.887}$_{\pm0.011}$ & \textbf{0.944}$_{\pm0.006}$ &  \\
\addlinespace[2pt]
\multirow{2}{*}{$(1,1,2,1)$} & static & 3.259$_{\pm0.039}$ & 0.313$_{\pm0.003}$ & 0.296$_{\pm0.004}$ & 0.898$_{\pm0.002}$ & 0.854$_{\pm0.035}$ & \multirow{2}{*}{\oursshort{} $\succ$ static} \\
 & \oursshort{} & \textbf{3.338}$_{\pm0.043}$ & \textbf{0.320}$_{\pm0.004}$ & \textbf{0.306}$_{\pm0.008}$ & \textbf{0.903}$_{\pm0.005}$ & \textbf{0.907}$_{\pm0.056}$ &  \\
\addlinespace[2pt]
\multirow{2}{*}{$(1,2,1,1)$} & static & 2.709$_{\pm0.130}$ & 0.314$_{\pm0.005}$ & 0.299$_{\pm0.028}$ & 0.889$_{\pm0.010}$ & \textbf{0.908}$_{\pm0.060}$ & \multirow{2}{*}{mixed} \\
 & \oursshort{} & \textbf{2.720}$_{\pm0.081}$ & \textbf{0.318}$_{\pm0.002}$ & \textbf{0.314}$_{\pm0.008}$ & \textbf{0.906}$_{\pm0.001}$ & 0.869$_{\pm0.065}$ &  \\
\addlinespace[2pt]
\multirow{2}{*}{$(2,1,1,1)$} & static & \textbf{2.727}$_{\pm0.038}$ & 0.317$_{\pm0.001}$ & 0.302$_{\pm0.008}$ & 0.892$_{\pm0.001}$ & \textbf{0.900}$_{\pm0.030}$ & \multirow{2}{*}{mixed} \\
 & \oursshort{} & 2.723$_{\pm0.017}$ & \textbf{0.323}$_{\pm0.003}$ & 0.302$_{\pm0.007}$ & \textbf{0.901}$_{\pm0.005}$ & 0.874$_{\pm0.012}$ &  \\
\bottomrule
\end{tabular}
\end{table}

\begin{table}[!htbp]
\centering
\small
\setlength{\tabcolsep}{4pt}
\caption{Training stage-GenEval, per-$\bm{\lambda}$ training rewards on the held-out dataset at each run's final checkpoint, static vs.\ \ourmethod{}. Entries, subscripts, bolding, and the Pareto verdict are as in Tab.~\ref{tab:perlam-ocr}.}
\label{tab:preagg-geneval}
\begin{tabular}{@{}llcccccl@{}}
\toprule
$\bm{\lambda}$ & Method & Aggregate $r$ & \textsc{ClipScore} & \textsc{HPSv2} & \textsc{PickScore} & \textsc{GenEval} & Pareto \\
\midrule
\multirow{2}{*}{$(1,1,1,1)$} & static & 2.390$_{\pm0.003}$ & 0.298$_{\pm0.001}$ & \textbf{0.316}$_{\pm0.005}$ & \textbf{0.914}$_{\pm0.002}$ & 0.861$_{\pm0.002}$ & \multirow{2}{*}{mixed} \\
 & \oursshort{} & \textbf{2.392}$_{\pm0.022}$ & 0.298$_{\pm0.003}$ & 0.308$_{\pm0.007}$ & 0.909$_{\pm0.006}$ & \textbf{0.877}$_{\pm0.014}$ &  \\
\addlinespace[2pt]
\multirow{2}{*}{$(1,1,1,2)$} & static & \textbf{3.300}$_{\pm0.017}$ & 0.299$_{\pm0.001}$ & \textbf{0.309}$_{\pm0.004}$ & \textbf{0.908}$_{\pm0.008}$ & \textbf{0.892}$_{\pm0.007}$ & \multirow{2}{*}{mixed} \\
 & \oursshort{} & 3.290$_{\pm0.045}$ & 0.299$_{\pm0.000}$ & 0.307$_{\pm0.007}$ & 0.906$_{\pm0.005}$ & 0.889$_{\pm0.019}$ &  \\
\addlinespace[2pt]
\multirow{2}{*}{$(1,1,2,1)$} & static & 3.321$_{\pm0.026}$ & \textbf{0.298}$_{\pm0.001}$ & 0.319$_{\pm0.006}$ & \textbf{0.914}$_{\pm0.007}$ & 0.875$_{\pm0.014}$ & \multirow{2}{*}{mixed} \\
 & \oursshort{} & \textbf{3.322}$_{\pm0.014}$ & 0.297$_{\pm0.004}$ & \textbf{0.322}$_{\pm0.001}$ & 0.913$_{\pm0.004}$ & \textbf{0.877}$_{\pm0.009}$ &  \\
\addlinespace[2pt]
\multirow{2}{*}{$(1,2,1,1)$} & static & 2.715$_{\pm0.006}$ & 0.299$_{\pm0.002}$ & 0.313$_{\pm0.003}$ & \textbf{0.915}$_{\pm0.001}$ & \textbf{0.876}$_{\pm0.007}$ & \multirow{2}{*}{mixed} \\
 & \oursshort{} & 2.715$_{\pm0.021}$ & 0.299$_{\pm0.002}$ & \textbf{0.322}$_{\pm0.003}$ & 0.908$_{\pm0.003}$ & 0.865$_{\pm0.017}$ &  \\
\addlinespace[2pt]
\multirow{2}{*}{$(2,1,1,1)$} & static & \textbf{2.677}$_{\pm0.071}$ & \textbf{0.301}$_{\pm0.003}$ & \textbf{0.309}$_{\pm0.015}$ & 0.904$_{\pm0.008}$ & 0.863$_{\pm0.042}$ & \multirow{2}{*}{mixed} \\
 & \oursshort{} & 2.671$_{\pm0.029}$ & 0.300$_{\pm0.002}$ & 0.302$_{\pm0.013}$ & \textbf{0.906}$_{\pm0.004}$ & \textbf{0.864}$_{\pm0.012}$ &  \\
\bottomrule
\end{tabular}
\end{table}

\begin{table}[!htbp]
\centering
\small
\setlength{\tabcolsep}{4pt}
\caption{Training stage-OCR, per-$\bm{\lambda}$ held-out judge scores at each run's final checkpoint, static vs.\ \ourmethod{}. Entries, subscripts, bolding, and the Pareto verdict are as in Tab.~\ref{tab:perlam-ocr}, now over the four judges; the \textsc{HPSv3} column is at two decimals because its scale is an order of magnitude larger.}
\label{tab:perlam-panel-ocr}
\begin{tabular}{@{}llccccl@{}}
\toprule
$\bm{\lambda}$ & Method & \textsc{ImageReward} & \textsc{Aesthetic} & \textsc{HPSv3} & \textsc{UnifiedReward-2} & Pareto \\
\midrule
\multirow{2}{*}{$(1,1,1,1)$} & static & 1.433$_{\pm0.056}$ & 5.493$_{\pm0.096}$ & 12.86$_{\pm0.33}$ & \textbf{3.179}$_{\pm0.071}$ & \multirow{2}{*}{mixed} \\
 & \oursshort{} & \textbf{1.476}$_{\pm0.027}$ & \textbf{5.574}$_{\pm0.063}$ & \textbf{13.16}$_{\pm0.50}$ & 3.154$_{\pm0.030}$ &  \\
\addlinespace[2pt]
\multirow{2}{*}{$(1,1,1,2)$} & static & 1.341$_{\pm0.060}$ & 5.411$_{\pm0.036}$ & 12.06$_{\pm0.71}$ & \textbf{3.146}$_{\pm0.061}$ & \multirow{2}{*}{mixed} \\
 & \oursshort{} & \textbf{1.420}$_{\pm0.096}$ & \textbf{5.542}$_{\pm0.027}$ & \textbf{12.44}$_{\pm0.81}$ & 3.129$_{\pm0.055}$ &  \\
\addlinespace[2pt]
\multirow{2}{*}{$(1,1,2,1)$} & static & 1.438$_{\pm0.042}$ & 5.525$_{\pm0.083}$ & 13.00$_{\pm0.18}$ & 3.159$_{\pm0.008}$ & \multirow{2}{*}{\oursshort{} $\succ$ static} \\
 & \oursshort{} & \textbf{1.499}$_{\pm0.029}$ & \textbf{5.571}$_{\pm0.045}$ & \textbf{13.25}$_{\pm0.61}$ & \textbf{3.190}$_{\pm0.065}$ &  \\
\addlinespace[2pt]
\multirow{2}{*}{$(1,2,1,1)$} & static & 1.434$_{\pm0.066}$ & 5.481$_{\pm0.093}$ & 12.67$_{\pm0.54}$ & 3.129$_{\pm0.071}$ & \multirow{2}{*}{\oursshort{} $\succ$ static} \\
 & \oursshort{} & \textbf{1.517}$_{\pm0.003}$ & \textbf{5.622}$_{\pm0.075}$ & \textbf{13.44}$_{\pm0.39}$ & \textbf{3.181}$_{\pm0.006}$ &  \\
\addlinespace[2pt]
\multirow{2}{*}{$(2,1,1,1)$} & static & 1.461$_{\pm0.032}$ & 5.521$_{\pm0.025}$ & 13.13$_{\pm0.35}$ & 3.158$_{\pm0.028}$ & \multirow{2}{*}{\oursshort{} $\succ$ static} \\
 & \oursshort{} & \textbf{1.499}$_{\pm0.025}$ & \textbf{5.581}$_{\pm0.046}$ & \textbf{13.41}$_{\pm0.40}$ & \textbf{3.226}$_{\pm0.070}$ &  \\
\bottomrule
\end{tabular}
\end{table}

\begin{table}[!htbp]
\centering
\small
\setlength{\tabcolsep}{4pt}
\caption{Training stage-GenEval, per-$\bm{\lambda}$ held-out judge scores at each run's final checkpoint, static vs.\ \ourmethod{}. Entries, subscripts, bolding, and the Pareto verdict are as in Tab.~\ref{tab:perlam-panel-ocr}.}
\label{tab:perlam-panel-geneval}
\begin{tabular}{@{}llccccl@{}}
\toprule
$\bm{\lambda}$ & Method & \textsc{ImageReward} & \textsc{Aesthetic} & \textsc{HPSv3} & \textsc{UnifiedReward-2} & Pareto \\
\midrule
\multirow{2}{*}{$(1,1,1,1)$} & static & \textbf{1.372}$_{\pm0.011}$ & \textbf{6.057}$_{\pm0.020}$ & 8.73$_{\pm0.51}$ & \textbf{2.745}$_{\pm0.025}$ & \multirow{2}{*}{mixed} \\
 & \oursshort{} & 1.358$_{\pm0.046}$ & 6.006$_{\pm0.076}$ & \textbf{9.30}$_{\pm0.12}$ & 2.741$_{\pm0.025}$ &  \\
\addlinespace[2pt]
\multirow{2}{*}{$(1,1,1,2)$} & static & 1.268$_{\pm0.073}$ & 5.882$_{\pm0.128}$ & 8.43$_{\pm1.01}$ & 2.705$_{\pm0.065}$ & \multirow{2}{*}{\oursshort{} $\succ$ static} \\
 & \oursshort{} & \textbf{1.317}$_{\pm0.106}$ & \textbf{5.938}$_{\pm0.129}$ & \textbf{8.61}$_{\pm0.72}$ & \textbf{2.709}$_{\pm0.042}$ &  \\
\addlinespace[2pt]
\multirow{2}{*}{$(1,1,2,1)$} & static & 1.234$_{\pm0.203}$ & 5.913$_{\pm0.228}$ & 8.19$_{\pm1.58}$ & 2.698$_{\pm0.125}$ & \multirow{2}{*}{\oursshort{} $\succ$ static} \\
 & \oursshort{} & \textbf{1.386}$_{\pm0.061}$ & \textbf{6.040}$_{\pm0.090}$ & \textbf{8.70}$_{\pm0.62}$ & \textbf{2.736}$_{\pm0.030}$ &  \\
\addlinespace[2pt]
\multirow{2}{*}{$(1,2,1,1)$} & static & 0.990$_{\pm0.562}$ & 5.699$_{\pm0.380}$ & 7.23$_{\pm2.74}$ & 2.588$_{\pm0.250}$ & \multirow{2}{*}{\oursshort{} $\succ$ static} \\
 & \oursshort{} & \textbf{1.369}$_{\pm0.033}$ & \textbf{6.054}$_{\pm0.037}$ & \textbf{8.46}$_{\pm0.42}$ & \textbf{2.706}$_{\pm0.025}$ &  \\
\addlinespace[2pt]
\multirow{2}{*}{$(2,1,1,1)$} & static & \textbf{1.127}$_{\pm0.190}$ & 5.768$_{\pm0.223}$ & 7.49$_{\pm1.01}$ & 2.618$_{\pm0.096}$ & \multirow{2}{*}{mixed} \\
 & \oursshort{} & 1.014$_{\pm0.540}$ & \textbf{5.800}$_{\pm0.337}$ & \textbf{7.81}$_{\pm2.33}$ & \textbf{2.633}$_{\pm0.182}$ &  \\
\bottomrule
\end{tabular}
\end{table}

\section{Experiment details}
\subsection{Reward models}
\label{app:rewards}
Tab.~\ref{tab:rewards} documents every reward model used anywhere in the paper: its architecture, checkpoint or backbone, and native output range. Two roles recur: \emph{train}, entering the training objective through the weight matrix $\bm{W}$; \emph{held-out}, scoring checkpoints it never influenced. Sec.~\ref{sec:empirical-omega} also estimates gain curves for two of the held-out judges (\textsc{Aesthetic}, \textsc{ImageReward}), to show what the estimator produces on rewards of a different type; they enter no training kernel and no training loss.

The training-stage kernels are built from \textsc{OCR}'s and \textsc{GenEval}'s own gain curves, estimated exactly as in Sec.~\ref{sec:empirical-omega} but on their own prompt sets and their own $T{=}25$ schedule. Fig.~\ref{fig:renyi-omega} also overlays both, linearly resampled $T{=}25$ onto the $T{=}10$ grid of the other five, purely for visual comparison.

\paragraph{Positivity of the training rewards.} The analysis of Sec.~\ref{sec:Dt} assumes $r_i\geq0$, so we state how each training reward is computed. Write $\cos(\bm{u},\bm{v})$ for the cosine similarity of an image embedding $\bm{u}$ and a text embedding $\bm{v}$. \textsc{ClipScore} is $\cos$ between the CLIP ViT-L/14 image and text embeddings. \textsc{HPSv2} is $\cos$ between the $\ell_2$-normalized image and text embeddings of the preference-fine-tuned ViT-H/14, with no logit scale applied. \textsc{PickScore} is the preference head's $\cos$ rescaled by the model's own learned logit scale and a fixed constant, $(e^{s}/26)\cos$ with $e^{s}\approx98.9$, so $\approx3.80\cos$. \textsc{OCR} is $1-\min\{\mathrm{Lev}(\hat s,s),\,|s|\}/|s|$, where $s$ is the target string quoted in the prompt, $\hat s$ is the recognized text after lowercasing and removing whitespace, $\mathrm{Lev}$ is the Levenshtein distance, and the distance is taken as $0$ whenever $s$ occurs as a substring of $\hat s$. \textsc{GenEval} is the fraction of the prompt's object, count, color, and position clauses that the detector confirms.

\textsc{OCR} and \textsc{GenEval} are therefore non-negative by construction, one a distance ratio capped at $1$ and the other a fraction of satisfied clauses. The remaining three are cosine similarities up to a positive scale, so they are sign-indefinite in principle, with ranges $[-1,1]$, $[-1,1]$, and $\approx[-3.80,3.80]$. We clamp the reward at $0$. This clamping has no effect in estimation, because on our prompt sets they are positive throughout and with a wide margin: over every trained checkpoint we evaluate, the smallest \textsc{ClipScore} is $0.26$, the smallest \textsc{HPSv2} is $0.23$, and the smallest \textsc{PickScore} is $0.77$, against a floor of $0$. This reflects a property of CLIP-style encoders rather than a coincidence, since image and text embeddings occupy separate cones and their cosines concentrate in a narrow positive band. We apply a sigmoid to \textsc{ImageReward}, whose raw output is a signed comparison scalar, when estimating its gain curve in Sec.~\ref{sec:empirical-omega}.

\begin{table}[!htbp]
\centering
\small
\caption{Reward models used in the paper. ``Range'' is the reward's output scale; several are unbounded in principle but concentrate in the interval shown in practice.}
\label{tab:rewards}
\begin{tabular}{@{}>{\raggedright\arraybackslash}p{2.5cm}>{\raggedright\arraybackslash}p{6.1cm}>{\raggedright\arraybackslash}p{1.5cm}>{\raggedright\arraybackslash}p{2.4cm}@{}}
\toprule
Reward & Architecture / checkpoint & Range & Role \\
\midrule
\textsc{ClipScore} \citep{hessel2021clipscore} & CLIP ViT-L/14 (\path{openai/clip-vit-large-patch14}) image--text cosine similarity & $\approx[0,1]$ & train \\
\midrule
\textsc{HPSv2} \citep{wu2023hpsv2} & open\_clip ViT-H/14 backbone fine-tuned on human preference pairs (\path{HPS_v2.1_compressed.pt} checkpoint) & $\approx[0,1]$ & train \\
\midrule
\textsc{PickScore} \citep{kirstain2023pick} & \path{yuvalkirstain/PickScore_v1}, a CLIP ViT-H/14 fine-tuned end-to-end on preference pairs, with the \texttt{laion/\allowbreak CLIP-ViT-H-14-\allowbreak laion2B-\allowbreak s32B-\allowbreak b79K} processor & $\approx[0,3.8]$ & train \\
\midrule
\textsc{OCR} & PaddleOCR (English, angle classification off) text extraction& $[0,1]$ & train (training stage-OCR) \\
\midrule
\textsc{GenEval} \citep{ghosh2023geneval} & Mask2Former (Swin-S, COCO) detector + CLIP zero-shot color/counting classifiers & $[0,1]$ & train (training stage-GenEval) \\
\midrule
\textsc{Aesthetic} \citep{schuhmann2022aesthetics} & Linear-probe MLP over frozen CLIP ViT-L/14 features, trained on LAION/AVA aesthetic ratings (\path{sac+logos+ava1-l14-linearMSE} checkpoint) & $\approx[1,10]$ & held-out \\
\midrule
\textsc{ImageReward} \citep{xu2023imagereward} & Official \path{ImageReward-v1.0} checkpoint & unbounded, $z$-scored & held-out \\
\midrule
\textsc{HPSv3} \citep{ma2025hpsv3} & \path{HPSv3RewardInferencer}, a $\sim$7B-parameter vision-language reward model & unbounded, $\approx[-10,16]$ & held-out \\
\midrule
\textsc{UnifiedReward-2} \citep{wang2025unified} & \path{CodeGoat24/UnifiedReward-2.0-qwen3vl-8b}, a Qwen3-VL-8B pointwise judge served through vLLM (greedy decoding, temperature $0$) & $[1,5]$ & held-out \\
\bottomrule
\end{tabular}
\end{table}

\subsection{Training hyperparameters and compute}
\label{app:hparams}

\paragraph{Architecture and optimization.} All runs fine-tune LoRA adapters (rank $32$, scale $64$, Gaussian initialization) inserted into the eight attention-projection modules of the SD3.5-Medium joint-attention transformer block; the transformer backbone itself is frozen. The optimizer is AdamW with learning rate $3\times10^{-4}$, $(\beta_1,\beta_2)=(0.9,0.999)$, $\epsilon=10^{-8}$, weight decay $10^{-4}$, and a decayed learning-rate schedule (\texttt{decay\_type=1}) shared by every stage. The DiffusionNFT coefficient is $\beta=0.1$ throughout, sampling uses $T=25$ steps, and the rollout batch is $24$ images/prompt on each of $8$ GPUs with $6$ gradient-accumulation steps, for an effective batch of $8\times24\times6=1152$. Evaluation images are drawn at seed $42{+}$index, identical across the two methods of a pair. Sinkhorn kernels are built once per $\bm{\lambda}$ before training starts, at $\alpha=2$ for every training reward.

\paragraph{Compute.} All training ran on single-node $8\times$NVIDIA H200 allocations. A warmup-stage run costs $\approx68$ GPU-hours, a training stage-OCR run $\approx37$, and a training stage-GenEval run $\approx59$. The static and \ourmethod{} methods of a pair cost the same: the projection is negligible next to sampling and reward scoring.

\paragraph{Gain curve construction cost.} Each curve uses $N{=}8$ $\pi^+$ samples per prompt, and $K{=}16$ $\pi^{\mathrm{old}}$ rollouts from every intermediate $\vx_t$; every reward of a stage is scored on the same rollouts, so the cost is per curve rather than per reward, and App.~\ref{sec:limitations} reports what reproducing one curve from scratch costs. Ignoring the \textsc{GPT Image 1.5} API time, the $P{\cdot}N{\cdot}T{\cdot}K$ rollouts of one $T{=}10$ curve take $\approx15$ GPU-hours, under half the $\approx37$ of a single training stage-OCR run, and a training-stage curve at $T{=}25$ takes about $6\times$ those $15$ GPU-hours. The $P{\cdot}N$ queries to \textsc{GPT Image 1.5} cost around \$35 in API fees. Crucially, this is a one-time cost per stage: each curve is estimated once at the untrained policy and then reused, unmodified, so its amortized cost per training run is well below the raw totals.

\subsection{MMRBv2 judge: protocol, position debiasing, and prompt}
\label{app:MMRBv2-details}

\paragraph{Protocol.} The pairwise judgments of Sec.~\ref{sec:MMRBv2} use the image-generation judge from Meta's MMRBv2~\citep{hu2026multimodal} evaluation suite, served by \texttt{gemini-3.5-flash} at temperature $0$. For each of the five matched $\bm{\lambda}$ budgets, we pair the static method and the \ourmethod{} method image-by-image on the fixed $1{,}000$-prompt subset of the held-out OCR dataset of Tab.~\ref{tab:panels}. Each API call sends the rubric prompt below, the original text prompt, and the two images labeled \texttt{[RESPONSE A:]} and \texttt{[RESPONSE B:]}. The judge returns a structured JSON verdict: step-by-step reasoning over seven criteria (faithfulness to prompt, text rendering, input faithfulness, image consistency, text--image alignment, text quality, overall quality), a scalar score $s\in\{1,\dots,6\}$ where $s\geq 4$ favors the image shown as A, a discrete \texttt{better\_response} $\in\{$A, B$\}$, and a self-reported confidence. We take the winner from \texttt{better\_response}; calls that still fail after $8$ retries drop the pair.

\paragraph{Position debiasing.} LLM judges exhibit position bias: the response presented first is systematically favored. Following MMRBv2's forward+reverse protocol, every pair is therefore judged \emph{twice}: once with the static image shown as A and \ourmethod{} as B (forward), and once with the images swapped (reverse). Each ordering credits one win to the method the judge picks, after mapping the shown-side label back to the method's identity. The win rate over $n$ prompts is the fraction of the $2n$ orderings won,
\begin{equation*}
\mathrm{win}(\text{\oursshort{}}) \;=\; \frac{1}{2n}\sum_{j=1}^{n} \#\{\text{orderings of pair } j \text{ won by \oursshort{}}\}.
\end{equation*}

\paragraph{Faithfulness and aesthetic sub-scores.} The judge is not asked for numeric per-criterion scores; the \emph{Faithfulness} and \emph{Aesthetics} rows of Tab.~\ref{tab:MMRBv2-ocr} are extracted from its per-criterion reasoning fields \texttt{faithfulness\_to\_prompt} and \texttt{overall\_quality}, respectively. For each ordering, the criterion's reasoning text is mapped to a winner in $\{$A, B, undecided$\}$ by matching declarative phrasings like ``Response A is better\ldots'' and ``the edge goes to B\ldots'', with a text-only LLM tie-break when the patterns are ambiguous. The sub-score is the fraction of \emph{decided} orderings won by \ourmethod{}; orderings whose criterion text is undecided or marked not-applicable are dropped from that criterion's denominator.

\paragraph{Judge prompt.} The rubric prompt, verbatim from MMRBv2's image-generation judge:

{\scriptsize
\begin{verbatim}
You are an expert in multimodal quality analysis and generative AI evaluation. Your
role is to act as an objective judge for comparing two AI-generated responses to the
same prompt. You will evaluate which response is better based on a comprehensive
rubric.

**Important Guidelines:**
- Be completely impartial and avoid any position biases
- Ensure that the order in which the responses were presented does not influence
  your decision
- Do not allow the length of the responses to influence your evaluation
- Do not favor certain model names or types
- Be as objective as possible in your assessment
- Consider factors such as helpfulness, relevance, accuracy, depth, creativity, and
  level of detail

**Understanding the Content Structure:**
- **[ORIGINAL PROMPT TO MODEL:]**: This is the instruction given to both AI models
- **[INPUT IMAGE FROM PROMPT:]**: This is the source image provided to both models
  (if any)
- **[RESPONSE A:]**: The first model's generated response (text and/or images)
- **[RESPONSE B:]**: The second model's generated response (text and/or images)

Your evaluation must be based on a fine-grained rubric that covers the following
criteria. For each criterion, you must provide detailed step-by-step reasoning
comparing both responses. You will use a 1-6 scoring scale.

**Evaluation Criteria:**
1. **faithfulness_to_prompt:** Which response better adheres to the composition,
   objects, attributes, and spatial relationships described in the text prompt?

2. **text_rendering:** If either response contains rendered text, which one has
   better text quality (spelling, legibility, integration)? If no text is rendered,
   state "Not Applicable."

3. **input_faithfulness:** If an input image is provided, which response better
   respects and incorporates the key elements and style of that source image? If no
   input image is provided, state "Not Applicable."

4. **image_consistency:** If multiple images are generated, which response has
   better visual consistency between images (character appearance, scene details)?
   If no multiple images are provided, state "Not Applicable."

5. **text_image_alignment:** Which response has better alignment between text
   descriptions and visual content?

6. **text_quality:** If text was generated, which response has better linguistic
   quality (correctness, coherence, grammar, tone)?

7. **overall_quality:** Which response has better general technical and aesthetic
   quality, realism, coherence, and fewer visual artifacts or distortions?

**Scoring Rubric:**
- Score 6 (A is significantly better): Response A is significantly superior across
  most criteria
- Score 5 (A is marginally better): Response A is noticeably better across several
  criteria
- Score 4 (Unsure or A is negligibly better): Response A is slightly better or
  roughly equivalent
- Score 3 (Unsure or B is negligibly better): Response B is slightly better or
  roughly equivalent
- Score 2 (B is marginally better): Response B is noticeably better across several
  criteria
- Score 1 (B is significantly better): Response B is significantly superior across
  most criteria

**Confidence Assessment:**
After your evaluation, assess your confidence in this judgment on a scale of 0.0
to 1.0:

**CRITICAL**: Be EXTREMELY conservative with confidence scores. Most comparisons
should be in the 0.2-0.5 range.

- **Very High Confidence (0.8-1.0)**: ONLY for absolutely obvious cases where one
  response is dramatically better across ALL criteria with zero ambiguity. Use this
  extremely rarely (less than 10% of cases).
- **High Confidence (0.6-0.7)**: Clear differences but some uncertainty remains.
  Use sparingly (less than 20% of cases).
- **Medium Confidence (0.4-0.5)**: Noticeable differences but significant
  uncertainty. This should be your DEFAULT range.
- **Low Confidence (0.2-0.3)**: Very close comparison, difficult to distinguish.
  Responses are roughly equivalent or have conflicting strengths.
- **Very Low Confidence (0.0-0.1)**: Essentially indistinguishable responses or
  major conflicting strengths.

**IMPORTANT GUIDELINES**:
- DEFAULT to 0.3-0.5 range for most comparisons
- Only use 0.6+ when you are absolutely certain
- Consider: Could reasonable people disagree on this comparison?
- Consider: Are there any strengths in the "worse" response?
- Consider: How obvious would this be to a human evaluator?
- Remember: Quality assessment is inherently subjective

After your reasoning, you will provide a final numerical score, indicate which
response is better, and assess your confidence. You must always output your
response in the following structured JSON format:

{
    "reasoning": {
        "faithfulness_to_prompt": "YOUR REASONING HERE",
        "text_rendering": "YOUR REASONING HERE",
        "input_faithfulness": "YOUR REASONING HERE",
        "image_consistency": "YOUR REASONING HERE",
        "text_image_alignment": "YOUR REASONING HERE",
        "text_quality": "YOUR REASONING HERE",
        "overall_quality": "YOUR REASONING HERE",
        "comparison_summary": "YOUR OVERALL COMPARISON SUMMARY HERE"
    },
    "score": <int 1-6>,
    "better_response": "A" or "B",
    "confidence": <float 0.0-1.0>,
    "confidence_rationale": "YOUR CONFIDENCE ASSESSMENT REASONING HERE"
}
\end{verbatim}
}
\section{Limitations}
\label{sec:limitations}

\paragraph{Approximate positive-policy sampling.}
In practice, we use \textsc{GPT Image 1.5}, an external expert generator, as a surrogate for $\pi^+$ for every reward. Its distribution need not equal any reward-specific $\pi_{i}^+$; without an importance correction, the resulting gain curves are therefore biased estimates of the theoretical $\Delta D_{i,t}$. The empirical curves and ablations show that this proxy is useful, and swapping in an independent surrogate generator leaves the curve essentially unchanged (App.~\ref{app:surrogate-sensitivity}), but neither removes this bias. Future work could construct reward-specific positive policies, estimate the required density correction, or update the curves on-policy as training progresses; App.~\ref{app:curve-drift} re-estimates the curve at a trained checkpoint.

\paragraph{Reproducibility and API cost.} Depending on \textsc{GPT Image 1.5} also means depending on a proprietary, versioned, and priced API. Even though it is a one-time, per-stage expense amortized over many downstream training runs, an independent group reproducing a single curve from scratch pays around \$35 in API costs. Releasing the estimated gain-curve weight files themselves alongside the code would let downstream use of \ourmethod{} skip re-querying \textsc{GPT Image 1.5} entirely.

\section{Qualitative examples}
\label{app:qualitative-gallery}

The two galleries below (Figs.~\ref{fig:appendix-gallery-ocr} and~\ref{fig:appendix-gallery-geneval}) show paired generations from the static baseline and \ourmethod{} for the \textsc{OCR} and \textsc{GenEval} settings. We state explicitly how these examples were selected so that they are not mistaken for random or typical samples. Every one of the $20$ pairs shown ($2$ per $\bm{\lambda}$ budget, with $5$ budgets per setting) is a decisive, judge-verified win for \ourmethod{}, drawn from the pool of ``high-confidence'' pairs identified by MMRBv2 pairwise-judge runs following the protocol in Sec.~\ref{sec:MMRBv2} and App.~\ref{app:MMRBv2-details}, applied separately to each setting's own held-out prompts. For the \textsc{GenEval} setting, this selection uses an analogous MMRBv2 run whose aggregate results are not reported in the main text, since Sec.~\ref{sec:MMRBv2} reports only the \textsc{OCR} setting. We call a win \emph{decisive} when \ourmethod{} wins under both presentation orderings with extreme scores and neither the faithfulness nor the aesthetic sub-criterion contradicts the overall verdict. Within each budget's qualifying pool, the displayed pair was then selected by a human reviewer rather than sampled uniformly at random. These galleries should therefore be interpreted as illustrating what \ourmethod{}'s most decisive wins look like, rather than its average-case behavior. Representative aggregate results are reported in Tab.~\ref{tab:MMRBv2-ocr} for the \textsc{OCR} setting and in Tabs.~\ref{tab:trainreward} and~\ref{tab:panels}, which pool all three seeds, for the \textsc{GenEval} setting.

\begin{figure*}[p]
\centering
\renewcommand{\arraystretch}{1.0}
\setlength{\tabcolsep}{2pt}
{\scriptsize
\begin{tabular}{c c c c c}
\textbf{static} & \textbf{\ourmethod{}} & & \textbf{static} & \textbf{\ourmethod{}} \\
\includegraphics[width=0.185\linewidth]{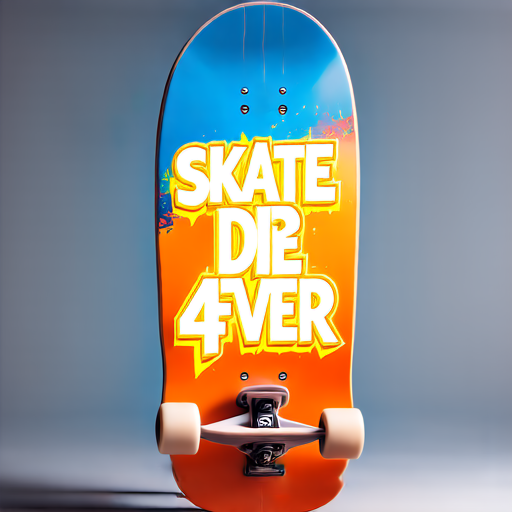} & \includegraphics[width=0.185\linewidth]{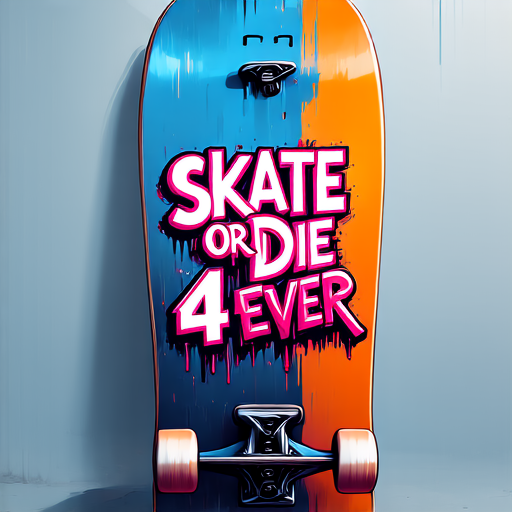} & & \includegraphics[width=0.185\linewidth]{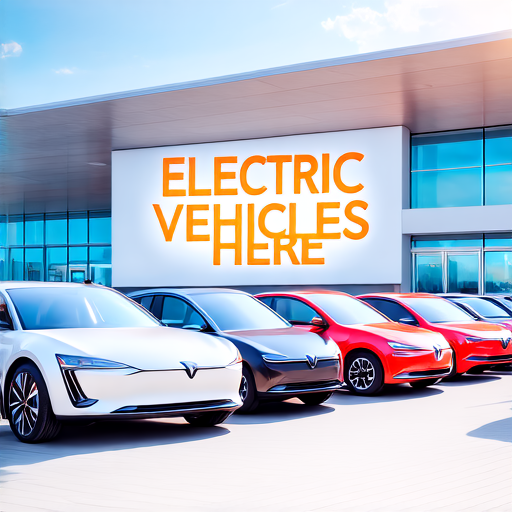} & \includegraphics[width=0.185\linewidth]{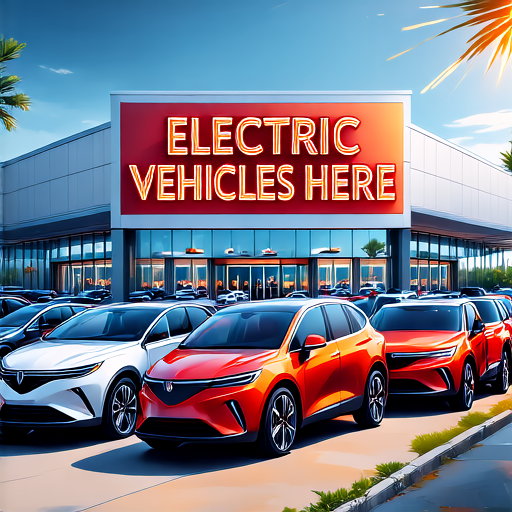} \\
\multicolumn{2}{p{0.42\linewidth}}{\centering \tiny \emph{A vibrant skateboard deck featuring the bold graphic "SKATE OR DIE 4EVER" in dynamic, graffiti-style lettering, set against a gradient background that shifts from deep blue at the nose to bright orange at the tail, with subtle scratch marks and wear to give it a well-used, authentic look.}\\[0pt]\scriptsize $\bm{\lambda}=(1,1,1,1)$} & & \multicolumn{2}{p{0.42\linewidth}}{\centering \tiny \emph{A vibrant car dealership banner reads "Electric Vehicles Here" in bold letters, showcasing a variety of sleek electric cars parked neatly in front of a modern showroom with large glass windows reflecting the sunny sky.}\\[0pt]\scriptsize $\bm{\lambda}=(1,1,1,1)$} \\[1pt]
\includegraphics[width=0.185\linewidth]{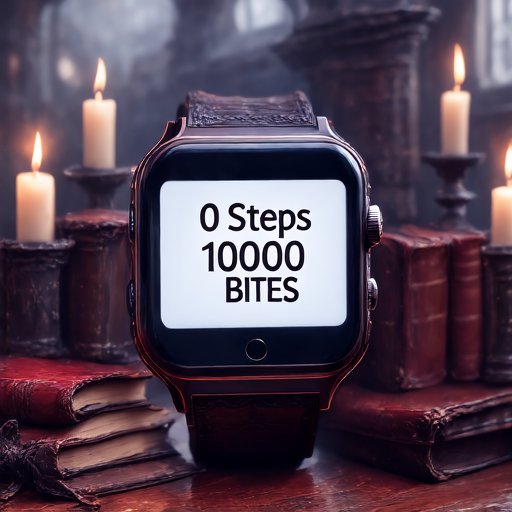} & \includegraphics[width=0.185\linewidth]{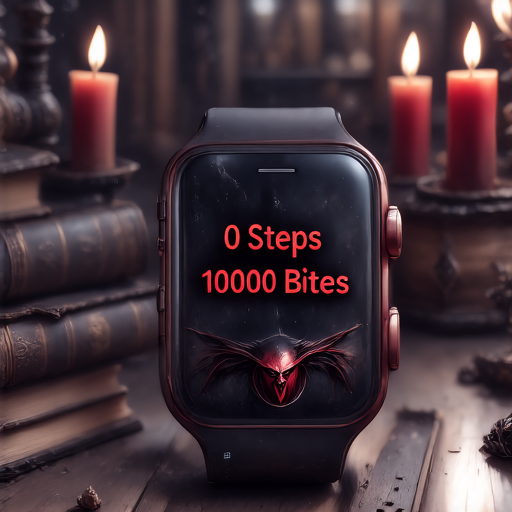} & & \includegraphics[width=0.185\linewidth]{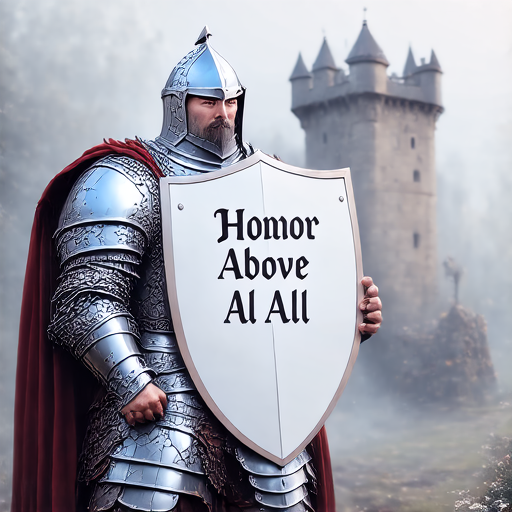} & \includegraphics[width=0.185\linewidth]{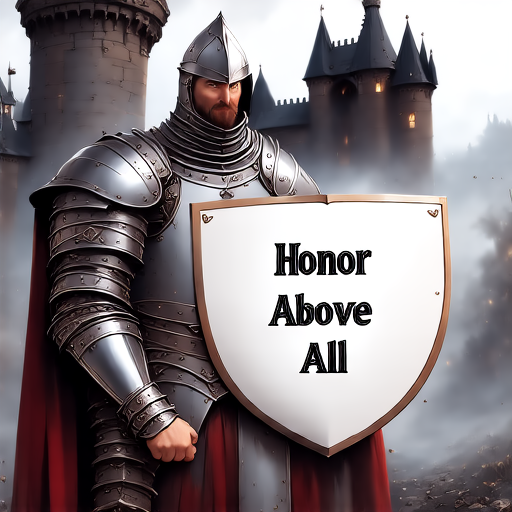} \\
\multicolumn{2}{p{0.42\linewidth}}{\centering \tiny \emph{A sleek, modern vampire fitness tracker displays "0 Steps 10000 Bites" on its screen, resting on a dark, gothic desk surrounded by antique books and candles. The scene is bathed in a dim, eerie light, highlighting the tracker's eerie glow.}\\[0pt]\scriptsize $\bm{\lambda}=(1,1,1,2)$} & & \multicolumn{2}{p{0.42\linewidth}}{\centering \tiny \emph{A medieval knight stands proudly, holding a shield emblazoned with the motto "Honor Above All" in intricate Old English font, set against a backdrop of a misty, ancient castle.}\\[0pt]\scriptsize $\bm{\lambda}=(1,1,1,2)$} \\[1pt]
\includegraphics[width=0.185\linewidth]{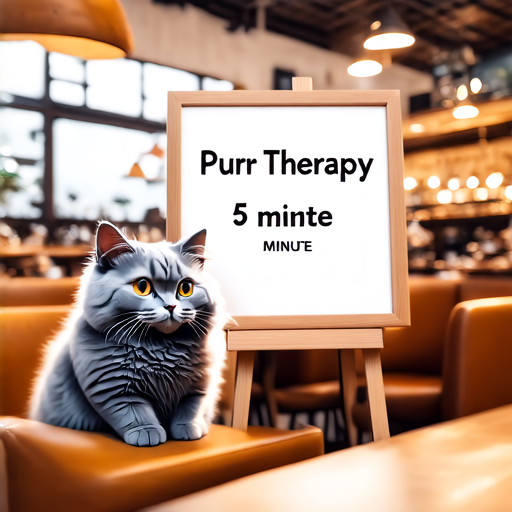} & \includegraphics[width=0.185\linewidth]{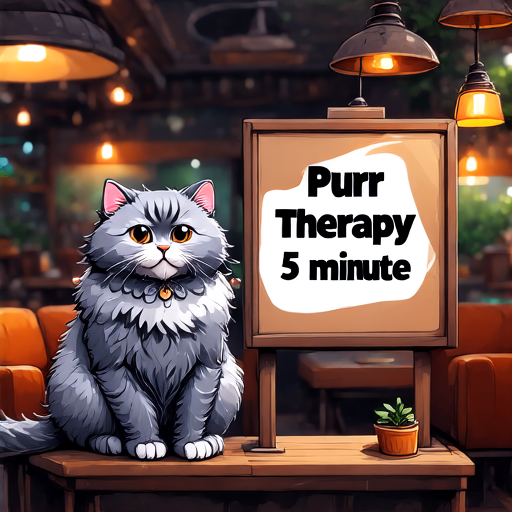} & & \includegraphics[width=0.185\linewidth]{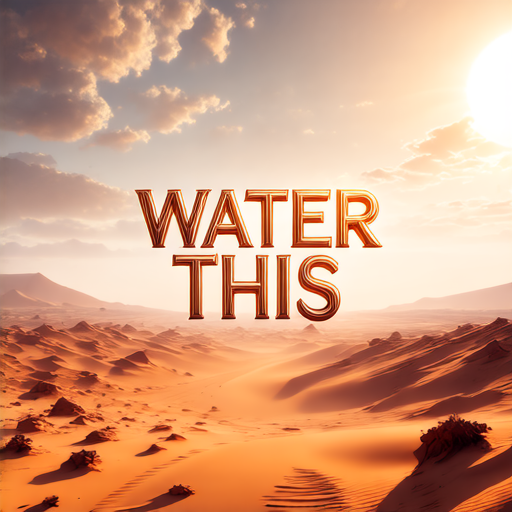} & \includegraphics[width=0.185\linewidth]{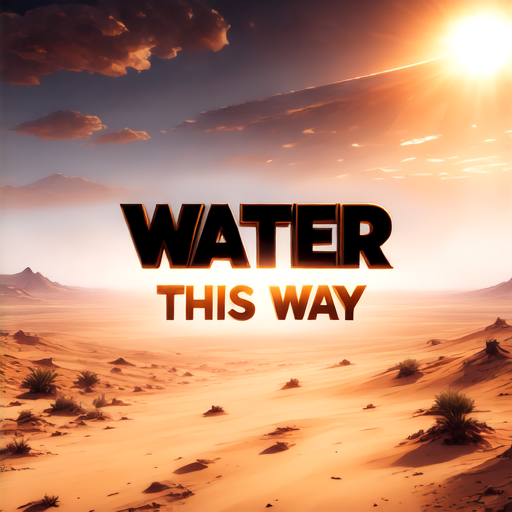} \\
\multicolumn{2}{p{0.42\linewidth}}{\centering \tiny \emph{In a cozy cat cafe, a menu board displays "Purr Therapy 5 minute" among other offerings. A fluffy gray cat sits nearby, looking relaxed and ready to offer its soothing presence to patrons. The scene is warm and inviting, with soft lighting and comfortable seating.}\\[0pt]\scriptsize $\bm{\lambda}=(1,1,2,1)$} & & \multicolumn{2}{p{0.42\linewidth}}{\centering \tiny \emph{A vast desert landscape under a scorching sun, where a mirage forms the shimmering letters "Water This Way" on the distant horizon, creating an illusion of hope in an otherwise barren and arid environment.}\\[0pt]\scriptsize $\bm{\lambda}=(1,1,2,1)$} \\[1pt]
\includegraphics[width=0.185\linewidth]{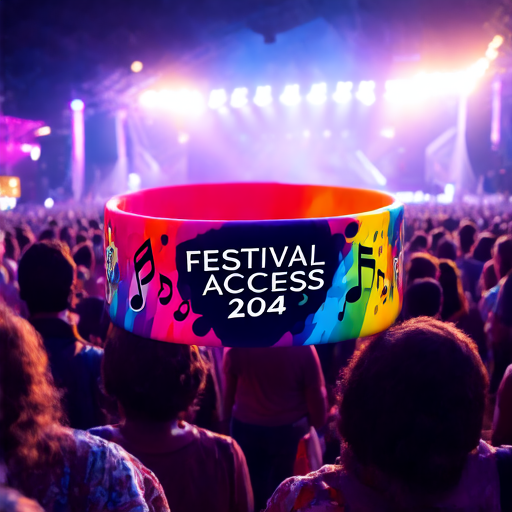} & \includegraphics[width=0.185\linewidth]{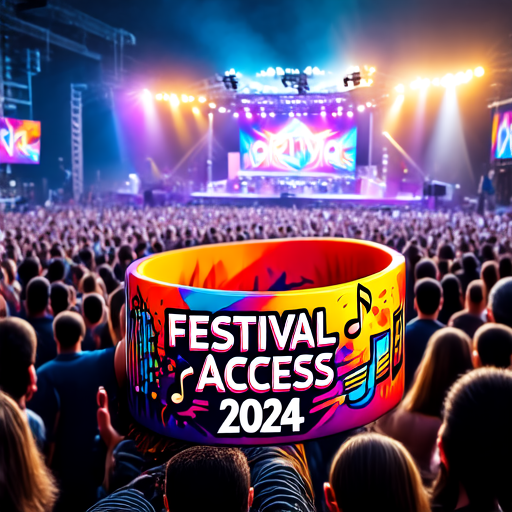} & & \includegraphics[width=0.185\linewidth]{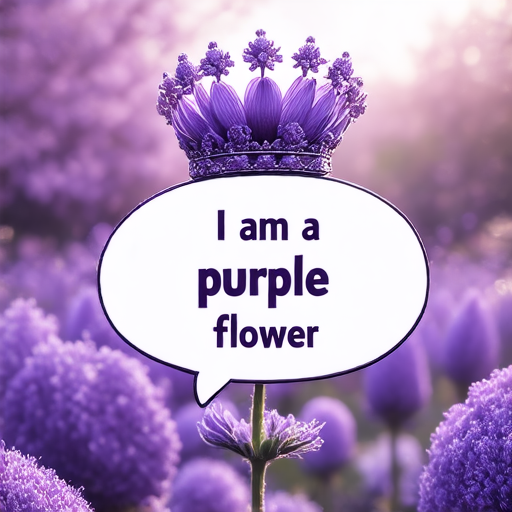} & \includegraphics[width=0.185\linewidth]{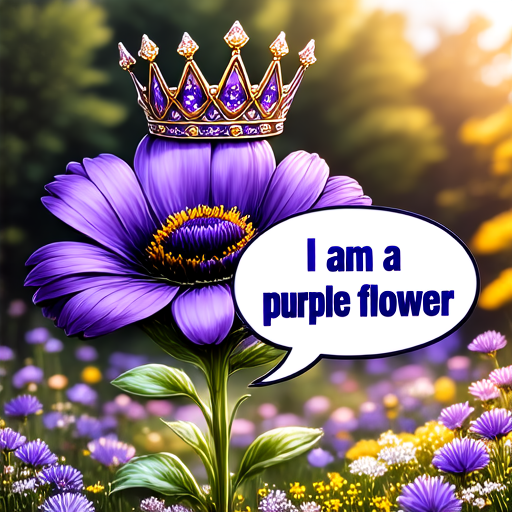} \\
\multicolumn{2}{p{0.42\linewidth}}{\centering \tiny \emph{A vibrant music festival wristband with "Festival Access 2024" prominently displayed, featuring a colorful design with musical notes and festival logos, set against a backdrop of a bustling crowd and stage lights.}\\[0pt]\scriptsize $\bm{\lambda}=(1,2,1,1)$} & & \multicolumn{2}{p{0.42\linewidth}}{\centering \tiny \emph{A purple flower with a delicate crown on its head, featuring a speech bubble that says "I am a purple flower", set against a serene garden backdrop.}\\[0pt]\scriptsize $\bm{\lambda}=(1,2,1,1)$} \\[1pt]
\includegraphics[width=0.185\linewidth]{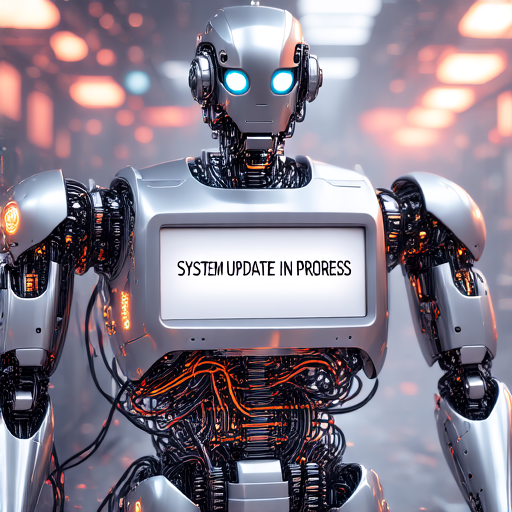} & \includegraphics[width=0.185\linewidth]{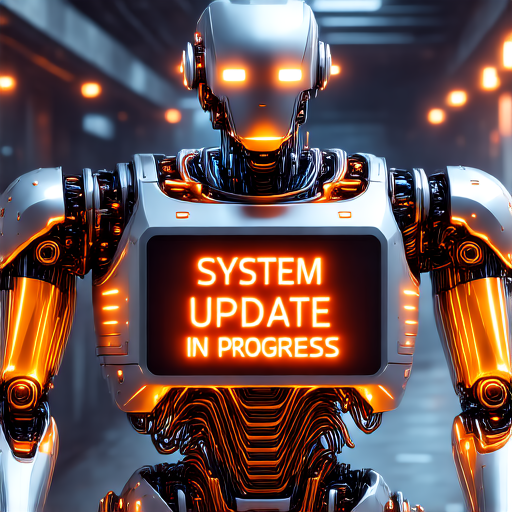} & & \includegraphics[width=0.185\linewidth]{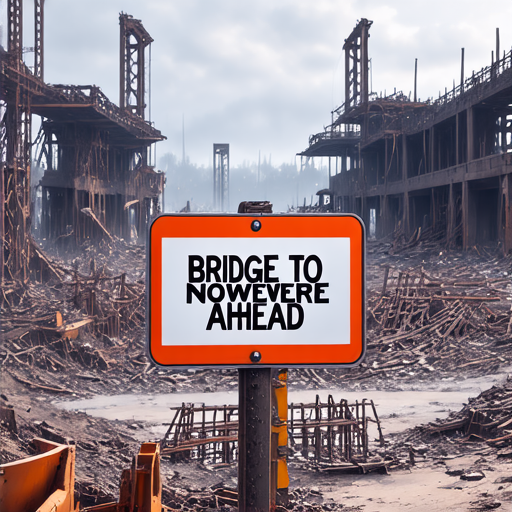} & \includegraphics[width=0.185\linewidth]{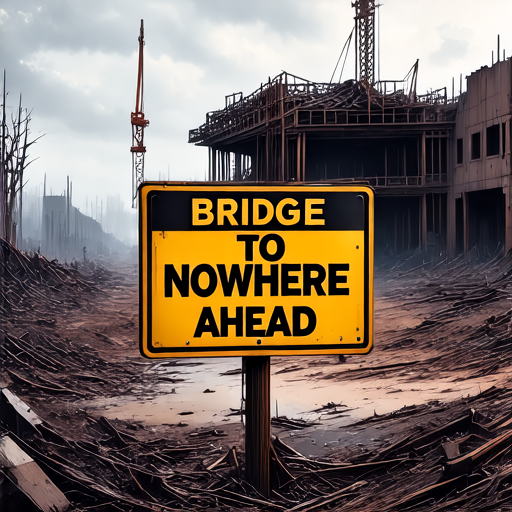} \\
\multicolumn{2}{p{0.42\linewidth}}{\centering \tiny \emph{A close-up of a robot's metallic chest, with a digital display prominently showing "System Update In Progress", surrounded by blinking lights and subtle wiring, set against a dimly lit, futuristic background.}\\[0pt]\scriptsize $\bm{\lambda}=(2,1,1,1)$} & & \multicolumn{2}{p{0.42\linewidth}}{\centering \tiny \emph{A realistic construction site with a warning sign that reads "Bridge to Nowhere Ahead", surrounded by a desolate landscape and half-built structures, emphasizing the abandoned and eerie atmosphere.}\\[0pt]\scriptsize $\bm{\lambda}=(2,1,1,1)$} \\[1pt]
\end{tabular}
}
\caption{Qualitative examples from the \textsc{OCR} setting (training stage-OCR family, held-out $1000$-prompt OCR dataset of Tab.~\ref{tab:panels}): two prompts per $\bm{\lambda}$ budget, static baseline (left of each pair) vs.\ \ourmethod{} (right of each pair). All $10$ pairs are a curated selection of decisive, judge-verified \ourmethod{} wins, not a random or representative sample; see App.~\ref{app:qualitative-gallery} for the selection methodology and Tab.~\ref{tab:MMRBv2-ocr} for this setting's aggregate win rates.}
\label{fig:appendix-gallery-ocr}
\end{figure*}

\begin{figure*}[p]
\centering
\renewcommand{\arraystretch}{1.0}
\setlength{\tabcolsep}{2pt}
{\scriptsize
\begin{tabular}{c c c c c}
\textbf{static} & \textbf{\ourmethod{}} & & \textbf{static} & \textbf{\ourmethod{}} \\
\includegraphics[width=0.185\linewidth]{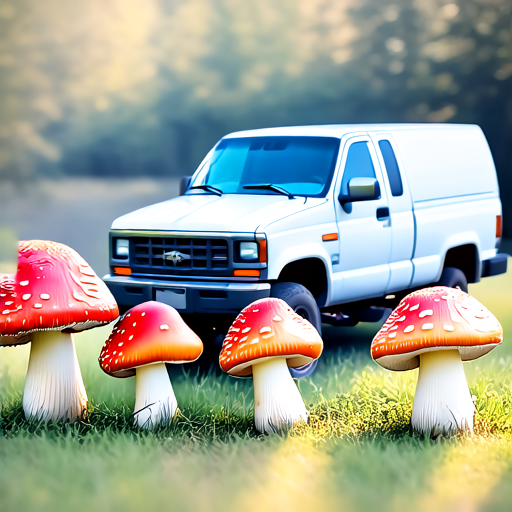} & \includegraphics[width=0.185\linewidth]{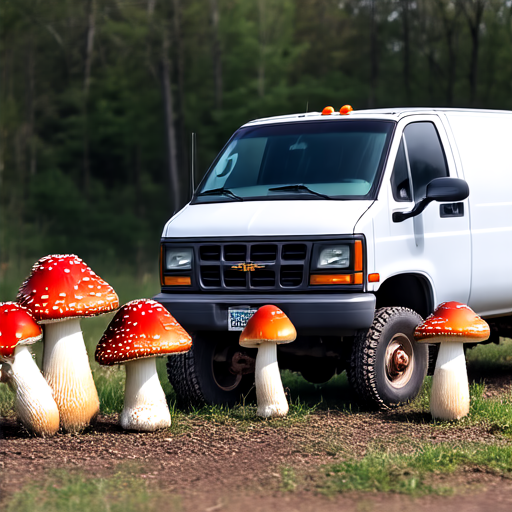} & & \includegraphics[width=0.185\linewidth]{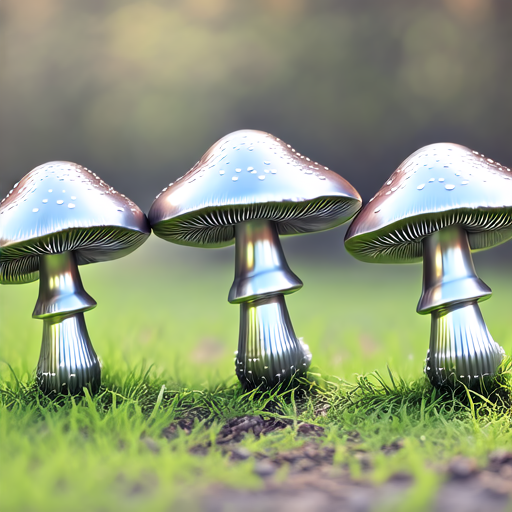} & \includegraphics[width=0.185\linewidth]{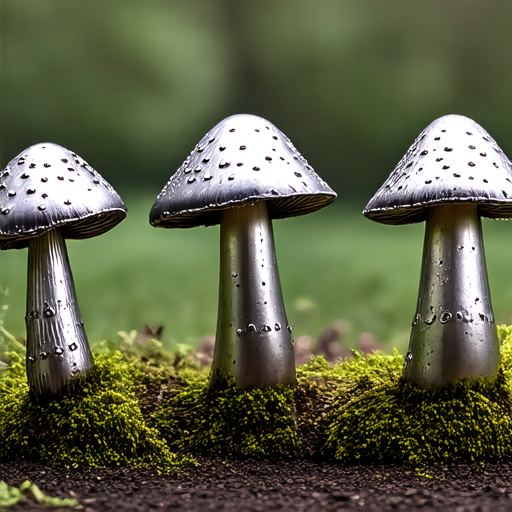} \\
\multicolumn{2}{p{0.42\linewidth}}{\centering \tiny \emph{five mushrooms and a white truck}\\[0pt]\scriptsize $\bm{\lambda}=(1,1,1,1)$} & & \multicolumn{2}{p{0.42\linewidth}}{\centering \tiny \emph{three metal mushrooms}\\[0pt]\scriptsize $\bm{\lambda}=(1,1,1,1)$} \\[1pt]
\includegraphics[width=0.185\linewidth]{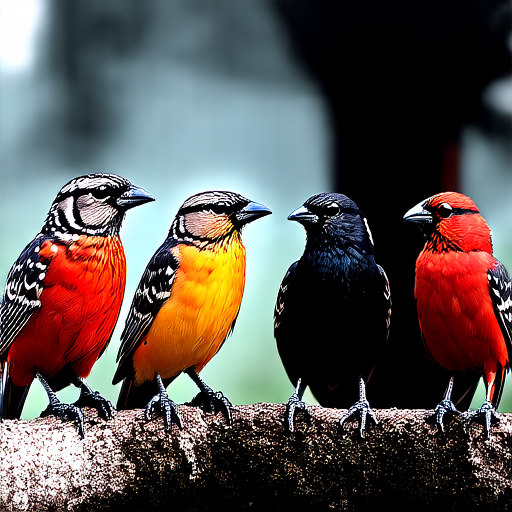} & \includegraphics[width=0.185\linewidth]{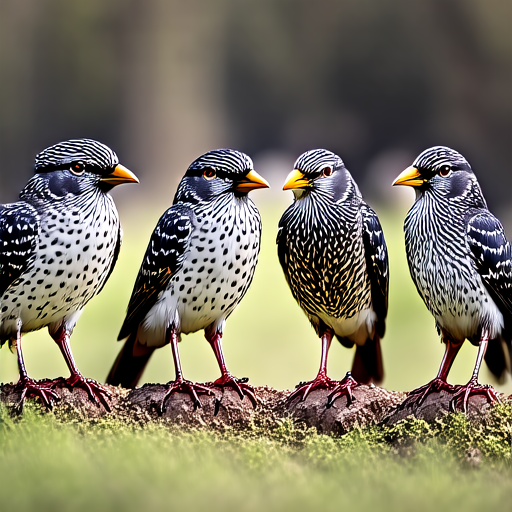} & & \includegraphics[width=0.185\linewidth]{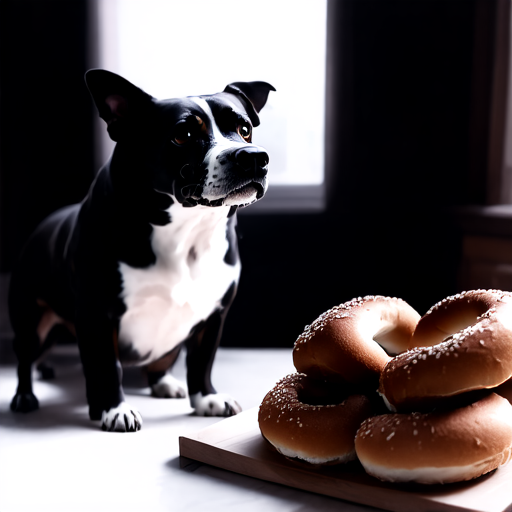} & \includegraphics[width=0.185\linewidth]{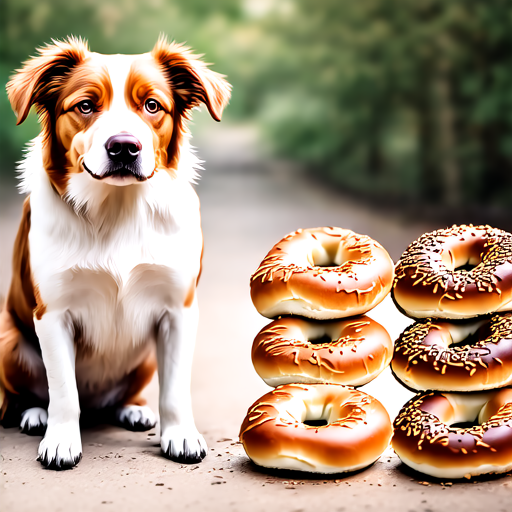} \\
\multicolumn{2}{p{0.42\linewidth}}{\centering \tiny \emph{four spotted birds}\\[0pt]\scriptsize $\bm{\lambda}=(1,1,1,2)$} & & \multicolumn{2}{p{0.42\linewidth}}{\centering \tiny \emph{a dog and six bagels}\\[0pt]\scriptsize $\bm{\lambda}=(1,1,1,2)$} \\[1pt]
\includegraphics[width=0.185\linewidth]{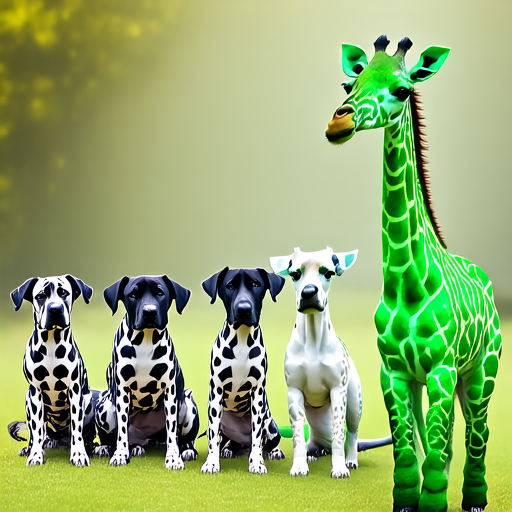} & \includegraphics[width=0.185\linewidth]{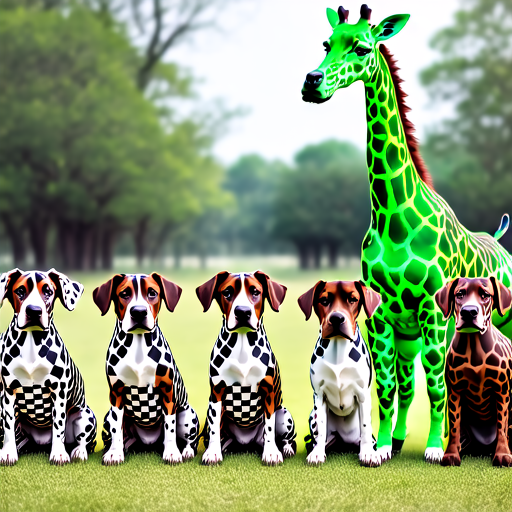} & & \includegraphics[width=0.185\linewidth]{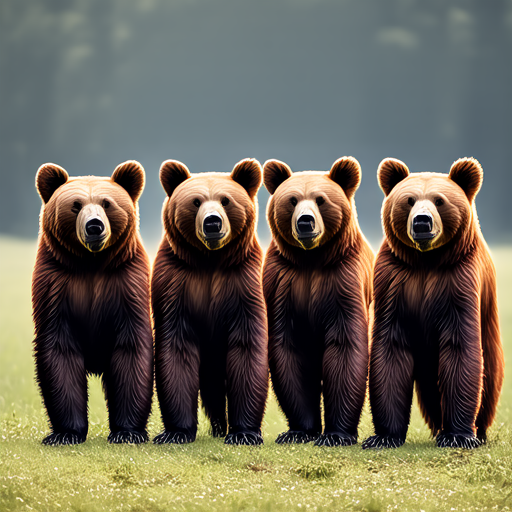} & \includegraphics[width=0.185\linewidth]{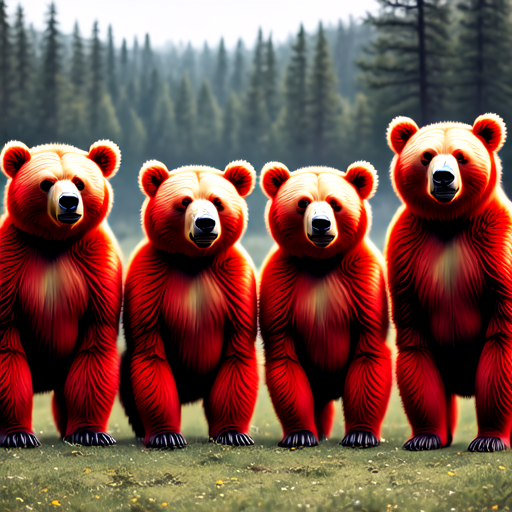} \\
\multicolumn{2}{p{0.42\linewidth}}{\centering \tiny \emph{five checkered dogs and a green giraffe}\\[0pt]\scriptsize $\bm{\lambda}=(1,1,2,1)$} & & \multicolumn{2}{p{0.42\linewidth}}{\centering \tiny \emph{four red bears}\\[0pt]\scriptsize $\bm{\lambda}=(1,1,2,1)$} \\[1pt]
\includegraphics[width=0.185\linewidth]{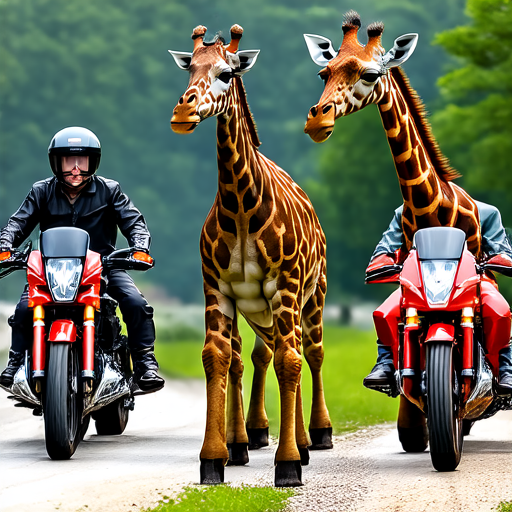} & \includegraphics[width=0.185\linewidth]{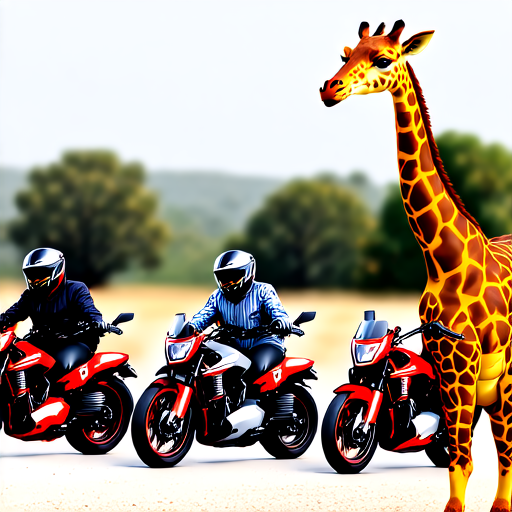} & & \includegraphics[width=0.185\linewidth]{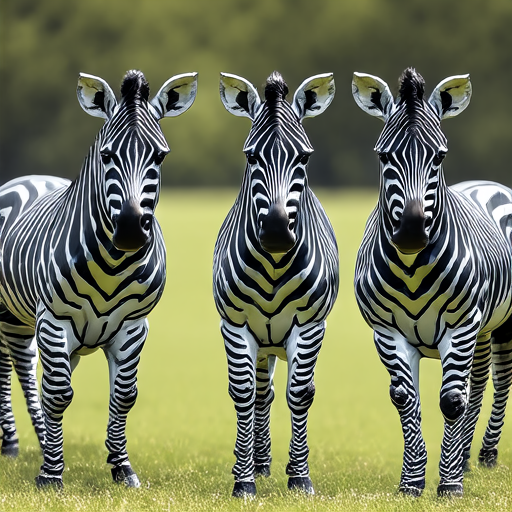} & \includegraphics[width=0.185\linewidth]{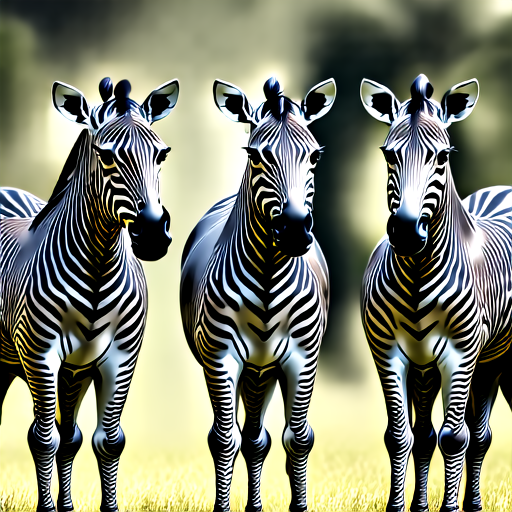} \\
\multicolumn{2}{p{0.42\linewidth}}{\centering \tiny \emph{three motorcycles and a brown giraffe}\\[0pt]\scriptsize $\bm{\lambda}=(1,2,1,1)$} & & \multicolumn{2}{p{0.42\linewidth}}{\centering \tiny \emph{three metal zebras}\\[0pt]\scriptsize $\bm{\lambda}=(1,2,1,1)$} \\[1pt]
\includegraphics[width=0.185\linewidth]{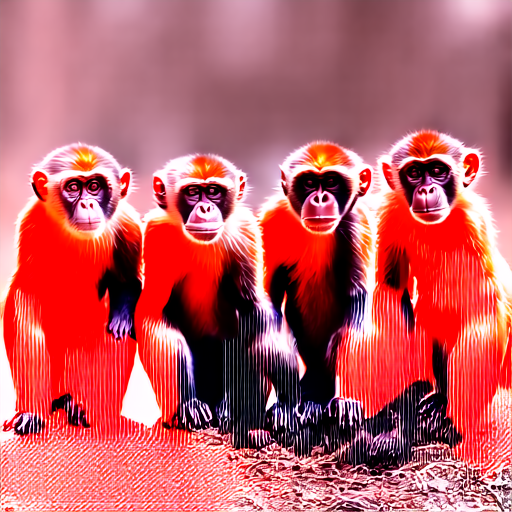} & \includegraphics[width=0.185\linewidth]{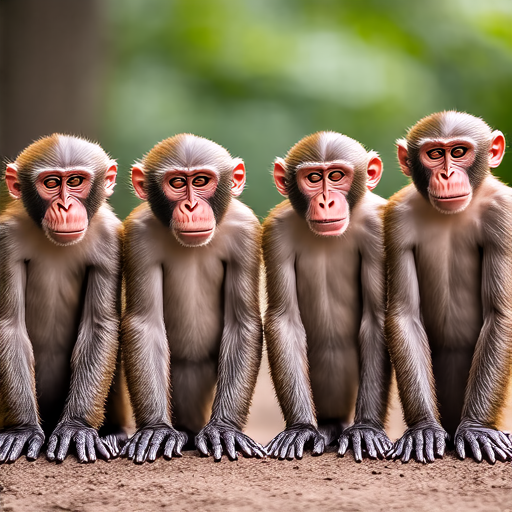} & & \includegraphics[width=0.185\linewidth]{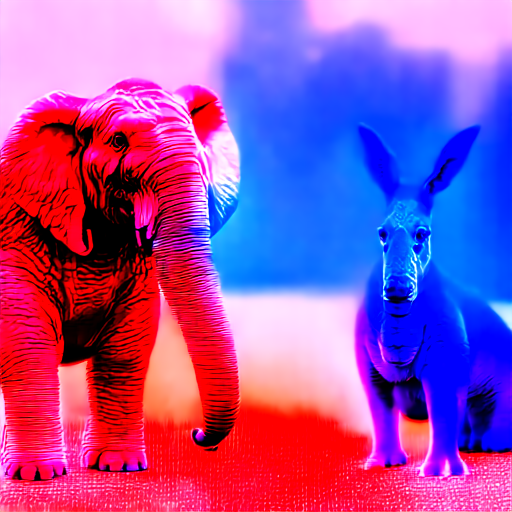} & \includegraphics[width=0.185\linewidth]{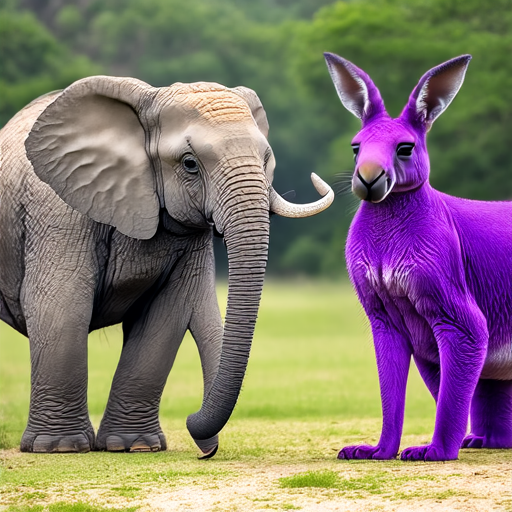} \\
\multicolumn{2}{p{0.42\linewidth}}{\centering \tiny \emph{four brown monkeys}\\[0pt]\scriptsize $\bm{\lambda}=(2,1,1,1)$} & & \multicolumn{2}{p{0.42\linewidth}}{\centering \tiny \emph{a elephant and a purple kangaroo}\\[0pt]\scriptsize $\bm{\lambda}=(2,1,1,1)$} \\[1pt]
\end{tabular}
}
\caption{Qualitative examples from the \textsc{GenEval} setting (training stage-GenEval family, held-out \textsc{GenEval} prompts): two prompts per $\bm{\lambda}$ budget, static baseline (left of each pair) vs.\ \ourmethod{} (right of each pair). All $10$ pairs are a curated selection of decisive, judge-verified \ourmethod{} wins, not a random or representative sample; see App.~\ref{app:qualitative-gallery} for the selection methodology and Tabs.~\ref{tab:trainreward} and~\ref{tab:panels} for this setting's aggregate results; Sec.~\ref{sec:MMRBv2} reports the pairwise judge for the \textsc{OCR} setting only.}
\label{fig:appendix-gallery-geneval}
\end{figure*}

\end{document}